\documentclass{article}

\PassOptionsToPackage{numbers, compress}{natbib}

\usepackage[main, final]{neurips_2026}

\usepackage[utf8]{inputenc} %
\usepackage[T1]{fontenc}    %
\usepackage{hyperref}       %
\usepackage{url}            %
\usepackage{booktabs}       %
\usepackage{amsfonts}       %
\usepackage{nicefrac}       %
\usepackage{microtype}      %
\usepackage{xcolor}         %

\usepackage{etoolbox}
\usepackage{minitoc}
\usepackage{multirow}
\usepackage{tabularx}

\usepackage{graphicx}
\usepackage{subcaption}
\usepackage{pifont}%

\usepackage{amsmath,amsfonts}
\usepackage{amssymb}
\usepackage{mathtools}
\usepackage{amsthm}

\usepackage{array}  

\usepackage{algorithm}
\usepackage{algorithmic}

\usepackage{enumitem}

\usepackage{amsmath}
\usepackage{wrapfig}
\usepackage{siunitx}

\newif\ifshowchanges
\showchangestrue

\theoremstyle{plain}

\newtheorem{thm}{Theorem}[section]
\newtheorem{lem}{Lemma}[section]

\newtheorem{prop}{Proposition}[section]
\newtheorem{asmp}{Assumption}[section]

\newtheorem{remark}{Remark}[section]

\title{Conformal Prediction with Paraphrase-Aware Scoring for LLM Uncertainty Quantification}

\author{%
  Jiayi Xin\thanks{Corresponding authors: Jiayi Xin
    (\texttt{jiayixin@engineering.upenn.edu}), Xiang Li (\texttt{lx10077@upenn.edu}), Weijie J. Su
    (\texttt{suw@wharton.upenn.edu}), and Qi Long
    (\texttt{qlong@upenn.edu}).} \\
  University of Pennsylvania
  \And
  Evan Qiang \\
  University of Pennsylvania
  \And
  Zihan Zhu \\
  University of Pennsylvania
  \AND
  Xiang Li$^{*}$ \\
  University of Pennsylvania
  \And
  Weijie J. Su$^{*}$  \\
  University of Pennsylvania
  \And
  Qi Long$^{*}$  \\
  University of Pennsylvania
}

\begin{document}

\maketitle

\doparttoc

\begin{abstract}

Uncertainty quantification (UQ) for large language models (LLMs) aims to provide reliable measures of predictive confidence, yet current methods are often unstable under meaning-preserving perturbations. Semantically equivalent paraphrases can induce substantial variability in predictive confidence, even for methods with formal guarantees, such as conformal prediction. %
{To address this issue, we propose a paraphrase-aware UQ framework robust to semantic rewordings. Our approach trains a lightweight proxy model on LLM hidden states and aggregates its predictions across paraphrases to construct label-wise nonconformity scores. Under score exchangeability, conformal calibration retains marginal coverage. This guarantee can also hold under test-only rewording, provided that the paraphrase pipeline satisfies an additional distributional alignment condition. We evaluate three settings (normal, fully reworded, and semi-reworded) which apply rewording to neither dataset, both calibration and test datasets, or only the test dataset, respectively. Across seven multiple-choice QA benchmarks and multiple model families, our method produces compact prediction sets with empirical coverage generally near the nominal target, even in the semi-reworded setting. Ablation studies show that the learned proxy accounts for most of the reduction in set size, while paraphrase-augmented training and inference-time aggregation improve stability under rewording. Code is available at {\small\url{https://github.com/Raina-Xin/PA_Score}}.}
\end{abstract}

\doparttoc %
\faketableofcontents %

\section{Introduction}
\label{sec:intro}

Large language models (LLMs) are increasingly deployed in high-stakes domains such as education and medicine \citep{bouchard2025uncertainty,lopez2025uncertainty}. In these settings, reliable uncertainty quantification (UQ) is indispensable, as decision-makers must assess when to trust model predictions. However, UQ in LLMs is often unstable: predictive confidence can be poorly calibrated and highly sensitive to superficial variations in input phrasing \citep{shorinwa2024survey}. Such instability undermines the reliability of uncertainty estimates and limits their practical usefulness.

Another fundamental limitation is that many existing UQ methods lack formal guarantees. Most approaches rely on heuristic confidence scores \citep{shorinwa2024survey,nado2022uncertainty,ulmer2022exploring,band2024linguistic,huang2024calibrating}, making it difficult to determine whether their outputs can be trusted, especially in high-stakes scenarios. This motivates the use of methods with theoretical guarantees. Conformal prediction (CP) provides such a framework, constructing prediction sets with finite-sample coverage under mild assumptions \citep{vovk2005algorithmic,shafer2008tutorial}. In CP, a nonconformity score measures how atypical a candidate output is for a given input and determines whether it is included in the prediction set.

While CP ensures statistical validity, its practical reliability depends critically on the stability of the underlying uncertainty scores. Ideally, semantically equivalent inputs should yield comparable uncertainty estimates. For example, the questions ``What is the capital of France?'' and ``Which city serves as France’s capital?'' should lead to similar prediction sets. In practice, however, even such paraphrases can induce substantial variability in uncertainty estimates, leading to unstable or inflated prediction sets. This simple example reveals a gap between theoretical guarantees and semantic robustness. This challenge is particularly pronounced in the LLM setting, where variability arises from semantic-preserving perturbations such as paraphrasing, rather than classical distributional shifts. Addressing such instability requires methods that can achieve both statistical validity and semantic robustness.

\begin{figure}[t!]
   \vspace{-3mm}
  \centering
  \includegraphics[width=0.99\linewidth]{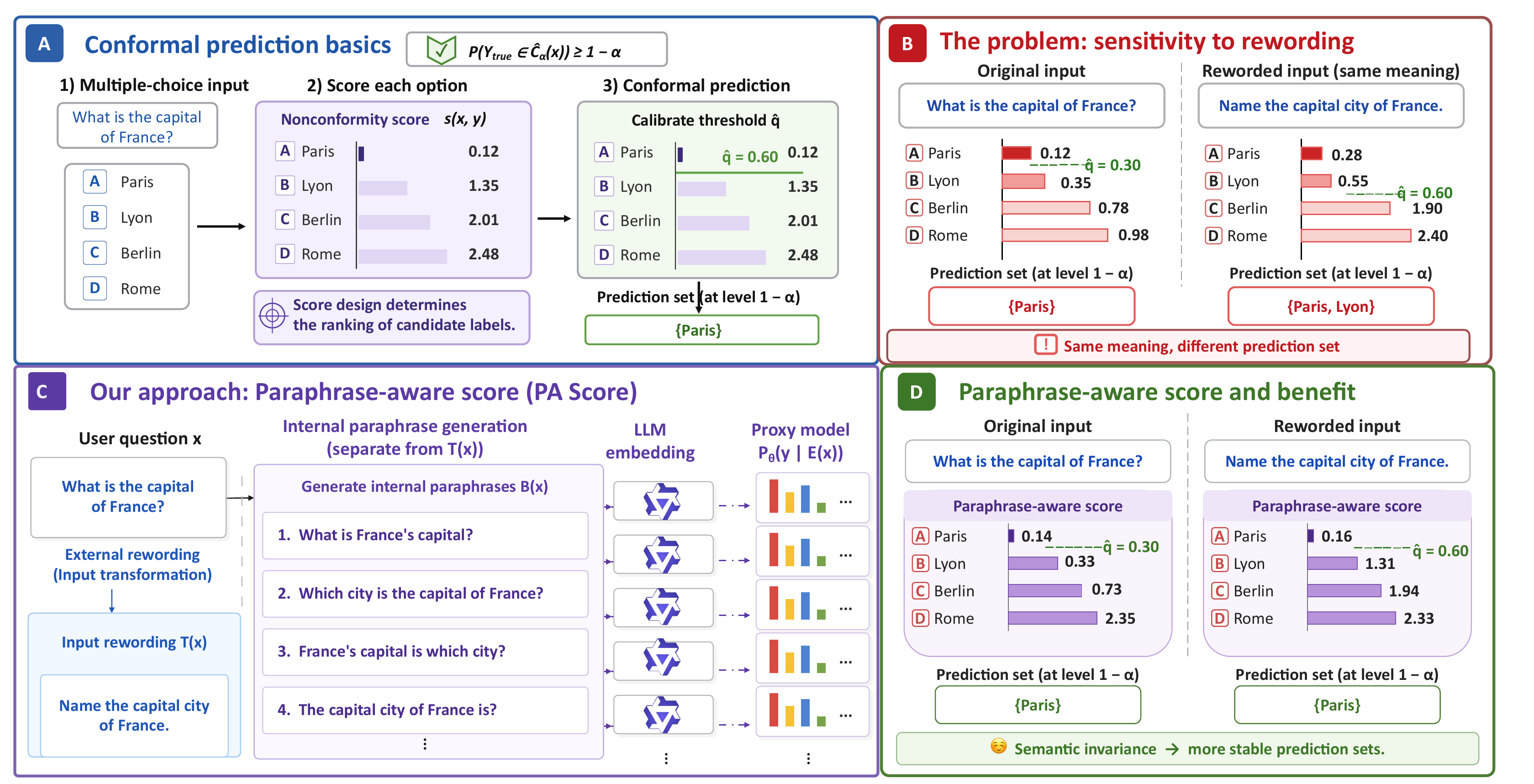}
  \vspace{-2mm}
  \caption{Prior scores are sensitive to rewording or paraphrasing of input question (denoted as $\mathcal{T}$), producing larger prediction sets (Panel A and B). %
{Our PA framework aggregates multiple paraphrases with a proxy model to reduce score and prediction-set sensitivity to rewording (Panel C and D).}}
  \label{fig:fig1-comparison}
  \vspace{-10pt}
\end{figure}

\vspace{-10pt}
\paragraph{Contributions.}
To address this challenge, we introduce a paraphrase-aware scoring mechanism within the conformal prediction framework that %
{reduces sensitivity to semantic rewordings}. Rather than relying on a single input, our approach constructs uncertainty estimates by aggregating predictions across a set of paraphrases, thereby capturing variability induced by rewording. To enable consistent aggregation, we employ a lightweight proxy model to produce calibrated and comparable outputs across paraphrases. %
{The resulting nonconformity scores can be integrated with standard CP, retaining its coverage guarantees when the required exchangeability conditions hold.}

Focusing on the nonconformity score provides a convenient mechanism for incorporating problem-specific structure.
Importantly, modifying the score leaves the underlying conformal guarantees unchanged, making it a simple and modular approach to improving robustness. %
{We demonstrate that our method applies to both split CP and quasi-conditional CP (QCCP) \citep{gibbs2023conformal}. The learned proxy accounts for most of the end-to-end set-size reduction, while paraphrase-augmented training and test-time aggregation contribute to wording stability.}

Theoretically, we show that our scores preserve score-level exchangeability under paraphrased inputs, and more generally under transformations satisfying a distributional closedness condition (see Asm.~\ref{asmp:main}). Empirically, we conduct large-scale experiments on five general QA benchmarks and two medical QA benchmarks, %
{showing end-to-end prediction-set reductions of up to $2$--$4\times$ against raw-probability baselines, with empirical coverage generally near the nominal target. We also conducted controlled ablations to separate the proxy, training-view, and inference-aggregation contributions.}

\section{{Related Work}}
\label{sec:main-related}

{Conformal prediction provides finite-sample marginal coverage when
calibration and test examples are exchangeable
\citep{vovk2005algorithmic,shafer2008tutorial}. Its applications to
language models include conformal language modeling, multiple-choice
question answering, and API-based inference
\citep{quach2024conformal,kumar2023conformal,su2024api}.
Learned nonconformity scores and quasi-conditional methods further
adapt prediction sets to task structure
\citep{einbinder2022training,gibbs2023conformal}.
Our focus is the additional question of how label-wise scores and
prediction sets change when an input is reworded without changing
its intended meaning.
Perturb-and-aggregate uncertainty estimation is established in
SPUQ~\citep{gao2024spuq}, while other work studies semantic uncertainty
and prompt sensitivity \citep{kossen2024semantic,qiu2025semantic}.
SPUQ's original instance-level uncertainty value is distinct from
the per-option score required to form conformal prediction sets.
Prior work establishes perturbation-based uncertainty estimation and conformal coverage, but leaves open how to design label-wise nonconformity scores that are stable under meaning-preserving rewording. This matters in practice because semantically equivalent queries can yield different prediction sets even when marginal coverage is maintained. Our work addresses this nonconformity score design problem through a unified treatment of methodology, evaluation, and validity analysis, extending the focus from instance-level uncertainty to the stability of conformal decisions. For detailed related work, please refer to Appendix \ref{sec:related}.}

\section{Preliminaries}
\label{sec:preliminaries}

\vspace{-8pt}
\paragraph{Problem Setup.}  
{We study conformal prediction for classification when inputs may be reworded without changing their meaning or correct labels. Our goal is to construct small prediction sets that contain the correct label at a prescribed coverage level. We investigate how the choice of nonconformity score affects this coverage-efficiency tradeoff under paraphrasing, considering both matched calibration and test distributions and test-only rewording shifts. Coverage under the latter setting requires additional assumptions, which we formalize in Section \ref{sec:theoretical-analysis}.}

Formally, we study supervised prediction with input $x \in \mathcal{X}$ (e.g., a natural language question) and a finite label space $\mathcal{Y}$ (e.g., multiple-choice answers). Given a calibration dataset $\{(X_i, Y_i)\}_{i=1}^n$ with $Y_i \in \mathcal{Y}$, the goal is to construct a prediction set $\widehat{C}(X_{n+1}) \subseteq \mathcal{Y}$ for a new data pair $(X_{n+1}, Y_{n+1})$ such that $\mathbb{P}\big(Y_{n+1} \in \widehat{C}(X_{n+1})\big) \ge 1 - \alpha$ for a significance level $\alpha$ (e.g., $\alpha=0.01$), while remaining robust to paraphrase perturbations of the input $X_{n+1}$.
To ensure rigorous coverage guarantees, we work within the conformal prediction framework. Within this framework, two widely used approaches are split conformal prediction (split CP) and quasi-conditional conformal prediction (QCCP).

\vspace{-8pt}
\paragraph{Split CP.}  
Given an i.i.d. calibration set $\{(X_i,Y_i)\}_{i=1}^n$, split CP \citep{vovk2005algorithmic,shafer2008tutorial} constructs prediction sets based on a \emph{nonconformity score} $S:\mathcal{X}\times\mathcal{Y}\to\mathbb{R}$, which measures how atypical a candidate label $y$ is for an input $x$. 
The prediction set for a new input $X_{n+1}$ is defined as
\begin{equation}
\label{eq:CI-CP}
\widehat{C}(X_{n+1}) = \{y : S(X_{n+1},y) \leq \widehat{\gamma}_\alpha \},
\end{equation}
where $\widehat{\gamma}_\alpha$ is the empirical $(1-\alpha)$ quantile of $\{S(X_i,Y_i)\}_{i=1}^n \cup \{\infty\}$.  
Split CP is simple and distribution-free, but it relies on a single global threshold $\widehat{\gamma}_\alpha$ for all inputs, which can be overly conservative in heterogeneous input spaces.

\vspace{-8pt}
\paragraph{Quasi-Conditional CP.}
Quasi-conditional conformal prediction (QCCP) \citep{gibbs2023conformal} extends split CP by allowing thresholds to depend on the input. Given a precollected i.i.d. calibration set $\{(X_i, Y_i)\}_{i=1}^{n}$, it first computes nonconformity scores $S_i = S(X_i,Y_i)$ for $i=1,\ldots,n$. These scores are then used to fit the following quantile regressor, which for any given value of $S$, is defined as,
\begin{equation}
\label{eq:augqr}
\widehat{g}_S = \arg\min_{g\in\mathcal{F}}\left[
\sum_{i=1}^{n}\ell_\alpha\big(g(X_i),S_i\big)
+ \ell_\alpha\big(g(X_{n+1}),S\big)
\right],
\end{equation}
where $\mathcal{F}=\{\phi(\cdot)^\top\beta:\beta\in\mathbb{R}^d\}$ denotes a linear function class over a feature mapping $\phi$ (to be specified later), and $\ell_\alpha$ is the pinball loss defined by $\ell_{\alpha}(\theta, S) = (S - \theta)\,(\mathbf{1}_{\{S \geq \theta\}} - \alpha)$. At inference time, for a new input $X_{n+1}$, we construct the prediction set\footnote{Here, we use the notation $\widehat g_{S(X_{n+1}, y)}$ to emphasize that the learned threshold function depends on the score value $S(X_{n+1}, y)$ which is evaluated on the candidate input--label pair $(X_{n+1}, y)$.}
\begin{equation}
\label{eq:predset}
\widehat{C}(X_{n+1}) =
\left\{
y \,:\, S(X_{n+1},y)\le %
\widehat g_{S(X_{n+1}, y)}{(X_{n+1})}
\right\}.
\end{equation} 
Compared to split CP (which uses a global threshold $\widehat{\gamma}_\alpha$), QCCP adapts thresholds via $\widehat{g}_S$ based on the feature mapping $\phi$; see \citet{gibbs2023conformal} and Appendix~\ref{append:QCCP} for more details.

\section{Paraphrase-Aware Nonconformity Scores for Conformal Prediction}
\label{sec:scores}

 \begin{figure*}[t!]
    \centering
    \includegraphics[width=\linewidth]{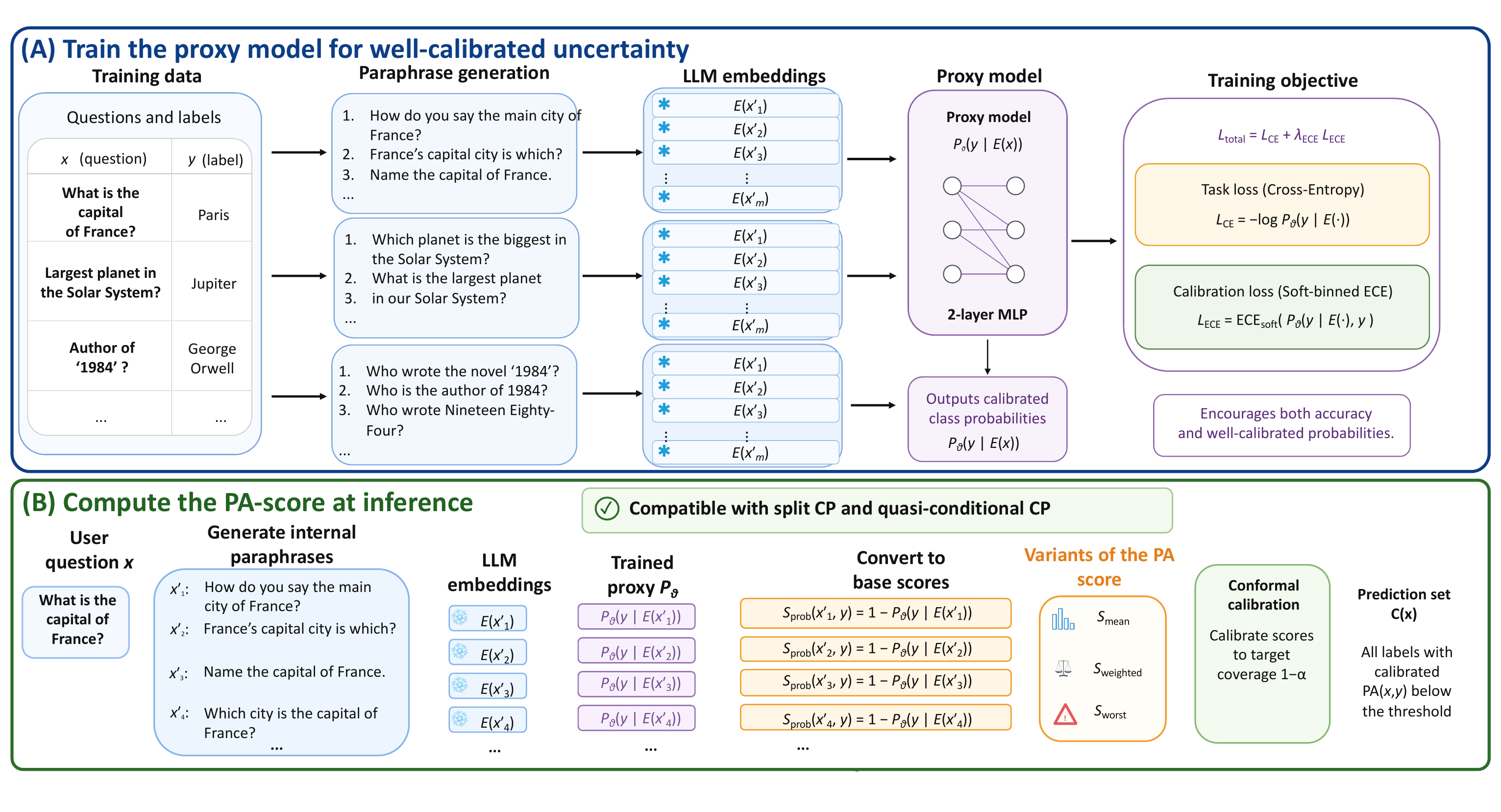}
    \vspace{-1.5em}
\caption{{Schematic illustration of our proposed PA score integrated with CP.}}
    \label{fig:fig1-motivation}
    \vspace{-1em}
\end{figure*}

A key limitation of existing CP methods is that the nonconformity scores $S$ are computed from a single prompt, and thus fail to capture variability induced by semantically equivalent rewordings. As a result, even minor paraphrases can lead to unstable uncertainty estimates. 
To address this issue, we introduce \emph{paraphrase-aware} (PA) nonconformity scores, which explicitly incorporate semantic variability by aggregating predictions across paraphrases. These scores can be integrated with either split CP or QCCP, yielding robust prediction sets to reworded inputs while retaining statistical coverage guarantees.

\vspace{-3pt}
\paragraph{Overview of Score Construction.}  
At a high level (Figure~\ref{fig:fig1-motivation}), our approach constructs uncertainty scores by aggregating predictions across paraphrases, thereby capturing variability induced by semantic rewordings. 
It proceeds as follows. For each input question $x$, we generate a set of paraphrases $\mathcal{B}(x)$. For each paraphrased input $x' \in \mathcal{B}(x)$, we extract its semantic embedding $E(x')$ (e.g., the corresponding LLM hidden state), resulting in a collection of embeddings $\{E(x') : x' \in \mathcal{B}(x)\}$.
We then train a lightweight \emph{proxy classifier} to map these embeddings to calibrated predictive probabilities. Based on these probabilities, we construct paraphrase-aware nonconformity scores, denoted by $S(x, y)$, which aggregate information across the paraphrase set $\mathcal{B}(x)$. The resulting scores are subsequently calibrated via split CP or QCCP to produce the final prediction set.
Due to space constraints, the full procedure is summarized in Algorithm~\ref{alg:full} in the appendix. Additional details on paraphrase quality and runtime are provided in Appendices~\ref{appendix:paraphrase-quality} and~\ref{appendix:runtime-analysis}.

\vspace{-8pt}
\paragraph{Motivation for a Proxy Classifier.}
Raw LLM probabilities are often overconfident and poorly calibrated,\footnote{When well calibrated, the proxy outputs are also interpretable. For example, if $P_\theta(\texttt{Paris} \mid E(x)) = 0.8$, this indicates that approximately $80\%$ of semantically similar inputs have ``Paris'' as the correct answer.} as they are trained for next-token prediction rather than uncertainty quantification \citep{nado2022uncertainty,ulmer2022exploring}. 
To address this limitation, we introduce a lightweight proxy classifier $P_\theta(y \mid E(x))$ that produces more reliable predictive probabilities for downstream decision making. The proxy takes the last-layer hidden state of the LLM as input and outputs a probability distribution over the label space $\mathcal{Y}$.
In our implementation, the proxy is a shallow two-layer MLP trained on LLM embeddings $\{E(X_i)\}$ with labels $\{Y_i\}$, using a combination of task loss (cross-entropy) and a calibration-aware loss (soft-binned ECE; \citealp{karandikar2021soft}). %
This design improves calibration while remaining computationally efficient. {The main variant requires hidden-state access and labeled proxy-training data. The blackbox API extension and full online generation cost are reported in Appendices~\ref{appendix:api-only} and~\ref{appendix:runtime-analysis}.}

{\begin{remark}
Since the PA score depends on embeddings $E(\cdot)$, one may be concerned about sensitivity to representation choices. In Appendix~\ref{sec:pooling-sensitivity}, we show that the PA score remains robust across a range of embedding variants, including different ways of summarizing token-level representations (e.g., pooling) and selecting model layers.
\end{remark}}

\vspace{-8pt}
\paragraph{Paraphrase-Aware Nonconformity Scores.}
We now introduce paraphrase-aware nonconformity scores, the core component of our method. 
A standard choice in conformal prediction is the probability-based score \citep{sadinle2019least}, 
$S_{\mathrm{prob}}(x,y) = 1 - P_\theta(y \mid E(x))$, which measures the model's lack of confidence for a single prompt. 
However, this score fails to capture instability under paraphrasing.

Given a paraphrase set $\mathcal{B}(x) = \{x'_1,\ldots,x'_m\}$ for each input $x$, we construct paraphrase-aware scores by aggregating $S_{\mathrm{prob}}(x',y)$ across paraphrases. 
As summarized in Table~\ref{tab:pa-scores}, we consider three variants.
The mean score $S_{\mathrm{mean}}$ averages base scores across paraphrases, capturing overall semantic stability. 
The weighted score $S_{\mathrm{weighted}}$ emphasizes semantically similar paraphrases by assigning higher weights to closer variants in the embedding space. 
The worst-case score $S_{\mathrm{worst}}$ takes the maximum across paraphrases, making it sensitive to the most difficult rewording.

\begin{table}[t]
\centering
\caption{Paraphrase-aware nonconformity scores. The weight is defined as $w(x,x')=1/(1+\|E(x)-E(x')\|_2)$, an inverse-distance weighting in the embedding space, so paraphrases closer to the original input receive larger weights. We use $\mathcal{T}(\cdot)$ to denote the paraphrase operator.}
\label{tab:pa-scores}

\resizebox{\textwidth}{!}{
\begin{tabular}{p{0.12\textwidth} p{0.28\textwidth} p{0.24\textwidth} p{0.28\textwidth}}
\toprule
\textbf{Score} & \textbf{Definition} & \textbf{Interpretation} & \textbf{Properties} \\
\midrule

Mean 
& $\frac{1}{m}\sum_{x'\in\mathcal{B}(x)} S_{\mathrm{prob}}(x',y)$ 
& Captures average semantic stability 
& Most efficient; may miss rare paraphrases \\

Weighted 
& {\tiny $\frac{\sum_{x'\in\mathcal{B}(x)} w(\mathcal{T}(x),x')\,S_{\mathrm{prob}}(x',y)}{\sum_{x'\in\mathcal{B}(x)} w(\mathcal{T}(x),x')}$}
& Emphasizes semantically similar paraphrases 
& Balances compactness and robustness \\

Worst-\allowbreak case 
& $\max_{x'\in\mathcal{B}(x)} S_{\mathrm{prob}}(x',y)$ 
& Sensitive to hardest paraphrase 
& Robust but conservative; sensitive to low-quality paraphrases \\

\bottomrule
\end{tabular}}
\vspace{-15pt}
\end{table}

\vspace{-8pt}
\paragraph{Choice of the Class Function $\phi$.} 
We now describe how we construct the class function $\phi$ (introduced in \eqref{eq:augqr}), which captures semantic similarity among input questions.
At a high level, $\phi$ groups semantically similar inputs so that downstream procedures operate on coherent groups rather than a single global structure. Since most benchmarks lack explicit type annotations (e.g., commonsense vs.\ factual), we construct these groups automatically.
Specifically, we embed each question–context pair using a pretrained SBERT encoder (\texttt{all-MiniLM-L6-v2}) \citep{reimers-2019-sentence-bert}, fit $K$-means on the calibration embeddings only, and assign each test input to its nearest centroid; the resulting cluster assignment defines $\phi(x)$.
Formally, this partitions $\mathcal{X}$ into $N$ clusters $\{\mathcal{G}_j\}_{j=1}^N$, and we set $\phi(x) = \sum_{j=1}^N \mathbf{1}\{x \in \mathcal{G}_j\} \cdot \phi_j$, where $\phi_j$ denotes the representative vector of cluster $j$. The number of clusters $K$ can be fixed or selected via the silhouette score on calibration embeddings.
This construction yields semantically coherent groupings and enables conditioning on latent question structure without requiring manual annotations. For more implementation details, see Appendix~\ref{appendix:detailed-implement}.

\section{Theoretical Analysis}
\label{sec:theoretical-analysis}
When integrated into standard CP frameworks, our PA scores inherit the original theoretical guarantees under exchangeability, since these guarantees are agnostic to the specific choice of nonconformity score. 
However, such guarantees rely on the exchangeability of the resulting scores, which may be violated under distributional shifts.
We show that, in certain structured settings, our PA construction can restore exchangeability at the score level, even when it does not hold for the original calibration–test pairs. We formalize this intuition below.
\begin{asmp}[Distributional closedness]
\label{asmp:main}
Let $\mathcal{T}$ be a Markov kernel on $\mathcal{X}$, where 
$\mathcal{T}(\cdot \mid x)$ denotes the distribution of a random paraphrase of $x$.
We say $\mathcal{T}$ is distributionally closed if, for any $x \in \mathcal{X}$, $\mathcal{T}(\cdot \mid x') = \mathcal{T}(\cdot \mid x)$ for $\mathcal{T}(\cdot \mid x)\text{-almost every } x' .$
\end{asmp}

\begin{prop}[Data exchangeability under distribution shift]
\label{prop:new-exchangability}
Let Asm.~\ref{asmp:main} hold, and $\{(X_i, Y_i)\}_{i=1}^{n+1}$ be exchangeable (or i.i.d.). 
We denote by $\mathcal{T}(X_{n+1})$ a random paraphrase of $X_{n+1}$ drawn from the kernel $\mathcal{T}(\cdot \mid X_{n+1})$.
Then $(\mathcal{T}(X_{n+1}), Y_{n+1})$ may NOT be exchangeable with $\{(X_i, Y_i)\}_{i=1}^{n}$ due to the shift induced by $\mathcal{T}$. 
However, applying $\mathcal{T}$ to all inputs restores exchangeability, i.e., $\bigl(\mathcal{T} \circ \mathcal{T}(X_{n+1}), Y_{n+1}\bigr)$ is exchangeable with $\{(\mathcal{T}(X_i), Y_i)\}_{i=1}^n$.
\end{prop}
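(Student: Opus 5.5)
The plan is to reduce everything to one identity: distributional closedness makes the kernel idempotent, $\mathcal{T}\circ\mathcal{T}=\mathcal{T}$ as Markov kernels. Once that holds, the paraphrased test pair has the same law as a paraphrased calibration pair, with fresh and independent paraphrase randomness. I model the paraphrases explicitly. Conditional on the data $\{(X_i,Y_i)\}_{i=1}^{n+1}$, draw $\tilde X_i\sim\mathcal{T}(\cdot\mid X_i)$ independently for every $i\le n+1$. For the test point, additionally draw $\check X_{n+1}\sim\mathcal{T}(\cdot\mid \tilde X_{n+1})$, so that $\mathcal{T}\circ\mathcal{T}(X_{n+1})=\check X_{n+1}$.

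\textbf{Step 1 (idempotence).} For measurable $A$,
\[
(\mathcal{T}^2)(A\mid x)=\int \mathcal{T}(A\mid x')\,\mathcal{T}(dx'\mid x)=\int \mathcal{T}(A\mid x)\,\mathcal{T}(dx'\mid x)=\mathcal{T}(A\mid x).
\]
The middle equality uses Asm.~\ref{asmp:main}: the integrand $\mathcal{T}(A\mid x')$ equals $\mathcal{T}(A\mid x)$ for $\mathcal{T}(\cdot\mid x)$-a.e.\ $x'$. It follows that, conditional on $(X_{n+1},Y_{n+1})$, $\check X_{n+1}$ has law $\mathcal{T}(\cdot\mid X_{n+1})$. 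Moreover, it is conditionally independent of all the other variables, since its extra randomness is fresh.

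\textbf{Step 2 (same joint law).} The joint law of
\[
\big((\tilde X_1,Y_1),\dots,(\tilde X_n,Y_n),(\check X_{n+1},Y_{n+1})\big)
\]
equals that of $\big((\tilde X_i,Y_i)\big)_{i=1}^{n+1}$. Both are obtained from the same law of $\{(X_i,Y_i)\}$ by applying the product kernel $\bigotimes_i \mathcal{T}(\cdot\mid X_i)$ to the inputs, which is exactly what Step 1 shows for the test coordinate.

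\textbf{Step 3 (exchangeability).} The sequence $\big((\tilde X_i,Y_i)\big)_{i=1}^{n+1}$ is exchangeable. For a permutation $\pi$, the pushforward of an exchangeable law under a permutation-equivariant product kernel, in which every coordinate uses the same kernel $\mathcal{T}$, is again permutation invariant. One can check this directly by writing the joint measure as $\int \prod_i \mathcal{T}(dx_i'\mid x_i)\,\delta_{y_i}(dy_i')\,P(dx,dy)$ and substituting $\pi$. Combining this with Step 2 gives the positive claim.

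For the ``may NOT'' part, a counterexample suffices. Take $\mathcal{X}=\{a,b\}$ with $\mathcal{T}(\cdot\mid x)=\delta_b$ for both $x$; this satisfies closedness. Let $X_i\equiv a$. Then $\mathcal{T}(X_{n+1})=b$, while every $X_i=a$, so $(\mathcal{T}(X_{n+1}),Y_{n+1})$ does not have the same marginal law as $(X_1,Y_1)$.

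The main obstacle I expect is measure-theoretic bookkeeping rather than ideas. I need to make precise that ``applying $\mathcal{T}$'' means independent draws conditional on the data. I also need to justify that the a.e.\ closedness statement suffices inside the integral, which requires measurability of $x\mapsto\mathcal{T}(A\mid x)$; a Markov kernel provides this.
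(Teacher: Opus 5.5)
Your proposal is correct and follows essentially the same route as the paper's proof. Closedness gives the kernel identity $\mathcal{T}\circ\mathcal{T}=\mathcal{T}$; the freshly drawn, conditionally independent test coordinate can then be substituted without changing the joint law; and applying the same kernel coordinatewise preserves exchangeability. Your explicit two-point counterexample for the ``may NOT'' clause, and your check of the measurability needed for the a.e.\ substitution inside the integral, are small additions that the paper only asserts.
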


\vspace{-8pt}
Prop.~\ref{prop:new-exchangability} shows that although the transformation $\mathcal{T}$ breaks exchangeability between calibration and test samples, this issue can be resolved by applying $\mathcal{T}$ across all data.
The key mechanism is that, under Asm.~\ref{asmp:main}, the paraphrase distribution is stable under repeated rewording $\mathcal{T}$.
While $(\mathcal{T}(X_{n+1}), Y_{n+1})$ may not align with the original calibration distribution, $\mathcal{T}(X_{n+1})$ (and its multiple collections) becomes compatible with those from the calibration data.

\begin{lem}[Score exchangeability]
\label{lem:score}
Let $\{(X_i, Y_i)\}_{i=1}^{n+1}$ be i.i.d. For our proposed scores $S \in \{ S_{\mathrm{mean}}, S_{\mathrm{weighted}}, S_{\mathrm{worst}}\}$, the exchangeability properties summarized in Table~\ref{tab:score-exchangeability} hold, i.e., the test score is exchangeable with the corresponding calibration scores.
\end{lem}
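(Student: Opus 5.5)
The plan is to reduce everything to one principle: a fixed measurable function, applied coordinatewise to an exchangeable sequence, yields an exchangeable sequence. I read Table~\ref{tab:score-exchangeability} as covering three settings. In the \emph{normal} setting nothing is reworded. In the \emph{fully reworded} setting both calibration and test inputs are paraphrased. In the \emph{semi-reworded} setting only the test input is paraphrased, and exchangeability is claimed under Asm.~\ref{asmp:main}.

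First I fix the randomness so that each score is a deterministic function of a per-example random object. I condition on the proxy $P_\theta$, the embedding map $E$ and the clustering, all of which are trained on data independent of the calibration and test points. For each example I write the paraphrase set as $\mathcal{B}(x)=\{x'_1,\dots,x'_m\}$ with $x'_j\sim\mathcal{T}(\cdot\mid x)$ drawn independently across examples. I then define the augmented per-example object
\[
Z_i=\bigl(\tilde X_i,Y_i,x'_{i,1},\dots,x'_{i,m}\bigr),
\]
where $\tilde X_i$ equals $X_i$ or $\mathcal{T}(X_i)$ depending on the setting. Each score $S_{\mathrm{mean}},S_{\mathrm{weighted}},S_{\mathrm{worst}}$ can be written as $S_i=h(Z_i)$ for a single measurable map $h$. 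This works because averages, inverse-distance weighted averages and maxima of $1-P_\theta(y\mid E(x'))$ are measurable functions of the paraphrases and the label. The weights $w(\mathcal{T}(x),x')$ also depend only on quantities inside $Z_i$.

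In the normal and fully reworded settings, the $(X_i,Y_i)$ are i.i.d. and the paraphrase draws are conditionally i.i.d. given them, so the $Z_i$ are i.i.d. It follows that $(h(Z_1),\dots,h(Z_{n+1}))$ is i.i.d., hence exchangeable.

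In the semi-reworded setting, the test object is built from $\mathcal{T}(X_{n+1})$. Its paraphrases are drawn from $\mathcal{T}(\cdot\mid\mathcal{T}(X_{n+1}))$, while the calibration paraphrases come from $\mathcal{T}(\cdot\mid X_i)$. I invoke Asm.~\ref{asmp:main} together with Prop.~\ref{prop:new-exchangability}. Since $\mathcal{T}(\cdot\mid x')=\mathcal{T}(\cdot\mid x)$ for $\mathcal{T}(\cdot\mid x)$-a.e.\ $x'$, the conditional law of the paraphrase bag given $(X_{n+1},Y_{n+1})$ is the same whether one paraphrases $X_{n+1}$ or $\mathcal{T}(X_{n+1})$. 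Therefore the joint law of $(Y_{n+1},\mathcal{B}(\mathcal{T}(X_{n+1})))$ coincides with that of $(Y_i,\mathcal{B}(X_i))$. For $S_{\mathrm{mean}}$ and $S_{\mathrm{worst}}$, which depend on $Z$ only through $(Y,\mathcal{B})$, this gives equality in distribution of the test score and each calibration score. Independence across examples is preserved, so the scores are i.i.d. and hence exchangeable.

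The main obstacle is $S_{\mathrm{weighted}}$ in the semi-reworded case, because its weights depend on the anchor $\mathcal{T}(x)$ as well as on the bag. I would handle this by noting that the table's definition anchors the weights at $\mathcal{T}(x)$ in every setting, so the anchor is also a draw from $\mathcal{T}(\cdot\mid\cdot)$. Applying closedness once more makes the triple (label, anchor, bag) equal in law between test and calibration. If the table instead anchors at the raw $x$ for calibration, I would state exchangeability for $S_{\mathrm{weighted}}$ only in the normal and fully reworded settings, which is presumably what the table records.

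Finally, I would make the ``a.e.'' step rigorous by integrating the kernel. Writing $\mathcal{T}^{\otimes m}$ for $m$ i.i.d.\ draws,
\[
\int \mathcal{T}^{\otimes m}(\cdot\mid x')\,\mathcal{T}(dx'\mid x)=\mathcal{T}^{\otimes m}(\cdot\mid x),
\]
since the integrand equals the right-hand side for $\mathcal{T}(\cdot\mid x)$-almost every $x'$.
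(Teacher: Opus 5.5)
Your proposal is correct and follows essentially the paper's route. The paper likewise writes each PA score as a fixed measurable rule applied to the augmented pair $(\widetilde{\mathcal{T}}(X),Y)$, shows these pairs are i.i.d.\ in all three settings (using the same identity $\int \mathcal{T}^{\otimes m}(\cdot\mid x')\,\mathcal{T}(dx'\mid x)=\mathcal{T}^{\otimes m}(\cdot\mid x)$ for the semi-reworded case), and then transfers this to the scores.

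Your treatment of $S_{\mathrm{weighted}}$ matches the paper's own caveat. The exact semi-reworded guarantee covers weights that depend only on the generated paraphrases, whereas anchoring at the original input needs an extra joint-invariance assumption. Note, though, that the table's check mark does not record this restriction; it appears only in the appendix, so it is not ``what the table records.''
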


\begin{table}[t]
\caption{Summary of score exchangeability in Lemma \ref{lem:score} where $\{(X_i, Y_i)\}_{i=1}^{n+1}$ is i.i.d. Here, (*) requires Asm.~\ref{asmp:main}.}
\label{tab:score-exchangeability}
\centering
\small
\vspace{-2pt}
\resizebox{0.57\textwidth}{!}{
\begin{tabular}{l l l l}
\toprule
\textbf{Setting.} & \textbf{Calibration Data} & \textbf{Test Sample} & \textbf{Exchangeable} \\
\midrule
Normal 
& $\{(X_i,Y_i)\}_{i=1}^n$ 
& $(X_{n+1},Y_{n+1})$ 
& \checkmark \\
Full 
& $\{(\mathcal{T}(X_i),Y_i)\}_{i=1}^n$ 
& $(\mathcal{T}(X_{n+1}),Y_{n+1})$ 
& \checkmark \\
Semi 
& $\{(X_i,Y_i)\}_{i=1}^n$ 
& $(\mathcal{T}(X_{n+1}),Y_{n+1})$ 
& \checkmark\ (*) \\
\bottomrule
\end{tabular}}
\end{table}

\vspace{-10pt}

\paragraph{Implications for Paraphrasing.}
{Meaning-preserving paraphrases express the same underlying content through different wordings. Distributional closedness formalizes the stronger requirement that this rewording leaves the distribution of generated paraphrases unchanged.}
Under this view, our PA scores leverage multiple independent samples drawn from $\mathcal{T}(\cdot \mid x)$ to construct a robust statistic for each input.
Under Asm.~\ref{asmp:main}, this invariance extends to the joint distribution of these samples.
{Lemma~\ref{lem:score} then establishes exchangeability of the resulting scores under the stated score conditions. Consequently, our scores retain the corresponding marginal coverage guarantees within CP and quasi-conditional guarantees within QCCP under its joint feature--score assumptions, including when test inputs undergo rewording.}
A formal statement together with proofs is given in Appendix~\ref{append:coverage}.

\begin{remark}
{The analysis also quantifies departures from exact distributional closedness. Under the assumptions in Appendix~\ref{appen:approximate-main}, a total-variation bound controls the change in coverage induced by rewording. Appendix~\ref{appendix:theory-impl-gap} complements this result with an empirical diagnostic of score-distribution shift and its effect on coverage.}
\end{remark}
\vspace{-6pt}
\section{Empirical Performance}
\vspace{-4pt}

\paragraph{Datasets.}
We evaluate our framework on seven benchmark datasets. Five are general multiple-choice question answering (MCQA) datasets from the LLM-Uncertainty-Benchmark \citep{ye2024benchmarking}: {MMLU}, {CosmosQA}, {HellaSwag}, {HaluDial}, and {HaluSum}, covering question answering (\textbf{QA}), reading comprehension (\textbf{RC}), commonsense inference (\textbf{CI}), dialogue response selection (\textbf{DRS}), and document summarization (\textbf{DS}). 
We further include two medical MCQA datasets, \textbf{MedMCQA} \citep{pal2022medmcqa} and \textbf{MedQA} \citep{jin2021disease}, to evaluate robustness in specialized domains. For each dataset, we sample 10,000 questions.
Following the LLM-Uncertainty-Benchmark protocol \citep{ye2024benchmarking}, we augment the response space $|\mathcal{Y}|$ of all datasets to six options by adding \texttt{I don't know}'' and \texttt{None of the above}.''
Together, these datasets span diverse reasoning tasks and domains, enabling a comprehensive evaluation of both generalization and robustness.

\begin{table*}[t!]
\centering
\vspace{-2mm}
\caption{Performance under split CP and QCCP with \texttt{Qwen3-8B} on normal and fully reworded (Full) test sets across five benchmarks. 
For coverage rate (CR, \%), bold indicates values closest to the nominal $90\%$ target. For set size (SS, $\downarrow$), bold indicates the smallest value.}
\vspace{-1mm}
\label{tab:qwen-paraphrase-perdataset}
\small
\sisetup{round-mode=places,round-precision=2,table-number-alignment=center}
\resizebox{\textwidth}{!}{
\begin{tabular}{l|
  *{2}{S} | *{2}{S} | *{2}{S} | *{2}{S} | *{2}{S}}
\toprule
& \multicolumn{2}{c|}{\textbf{QA}} & \multicolumn{2}{c|}{\textbf{RC}} &
  \multicolumn{2}{c|}{\textbf{CI}} & \multicolumn{2}{c|}{\textbf{DRS}} &
  \multicolumn{2}{c}{\textbf{DS}} \\
\cmidrule(lr){2-11}
\textbf{Method} & {Normal} & {{Full}} & {Normal} & {{Full}} & {Normal} & {{Full}} & {Normal} & {{Full}} & {Normal} & {{Full}} \\
\midrule
\multicolumn{11}{l}{\textit{Coverage Rate (CR, \%) Better if closer to $90\%$ --- Split CP}} \\
LAC        
& 89.60 & 98.83 & \textbf{90.53} & 90.47 & 89.73 & \textbf{90.03} & 99.00 & 99.10 & 89.27 & 89.10 \\
APS        
& 99.73 & 98.90 & 99.07 & 99.20 & 98.90 & 99.00 & 100.00 & 99.97 & 100.00 & 98.50 \\
{SPUQ-adapt.}       
& 90.37 & 89.77 & 91.27 & \textbf{90.20} & \textbf{89.87} & 89.77 & \textbf{90.90} & 91.13 & \textbf{89.80} & \textbf{89.73} \\
{TRON-adapt.}       
& \textbf{89.77} & 90.40 & \textbf{90.53} & 90.67 & 89.80 & 90.20 & 90.93 & 91.53 & 89.53 & 89.70 \\
PA   
& 90.93 & \textbf{89.90} & 91.30 & 91.47 & 90.47 & 90.20 & 92.70 & \textbf{90.40} & 95.13 & 90.47 \\
\addlinespace[3pt]
\multicolumn{11}{l}{\textit{Coverage Rate (CR, \%) Better if closer to $90\%$ --- QCCP}}\\
LAC        
& 94.47 & 95.13 & 92.23 & 93.67 & 91.87 & 91.87 & 97.80 & 96.40 & 94.47 & 94.23 \\
APS        
& 97.10 & 96.90 & 98.63 & 97.50 & 98.60 & 99.17 & 93.10 & 94.33 & 97.10 & 95.73 \\
{SPUQ-adapt.}       
& 91.23 & \textbf{90.43} & 91.67 & \textbf{90.57} & 90.67 & \textbf{90.23} & 91.67 & 91.23 & \textbf{90.00} & \textbf{90.37} \\
{TRON-adapt.}       
& 91.03 & 90.93 & \textbf{90.70} & 91.10 & \textbf{90.07} & 90.27 & \textbf{91.07} & 91.47 & 90.03 & 90.47 \\
PA   
& \textbf{91.00} & 90.80 & 91.60 & 91.83 & 92.00 & 91.83 & 92.90 & \textbf{90.40} & 95.63 & 91.17 \\
\midrule
\multicolumn{11}{l}{\textit{Set Size (SS) $\downarrow$ --- Split CP}}\\
LAC        
& 2.18 & 5.62 & 1.56 & 1.80 & 1.66 & 1.76 & 5.72 & 5.72 & 2.52 & 2.73 \\
APS        
& 5.72 & 5.64 & 5.57 & 5.61 & 5.60 & 5.61 & 6.00 & 5.97 & 5.97 & 5.65 \\
{SPUQ-adapt.}       
& 2.48 & 2.65 & 1.65 & 1.77 & 1.84 & 1.79 & 3.29 & 3.32 & 2.68 & 2.73 \\
{TRON-adapt.}       
& 2.17 & 2.58 & 1.57 & 1.83 & 1.67 & 1.77 & 3.38 & 3.37 & 2.57 & 2.82 \\
PA   
& \textbf{2.05} & \textbf{2.19} & \textbf{1.25} & \textbf{1.25} & \textbf{1.44} & \textbf{1.37} & \textbf{1.55} & \textbf{1.36} & \textbf{1.00} & \textbf{1.08} \\
\addlinespace[3pt]
\multicolumn{11}{l}{\textit{Set Size (SS) $\downarrow$ --- QCCP}}\\
LAC        
& 3.58 & 4.28 & 2.47 & 3.11 & 2.61 & 2.68 & 5.03 & 4.74 & 4.27 & 4.30 \\
APS        
& 4.59 & 4.78 & 4.79 & 4.73 & 4.89 & 5.00 & 4.45 & 4.68 & 5.03 & 4.83 \\
{SPUQ-adapt.}       
& 2.66 & 2.83 & 1.88 & 1.96 & 2.18 & 2.15 & 3.37 & 3.32 & 2.85 & 2.92 \\
{TRON-adapt.}       
& 2.45 & 2.84 & 1.77 & 2.05 & 1.99 & 2.08 & 3.37 & 3.35 & 2.79 & 3.01 \\
PA   
& \textbf{2.29} & \textbf{2.49} & \textbf{1.44} & \textbf{1.48} & \textbf{1.81} & \textbf{1.76} & \textbf{1.59} & \textbf{1.41} & \textbf{1.24} & \textbf{1.32} \\
\bottomrule
\end{tabular}
}
\vspace{-20pt}
\end{table*}

\vspace{-10pt}
\paragraph{Baselines and Evaluation Metrics.}
{We compare LAC~\citep{sadinle2019least} and APS~\citep{romano2020classification} with label-wise adaptations inspired by TRON~\citep{wangsample} and SPUQ~\citep{gao2024spuq}, denoted TRON-adapt.\ and SPUQ-adapt.\ Let $p_y(x)$ denote the raw LLM probability of option $y$, and let $\bar p_y(x)=|\mathcal B(x)|^{-1}\sum_{\tilde x\in\mathcal B(x)}p_y(\tilde x)$ denote its average across paraphrases. The implementations use the scores $-\log(p_y(x)+10^{-12})$ and $-\log(\bar p_y(x)+10^{-12})$, respectively. Figure labels TRON and SPUQ refer to the corresponding adaptations. We separately examine variants $1-p_y(x)$ and $1-\bar p_y(x)$ in Appendix~\ref{appendix:baseline_choice}. } Data are split into training, calibration, and test sets with a 40/30/30 ratio. We report {Coverage Rate (CR)}, the fraction of test points whose true label lies in the prediction set, and {Set Size (SS)}, the average size of the prediction set. The target coverage is $1-\alpha = 0.90$.

\vspace{-10pt}
\paragraph{Paraphrase-Based Distribution Shift.}
To simulate controlled distribution shifts, we generate reworded inputs by prompting a local LLM to paraphrase each question while preserving its semantics. Diversity is introduced via stochastic decoding (temperature sampling), followed by filtering to ensure validity and well-formedness. When generation fails, we retry with a stricter template or apply a rule-based rewrite. 
We denote the above overall (paraphrase) transformation by $\mathcal{T}$ for ease of reference.
Further implementation details are provided in Appendix~\ref{app:adv-paraphrase}.\footnote{We verify robustness to the choice of paraphrase generator and evaluate paraphrase quality in Appendix~\ref{appendix:different-llm-paraphrase-generator} and Appendix~\ref{appendix:paraphrase-quality} respectively.}

\vspace{-10pt}
\paragraph{Evaluation Settings.}
% We instantiate the three settings in Table \ref{tab:score-exchangeability} as follows.
{We instantiate the three settings in Table \ref{tab:score-exchangeability}  to distinguish the effect of rewording from a mismatch between calibration and deployment.} In the \textbf{normal} setting, all training, calibration, and test inputs are original human-written questions. {It provides a reference for coverage and prediction-set size for the two following scenarios.} In the \textbf{fully reworded} setting, we apply the same paraphrasing operator $\mathcal{T}$ (introduced above) to all inputs across training, calibration, and test sets, and compute all scores on reworded questions. 
{This tests how the LLM paraphrased input can influence the stability of different nonconformity scores.}
All CP scores are computed on these LLM-reworded questions. In the \textbf{semi-reworded} setting, training and calibration use original inputs, while test inputs are reworded by an LLM. {This represents deployment in which users or an upstream system rephrase inputs after calibration, creating a structured distribution shift. Normal and Fully reworded retain matched calibration and test distributions, so standard conformal validity applies under the corresponding exchangeability assumptions. Semi-reworded tests whether coverage and efficiency persist across the rewording shift, with theoretical validity governed by the additional conditions stated in our analysis.} Additional details are provided in Appendix~\ref{appendix:semi-reworded-setting}.

\subsection{Stable Coverage and Compact Sets under Rewording}
\label{sec:exp-paraphrase-robust}

\paragraph{Normal and Fully Reworded Settings.}
We first evaluate whether the proposed PA score remains robust under prompt rewording, where each test question is rephrased using the procedure described above. Results on \texttt{Qwen-3-8B} across five MCQA benchmarks are summarized in Table~\ref{tab:qwen-paraphrase-perdataset}.
Across both normal and fully reworded settings, PA consistently achieves the smallest prediction sets among all methods, while maintaining coverage close to the nominal $90\%$ target. In particular, PA's coverage is typically slightly conservative (ranging from $89.90\%$ to $95.63\%$), but avoids the severe over-coverage exhibited by APS ($93.10\%$--$100.00\%$). Compared to methods such as TRON and SPUQ, which can be closer to the nominal target, PA achieves substantially smaller set sizes, often reducing them by a factor of $2\times$ or more.
Overall, PA achieves a more favorable coverage--efficiency tradeoff, delivering compact prediction sets without sacrificing reliability too much.

\vspace{-10pt}
\paragraph{Semi-reworded Setting.}
To further assess robustness under distribution shifts induced by paraphrasing, we evaluate all methods in the semi-reworded (Semi.) setting (Table~\ref{tab:semi-reworded}). This setting is more challenging, as the calibration and test distributions no longer match.
Under this shift, PA continues to produce the most compact prediction sets in most cases, while maintaining competitive coverage. Although TRON and SPUQ are occasionally closer to the nominal $90\%$ target, they consistently require larger sets. Meanwhile, APS remains overly conservative, and LAC exhibits instability in several benchmarks.
{These fixed-$\alpha$ comparisons describe empirical robustness and compactness, so achieved coverage may differ across methods. Table~\ref{tab:matched-coverage-attribution} in the appendix gives a separate held-out, approximately coverage-matched comparison.}

\begin{table*}[t!]
\centering
\vspace{-1mm}
\caption{{Performance under the \textbf{semi-reworded (Semi.)} setting. Bold numbers indicate the best.}}
\vspace{-1mm}
\label{tab:semi-reworded}
\small
\resizebox{\textwidth}{!}{
\begin{tabular}{c|l|
  *{5}{c}|*{5}{c}}
\toprule
& \textbf{Method} 
& \textbf{QA} & \textbf{RC} & \textbf{CI} & \textbf{DRS} & \textbf{DS} 
& \textbf{QA} & \textbf{RC} & \textbf{CI} & \textbf{DRS} & \textbf{DS} \\
\cmidrule(lr){3-7}\cmidrule(lr){8-12}
& Semi. & \multicolumn{5}{c|}{Coverage Rate (CR, \%) Better if closer to $90\%$} & \multicolumn{5}{c}{Set Size (SS) $\downarrow$} \\
\midrule
\multirow{5}{*}{{Split CP}} 
& LAC 
& 89.83 & \textbf{88.60} & 89.20 & 99.10 & 88.23 
& 2.54 & 1.62 & 1.69 & 5.72 & 2.61 \\
& APS 
& 99.47 & 98.90 & 99.00 & 100.00 & 100.00
& 5.74 & 5.57 & 5.61 & 6.00 & 5.97 \\
& {SPUQ-adapt.} 
& 84.83 & 87.93 & 87.30 & 86.23 & 88.20
& \textbf{2.21} & 1.53 & 1.60 & 2.72 & 2.49 \\
& {TRON-adapt.} 
& \textbf{90.10} & \textbf{88.60} & \textbf{89.23} & \textbf{91.40} & \textbf{88.70}
& 2.55 & 1.62 & 1.69 & 3.36 & 2.66 \\
& PA   
& 92.07 & 92.20 & 93.23 & 95.90 & 94.23
& 2.26 & \textbf{1.31} & \textbf{1.56} & \textbf{1.70} & \textbf{1.00} \\
\midrule
\multirow{5}{*}{{QCCP}}
& LAC
& 92.13 & \textbf{90.40} & {89.43} & 96.97 & 91.60
& 3.43 & 2.43 & 2.28 & 5.07 & 3.70 \\
& APS
& 95.00 & 96.97 & 97.60 & 93.57 & 94.53
& 4.19 & 4.15 & 4.61 & 4.67 & 4.47 \\
& {SPUQ-adapt.}
& {87.10} & {88.27} & {87.90} & {88.10} & {88.07}
& 2.40 & 1.81 & 1.81 & 2.99 & 2.57 \\
& {TRON-adapt.}
& {{88.77}} & {89.17} & {88.53} & 91.57 & \textbf{{88.80}}
& 2.55 & 1.90 & 1.90 & 3.49 & 2.75 \\
& PA
& \textbf{89.50} & 90.67 & \textbf{89.53} & \textbf{91.10} & 94.20
& \textbf{2.01} & \textbf{1.39} & \textbf{1.36} & \textbf{1.70} & \textbf{1.11} \\
\bottomrule
\end{tabular}
}
\vspace{-10pt}
\end{table*}

\vspace{-10pt}
\paragraph{Generalization Across LLMs.}
Finally, we evaluate cross-model generalization using two models, \texttt{Llama-3.1-8B-Instruct} \citep{dubey2024llama} and \texttt{Phi-3-small-8k-Instruct} \citep{abdin2024phi3technicalreporthighly}, on MMLU under the normal, semi-reworded, and fully {reworded} settings. The results mirror the patterns previously: APS tends to inflate coverage, often overshooting the nominal 90\% level, while LAC, SPUQ, and TRON can be closer to nominal coverage in some cases but generally require larger prediction sets. In contrast, our score (PA) consistently produces the most compact prediction sets and maintains competitive coverage close to the nominal target.

\vspace{-10pt}
\paragraph{Extensions to Other Settings.}
Due to space constraints, many additional experimental results are deferred to Appendix~\ref{append:additional-experiments}. We provide a brief summary here.
First, we observe similar patterns on the remaining two medical MCQA datasets, where our method consistently produces smaller prediction sets while achieving coverage closer to the target level than competing methods.
Second, our method is versatile and extends naturally to other settings. Extensions to API-only scenarios, other tasks (e.g., factuality verification), and image modalities are discussed in Appendix~\ref{append:applicability} in greater detail.

\begin{table*}[t!]
\centering
\vspace{-1mm}
\caption{{Cross-LLM generalization on MMLU.}
Coverage Rate (CR, \%) closer to $90\%$ is better; Set Size (SS) $\downarrow$ is better.
Results are shown for normal, semi-reworded (Semi.), and fully reworded (Full).}
\vspace{-1mm}
\label{tab:generalize-llms}
\small
\resizebox{\textwidth}{!}{
\begin{tabular}{c|l|
  *{3}{c}|*{3}{c}|*{3}{c}|*{3}{c}}
\toprule
& & \multicolumn{6}{c|}{\texttt{Llama-3.1-8B}} 
  & \multicolumn{6}{c}{\texttt{Phi-3-small}} \\
\cmidrule(lr){3-8}\cmidrule(lr){9-14}
& & \multicolumn{3}{c|}{CR (\%)} & \multicolumn{3}{c|}{SS} 
  & \multicolumn{3}{c|}{CR (\%)} & \multicolumn{3}{c}{SS} \\
\cmidrule(lr){3-5}\cmidrule(lr){6-8}
\cmidrule(lr){9-11}\cmidrule(lr){12-14}
& \textbf{Method} 
& Normal & Semi. & Full & Normal & Semi. & Full 
& Normal & Semi. & Full & Normal & Semi. & Full \\
\midrule
\multirow{5}{*}{{Split CP}} 
& LAC  
& 88.97 & 86.50 & 88.87 & 2.63 & 2.75 & 2.95 
& 91.03 & \textbf{88.67} & 89.03 & 5.65 & 5.55 & 5.57 \\
& APS  
& 95.87 & 94.90 & 94.93 & 3.81 & 3.97 & 3.98 
& 99.53 & 99.67 & 99.67 & 5.98 & 5.98 & 5.98 \\
& {SPUQ-adapt.} 
& 88.50 & 85.40 & 89.63 & 2.68 & \textbf{2.69} & 3.04 
& 90.33 & 84.47 & 90.13 & 5.61 & 5.38 & 5.61 \\
& {TRON-adapt.} 
& 89.00 & 86.53 & 88.93 & 2.63 & 2.75 & 2.95 
& \textbf{90.00} & 87.17 & 89.37 & 5.60 & 5.49 & 5.58 \\
& PA    
& \textbf{89.50} & \textbf{90.03} & \textbf{89.87} & \textbf{2.52} & 2.70 & \textbf{2.75} 
& 90.67 & 91.50 & \textbf{90.07} & \textbf{3.59} & \textbf{3.63} & \textbf{3.62} \\
\midrule
\multirow{5}{*}{{QCCP}}
& LAC
& 90.03 & {87.77} & 89.17 & 2.85 & 2.90 & 3.16
& 91.83 & {87.37} & 91.73 & 5.67 & 5.51 & 5.67 \\
& APS
& 95.47 & 94.40 & 94.43 & 3.91 & 3.99 & 4.08
& 94.77 & 97.17 & 93.73 & 5.80 & 5.88 & 5.77 \\
& {SPUQ-adapt.}
& 89.43 & {87.20} & \textbf{89.90} & 2.92 & 2.85 & 3.23
& 91.17 & {84.73} & 91.07 & 5.65 & 5.40 & 5.64 \\
& {TRON-adapt.}
& \textbf{89.97} & {87.77} & 89.13 & 2.84 & 2.90 & 3.16
& 91.07 & {86.50} & \textbf{90.60} & 5.64 & 5.49 & 5.62 \\
& PA
& 89.57 & \textbf{89.63} & 90.40 & \textbf{2.49} & \textbf{2.60} & \textbf{3.02}
& \textbf{90.33} & \textbf{90.67} & 90.73 & \textbf{3.57} & \textbf{3.60} & \textbf{3.65} \\
\bottomrule
\end{tabular}
}
\vspace{-10pt}
\end{table*}

\section{In-Depth Analysis and Ablation Studies}
\vspace{-5pt}

\paragraph{Proxy Model Improves Accuracy and Calibration.} 

In our work, the proxy model $P_{\theta}$ is a lightweight two-layer MLP trained on top of frozen hidden states from the LLM. We find that $P_{\theta}$ produces more accurate and better-calibrated distributions than raw LLM logits. As shown in Figure~\ref{fig:an1-mlp-calibration}, the proxy consistently outperforms logits across benchmarks on all calibration metrics. For example, accuracy improves significantly (\textbf{DS} $0.597 \!\to\! 0.944$, \textbf{DRS} $0.510 \!\to\! 0.742$), while both Brier score and NLL decrease (\textbf{RC} Brier $0.075 \!\to\! 0.039$, NLL $2.513 \!\to\! 0.487$).
Here, Brier measures the mean squared error between predicted probabilities and true one-hot labels, while NLL penalizes assigning low probability to the correct answer; both capture calibration quality. This improvement arises because raw logits are optimized for next-token prediction rather than calibrated posteriors, often leading to miscalibration and poor class separation. In contrast, the proxy leverages hidden states, which encode richer task-relevant signals, and is trained with a calibration-aware loss. As a result, it achieves both higher accuracy and better calibration.

\begin{figure}[!h]
\vspace{-3mm}
\centering
\includegraphics[width=\linewidth]{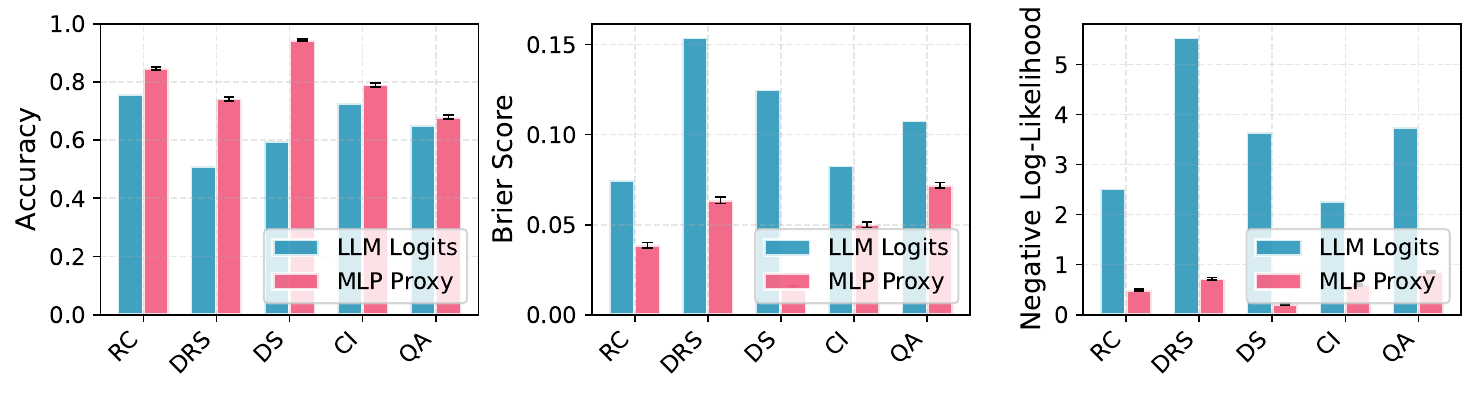}
\vspace{-5mm}
\caption{{The proxy improves both task performance and calibration consistently}. Evaluation of prediction accuracy and calibration metric using proxy model vs.\ raw LLM logits across datasets. Left: accuracy (higher is better). Middle: Brier (lower is better). Right: NLL (lower is better). }
\label{fig:an1-mlp-calibration}
\vspace{-2mm}
\end{figure}

\vspace{-10pt}
{\paragraph{Where Do the Gains Come From?}
\label{sec:ablation-no-proxy}
We ask how much of PA's improvement comes from learning a proxy and what paraphrases add during training and inference. Table~\ref{tab:factorized-attribution} first compares raw probabilities with proxies trained on either probability vectors or hidden states. \emph{Orig-only} uses one copy of each original training question; \emph{augmented} adds up to six meaning-preserving paraphrases per question, retaining its label. We then compare single-view and six-view inference using the same augmented proxy checkpoint, isolating the effect of aggregation.
The hidden-state proxy accounts for most of the compactness gain, reducing Semi SS from 2.8929 to 1.3725 without paraphrase training. Augmentation improves rewritten-input accuracy from 79.80\% to 80.38\% and reduces true-label score drift from 0.0596 to 0.0574. With this checkpoint fixed, six-view aggregation reduces set change by 15.8\% and Semi SS from 1.3587 to 1.3331, with coverage changing from 89.96\% to 89.81\%. Raw-probability averaging instead increases set change from 0.3561 to 0.5785. These comparisons distinguish proxy-driven compactness from the additional effects of paraphrase training and aggregation. Appendix~\ref{appendix:proxy-vs-pa} further controls for augmentation's larger training budget by repeating original questions to match training rows and optimizer updates.
}

\begin{table*}[t]{
\centering
\caption{{Factorized attribution on \texttt{Qwen3-8B} (five datasets,
three seeds; equal-weight mean over 15 dataset--seed results).
Rewrite Acc.\ is label accuracy on rewritten inputs. Semi CR and SS are
empirical coverage (\%) and average set size under split CP in the semi-reworded setting
at $\alpha=0.1$. Set change is the mean number of labels whose membership
differs between the prediction sets for an original input and its rewrite,
$|\widehat C(x)\triangle\widehat C(\mathcal T(x))|$.
The Full PA and augmented-proxy single-view rows use the same trained
proxy checkpoint; only inference-time aggregation differs.}}
\label{tab:factorized-attribution}
\small
\setlength{\tabcolsep}{4.5pt}
\begin{tabular}{@{}llrrrr@{}}
\toprule
Score and proxy training & Inference &
Rewrite Acc.\ $\uparrow$ & Semi CR & Semi SS $\downarrow$ &
Set change $\downarrow$ \\
\midrule
Raw probabilities; no proxy       & Single view       & 65.74 & 91.36 & 2.8929 & 0.3561 \\
Raw-probability proxy; Orig-only  & Single view       & 70.16 & 89.74 & 2.2032 & 0.2369 \\
Hidden-state proxy; Orig-only    & Single view       & 79.80 & 89.58 & 1.3725 & 0.2009 \\
Hidden-state proxy; augmented    & Single view       & 80.38 & 89.96 & 1.3587 & 0.2029 \\
Full PA; same augmented proxy     & Six-view $S_{\rm mean}$ & 80.34 & 89.81 & 1.3331 & 0.1708 \\
Raw probabilities; no proxy       & Six-view mean     & 65.53 & 86.11 & 2.0394 & 0.5785 \\
\bottomrule
\end{tabular}
\vspace{-1.5em}
}\end{table*}

\paragraph{Better Class-conditional Coverage.}
We group questions into semantic classes (or clusters) and evaluate class-conditional coverage by visualizing empirical coverage rates on different classes. As shown in Figure~\ref{fig:an2-classwise-condconf}, PA consistently tracks the $90\%$ target across different classes, while LAC and APS overshoot and the TRON / SPUQ baselines exhibit larger per-cluster swings.

\begin{figure}[!ht]
    \centering
    \includegraphics[width=0.4\textwidth]{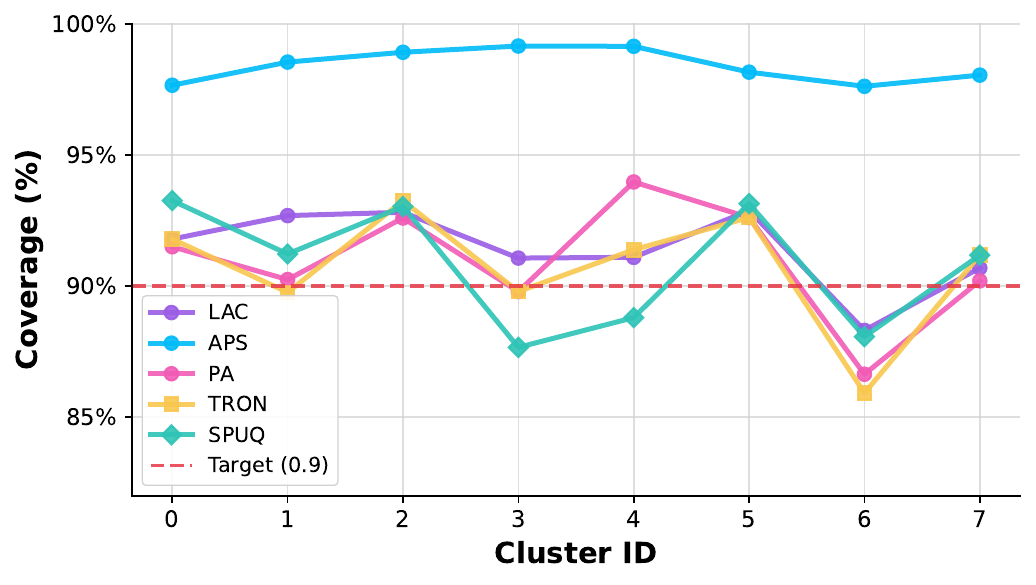}
    \includegraphics[width=0.4\textwidth]{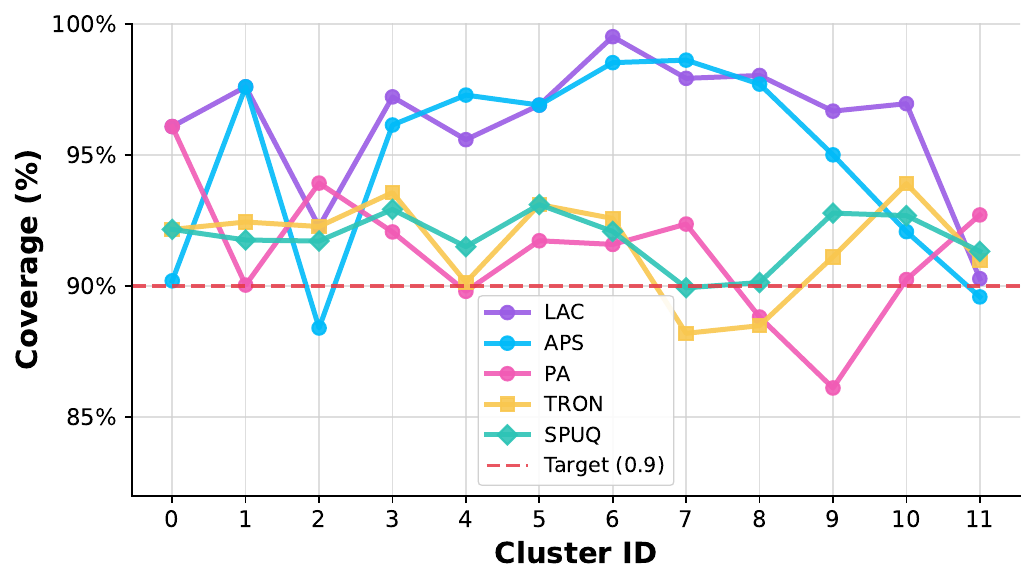}
    \vspace{-2mm}
    \caption{Evaluation of class-conditional coverage with QCCP on the CosmosQA dataset (left) and HaluDial dataset (right). Results on additional datasets are provided in Figure \ref{fig-appendix:an2-classwise-condconf-others}.}
    \label{fig:an2-classwise-condconf}
\end{figure}

\vspace{-5pt}
\textbf{Effect of Coverage Levels.}
To examine the effect of the level $\alpha$, we vary $\alpha \in \{0.2, 0.05, 0.01\}$ and evaluate both coverage and prediction set size again. As shown in Figure~\ref{fig:risk-level-results}, our PA score consistently tracks the nominal $(1-\alpha)$ target while maintaining the most compact sets. For example, at $\alpha=0.05$, it achieves $\approx 95\%$ coverage with an average set size of $2.9$, compared to LAC’s inflated $4.8$ and APS’s $4.7$. APS dramatically overshoots at $\alpha=0.2$ (95\% coverage with set size $4.5$ vs.\ an $80\%$ target), while LAC, TRON, and SPUQ track the target reasonably well at small $\alpha$ but produce strictly larger sets than PA at every level. Overall, PA consistently outperforms across different levels.

\begin{figure*}[t]
\centering
\vspace{-10pt}
\begin{minipage}{0.65\textwidth}
    \centering
    \begin{subfigure}[b]{0.49\linewidth}
        \includegraphics[width=\linewidth]{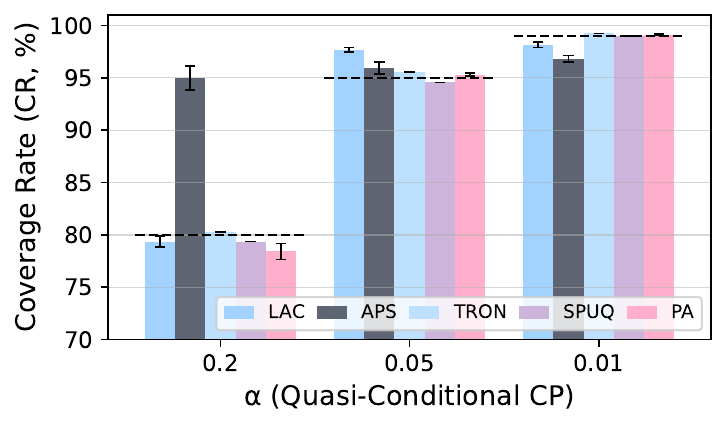}
    \end{subfigure}
    \hfill
    \begin{subfigure}[b]{0.49\linewidth}
        \includegraphics[width=\linewidth]{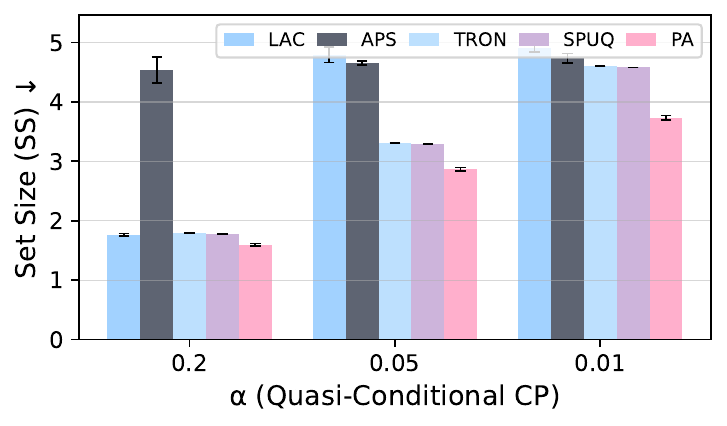}
    \end{subfigure}
    
    \vspace{-2mm}
    \caption{Coverage rate (left) and prediction set size (right) vs.\ different values of the level $\alpha$ on MMLU under QCCP.}
    \label{fig:risk-level-results}
\end{minipage}
\hfill
\begin{minipage}{0.34\textwidth}
    \centering
    \vspace{-10pt}
    \captionof{table}{Effect of paraphrase budget $m$ under split CP and QCCP.}
    \vspace{-2mm}
    \label{tab:ablation-paraphrase-budget}
    \sisetup{table-number-alignment=center,round-mode=places,round-precision=2}
    \resizebox{\textwidth}{!}{
    \begin{tabular}{lccc}
    \toprule
    & \textbf{$m{=}2$} & \textbf{$m{=}4$} & \textbf{$m{=}6$} \\
    \midrule
    \multicolumn{4}{l}{\textit{Coverage Rate (CR, \%)}}\\
    PA (marginal)    & 89.77 & 89.67 & \textbf{90.53} \\
    PA (conditional) & 90.40 & 90.40 & \textbf{90.93} \\
    \addlinespace[2pt]
    \multicolumn{4}{l}{\textit{Set Size (SS)}}\\
    PA (marginal)    & 1.92  & \textbf{1.86}  & 1.88 \\
    PA (conditional) & 2.19  & 2.16  & \textbf{2.15} \\
    \bottomrule
    \end{tabular}}
    \vspace{-2mm}
\end{minipage}
\vspace{-15pt}
\end{figure*}

\vspace{-10pt}
\paragraph{Larger Paraphrase Budgets Yield Smaller Prediction Sets.} 
We now study the effect of the paraphrase budget $m$ for \texttt{Qwen3-8B} \citep{yang2025qwen3} on the MMLU dataset. As shown in Table~\ref{tab:ablation-paraphrase-budget}, increasing $m \in \{2,4,6\}$ leads our PA score to achieve higher coverage and smaller set sizes. This is intuitive: more paraphrases provide richer semantic views of each question, which reduces variance in the aggregated score and enables more precise calibration.

\begin{figure}[ht]
    \centering
        \begin{subfigure}[t]{0.34\textwidth}
        \centering
        \includegraphics[width=\linewidth]{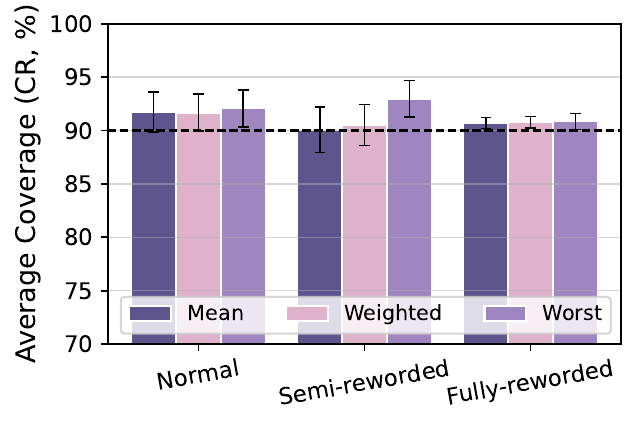}
        \label{fig:score-variant-cr}
    \end{subfigure}
    \hspace{-2mm}
    \begin{subfigure}[t]{0.34\textwidth}
        \centering
        \includegraphics[width=\linewidth]{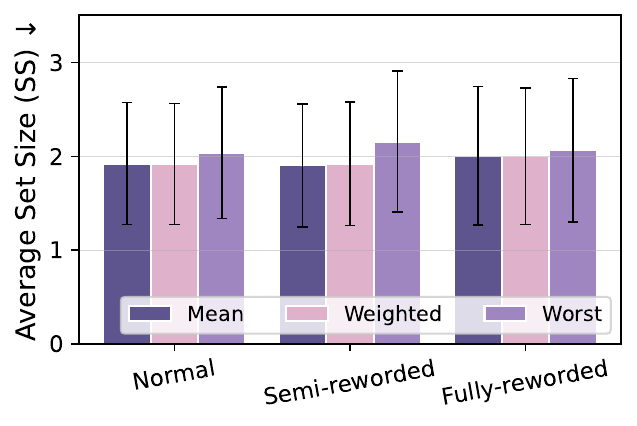}
        \label{fig:score-variant-ss}
    \end{subfigure}
    \vspace{-2mm}
        \caption{Comparison of different scores under QCCP.}
    \label{fig:score-variant-two-up}
    \vspace{-2mm}
    \end{figure}

\vspace{-10pt}
\paragraph{Comparison of Different Scores.}
Finally, we compare the three PA scores, namely Mean, Weighted, and Worst, introduced in Section~\ref{sec:scores}. We evaluate them under both split CP and QCCP, averaging across the seven datasets at each rewording setting (Normal, Semi-reworded, Fully-reworded). As shown in Figure~\ref{fig:score-variant-two-up}, all three variants track the $90\%$ coverage target across the three settings: under QCCP, Mean averages $91.7\% / 90.1\% / 90.7\%$ and Weighted $91.7\% / 90.6\% / 90.8\%$ on Normal / Semi-reworded / Fully-reworded prompts respectively, while Worst is mildly conservative at $92.1\% / 93.0\% / 90.9\%$. Mean and Weighted are nearly indistinguishable in practice (within $\approx 0.5$\,pp on coverage and $\approx 0.02$ on set size) and yield the most compact prediction sets (avg SS $\approx 1.9$–$2.0$); Worst trades a small coverage margin for $\approx 5$–$15\%$ larger sets (avg SS $\approx 2.0$–$2.2$). Split CP gives near-identical averages here, preserving the same relative ranking. Overall, the three variants span a smooth efficiency--conservatism tradeoff: Mean and Weighted are the tightest while still respecting the nominal target, and Worst supplies an additional coverage margin when extra robustness is desired.

% {Additional analyses of coverage levels, paraphrase budgets, and score variants are provided in Appendix~\ref{appendix:moved-main-analysis}.}

\section{Conclusion}

In this work, we introduced a framework for paraphrase-robust conformal prediction through paraphrase-aware nonconformity scores applied to both split CP and quasi-conditional CP. %
{Under the stated exchangeability conditions, our method inherits conformal coverage guarantees while producing compact prediction sets. The learned proxy drives most of the compactness gain, while paraphrase training and inference-time aggregation improve wording stability.} Experiments on five general QA and two medical QA benchmarks demonstrate robustness under rewording and strong generalization across model families. %
{These findings motivate further study of robust nonconformity score design across tasks and modalities, with preliminary claim-verification results and a vision--language extension discussed in Appendices~\ref{appendix:extension-to-other-task} and~\ref{appendix:extension-to-image-modality}.}

\begin{ack}
{This work was supported in part by grant R01-MH143267 and R01EB036016. The content is solely the responsibility of the authors and does not necessarily represent the official views of the NIH. We thank the anonymous reviewers for their constructive feedback. We also thank Jiancong Xiao and various people at the student paper swap event (Weiqiu You, Helen Jin, Seewon Choi, and others) for their helpful comments and discussions.}
\end{ack}

\bibliography{references}
\bibliographystyle{plainnat} 

\newpage
\appendix

\begingroup
\makeatletter
\let\oldpartname\partname
\let\oldthepart\thepart
\renewcommand{\partname}{}
\renewcommand{\thepart}{}
\part{Appendix}
\let\partname\oldpartname
\let\thepart\oldthepart
\makeatother
\endgroup

\parttoc

\section{{Detailed Related Work}}
\label{sec:related}

\paragraph{Heuristic and Calibration-Based Uncertainty for LLMs.}
LLM uncertainty is often estimated from token-level signals such as entropy, logits, or ranks, followed by calibration. Early work studies calibration in deep models and NLP tasks \citep{nado2022uncertainty,ulmer2022exploring,si2022reexamining}, and extends to prompt- and generation-level schemes for long-form outputs \citep{band2024linguistic,huang2024calibrating}. More recent approaches replace raw probabilities with representation-based surrogates, including semantic entropy probes \citep{kossen2024semantic}, relevance-aware confidence \citep{duan2024shifting}, perturbation-based measures \citep{gao2024spuq}, multi-agent diversity signals \citep{wang2024sample}, and semantic-density metrics \citep{qiu2025semantic}. 
Complementary directions include abstention mechanisms \citep{madhusudhan2024do}, multicalibration of confidence scores \citep{detommaso2024multicalibration}, post-hoc calibration from generated text \citep{ulmer2024calibrating}, and parameter-efficient Bayesian or ensemble-style methods for fine-tuned LLMs \citep{balabanov2025uncertainty,wang2025blob}. %
{While perturbation-based methods such as SPUQ explicitly aggregate across input variations, these methods address instance-level uncertainty. We study label-wise scores and prediction-set stability within CP, whose coverage guarantee additionally requires exchangeability.}

\paragraph{Conformal Prediction for LLMs.}
Conformal prediction (CP) provides distribution-free prediction sets with finite-sample coverage under exchangeability \citep{vovk2005algorithmic,shafer2008tutorial}. Recent work adapts CP to language modeling in several directions, including conformal language modeling \citep{quach2024conformal,quach2024conformal}, API-only inference without logit access \citep{su2024api}, and methods tailored to multiple-choice settings \citep{kumar2023conformal,yang2025conformalSetsMCQ}. CP has also been used for output alignment and certification \citep{gui2024conformal}, as well as benchmarking LLMs using uncertainty-aware metrics \citep{ye2024benchmarking}.
Extensions to generation tasks include \texttt{ConU}, which applies split CP to sets of sampled responses \citep{wang2024conu}, and \texttt{SConU}, which studies exchangeability violations and approximates conditional guarantees \citep{wang2025sconu}. Other approaches combine CP with re-asking strategies to improve accuracy and efficiency \citep{vishwakarma2025prunenpredict}, while selective answering with risk control has been explored via conformal abstention \citep{tayebati2025learning,wang2025coin}. 
In contrast to these works, our approach focuses on \emph{semantic invariance}: we develop paraphrase-aware scores that capture stability under meaning-preserving transformations, and integrate them with both split conformal prediction and quasi-conditional calibration on semantic embeddings, going beyond standard split CP.

\paragraph{Toward Conditional Guarantees and Paraphrase Robustness.}
Exact conditional coverage is unattainable in finite samples without distributional assumptions \citep{vovk2012conditional,foygel2021limits}, motivating relaxations such as \emph{quasi-conditional} guarantees via augmented quantile regression \citep{gibbs2023conformal}. Recent work extends CP to leverage feature-conditional structure in LLM settings \citep{cherian2024large}. 
In parallel, another line of work studies \emph{semantic uncertainty}, introducing embedding-based metrics and perturbation procedures \citep{gao2024spuq,kossen2024semantic,huang2024rankcalibration}. Related approaches examine prompt sensitivity and meaning-preserving perturbations \citep{qiu2025semantic,cox2025mapping}, but do not provide conformal guarantees. 
Our method bridges these directions by (i) quantifying predictive stability across paraphrases using semantic embeddings and a proxy model, and (ii) calibrating these scores within the conformal prediction framework, yielding prediction sets that are both semantically robust and statistically valid.

\section{Brief Introduction to the Conformal Framework}
\label{append:QCCP}

\paragraph{Theoretical guarantee of Split CP.}
We briefly review the standard coverage guarantee of split conformal prediction (CP), which serves as the foundation for our method. This result establishes finite-sample, distribution-free coverage under exchangeability.

\begin{lem}[\citep{vovk2005algorithmic}]
\label{lem:CP}
Assume ${(X_i,Y_i)}_{i=1}^{n+1}$ are i.i.d. and that $S(X,Y)$ has a continuous distribution.
The prediction set from split CP is
\[
\widehat{C}_{\mathrm{CP}}(X_{n+1}) := \{y: S(X_{n+1}, y) \le  Q_{\alpha}(\{S(X_{i}, Y_{i})\}_{i=1}^n )\}
\]
where $Q_{\alpha}(\{S(X_{i}, Y_{i})\}_{i=1}^n)$ is the empirical $(1-\alpha)$ quantile of $\{S(X_i,Y_i)\}_{i=1}^n \cup \{\infty\}$.  
Then, it achieves marginal coverage:  
\begin{equation}
\label{eq:theory}
1-\alpha \;\le\; {\mathbb P}(Y_{n+1} \in \widehat{C}_{\mathrm{CP}}(X_{n+1})) \;\le\; 1-\alpha + \frac{1}{n+1}.
\end{equation} 
\end{lem}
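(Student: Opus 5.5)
The plan is to reduce the coverage event to a statement about the rank of the test score among all $n+1$ scores, and then use exchangeability to show that this rank is uniformly distributed.

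\emph{Setup.} Write $S_i := S(X_i,Y_i)$ for $i=1,\dots,n+1$. Since $\{(X_i,Y_i)\}_{i=1}^{n+1}$ are i.i.d. and $S$ is a fixed function, the scores $S_1,\dots,S_{n+1}$ are i.i.d., hence exchangeable. Because $S(X,Y)$ has a continuous distribution, the scores are almost surely pairwise distinct.

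\emph{Step 1: express the quantile as an order statistic.} Let $k := \lceil (1-\alpha)(n+1)\rceil$. By definition of the empirical $(1-\alpha)$ quantile of $\{S_i\}_{i=1}^n\cup\{\infty\}$, the threshold $Q_\alpha$ is the $k$-th smallest element of that augmented multiset. If $k \le n$, this is the order statistic $S_{(k)}$ of the calibration scores. If $k = n+1$, then $Q_\alpha = \infty$.

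\emph{Step 2: the rank equivalence.} By construction of $\widehat C_{\mathrm{CP}}$, the event $\{Y_{n+1}\in \widehat C_{\mathrm{CP}}(X_{n+1})\}$ equals $\{S_{n+1}\le Q_\alpha\}$. I will show that, on the almost-sure event of no ties and for $k\le n$,
\[
S_{n+1}\le S_{(k)} \iff R_{n+1}\le k,
\]
where $R_{n+1}$ denotes the rank of $S_{n+1}$ among $S_1,\dots,S_{n+1}$.

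For the forward direction, if $S_{n+1}\le S_{(k)}$, then at most $k-1$ calibration scores lie strictly below $S_{n+1}$, so $R_{n+1}\le k$. For the converse, if $R_{n+1}\le k$, then fewer than $k$ calibration scores lie below $S_{n+1}$, so $S_{(k)} > S_{n+1}$. This combinatorial equivalence is the only step needing care; its main subtlety is the bookkeeping between "$k$-th smallest of the $n$ calibration scores plus $\infty$" and "rank among $n+1$ scores". Ties are excluded precisely by the continuity assumption.

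\emph{Step 3: uniform rank and the bounds.} By exchangeability and the absence of ties, $R_{n+1}$ is uniformly distributed on $\{1,\dots,n+1\}$. Hence, when $k\le n$,
\[
\mathbb P\big(Y_{n+1}\in\widehat C_{\mathrm{CP}}(X_{n+1})\big) = \frac{k}{n+1}.
\]
Since $(1-\alpha)(n+1)\le k < (1-\alpha)(n+1)+1$, this gives exactly the two-sided bound \eqref{eq:theory}.

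\emph{Edge case $k=n+1$.} Here $Q_\alpha=\infty$, so coverage equals $1$. This case occurs only when $(1-\alpha)(n+1) > n$, i.e.\ $\alpha < 1/(n+1)$. Then $1 \le 1-\alpha+\tfrac{1}{n+1}$, so the upper bound in \eqref{eq:theory} still holds, and the lower bound is trivial.
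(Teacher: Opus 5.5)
Your proof is correct. The paper does not reprove Lemma~\ref{lem:CP} but cites \citet{vovk2005algorithmic}, and your argument is exactly the standard proof behind that citation: identify the threshold with the $\lceil (1-\alpha)(n+1)\rceil$-th smallest of the augmented scores, reduce coverage to the event $R_{n+1}\le k$, and use exchangeability together with the almost-sure absence of ties to get $R_{n+1}$ uniform on $\{1,\dots,n+1\}$. Your treatment of the edge case $k=n+1$ is also fine, and the formula $\mathbb P(R_{n+1}\le k)=k/(n+1)$ in fact covers it as well.
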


\paragraph{Theoretical Guarantee of Quasi-Conditional CP.}
Lemma~\ref{lem:QCCP} establishes the quasi-conditional coverage guarantee for QCCP. This guarantee is strictly stronger than the marginal coverage of split CP (which corresponds to the case where $\mathcal{F}$ contains only constant functions) and strictly weaker than full conditional coverage (which would require $\mathcal{F}$ to be the class of all measurable functions), thereby providing a principled middle ground.

In QCCP, the prediction set is constructed using an input-dependent threshold function $\widehat{g}_S$ learned from calibration scores via quantile regression (cf.\ \eqref{eq:augqr}).
We emphasize that $\widehat{g}_S$ depends on the choice of nonconformity score $S$, as it is trained on the induced calibration scores $\{S(X_i, Y_i)\}_{i=1}^n$. 
{For each candidate score value $s$, the fitted function $\widehat g_s:\mathcal X\to\mathbb R$ is evaluated at $X_{n+1}$; its dependence on the candidate label $y$ is through $s=S(X_{n+1},y)$.}
In practice, computing $\widehat{g}_S$ reduces to a convex optimization; see Section~4 of \citet{gibbs2023conformal} for details.

\begin{lem}[Theorem 2 in \citep{gibbs2023conformal}]
\label{lem:QCCP}
Let $\mathcal{F} = \{\phi(\cdot)^\top \beta: \beta \in \mathbb{R}^d\}$ denote the class of linear functions over the feature map $\phi: \mathcal{X} \to \mathbb{R}^d$.
Assume that $\{(X_i, Y_i)\}_{i=1}^{n+1}$ are i.i.d., and that $S(X,Y)\mid X$ has a continuous distribution. 
Define the prediction set as
\begin{equation*}
\tag{\ref{eq:predset}}
\widehat{C}_{\mathrm{QCCP}}(X_{n+1}) := \left\{ y : S(X_{n+1}, y) \le \widehat{g}_{S(X_{n+1}, y)}{(X_{n+1})} \right\},
\end{equation*}
where $\widehat{g}_{S(X_{n+1}, y)}$ is the fitted quantile regressor obtained by solving \eqref{eq:augqr} for a given value of $S(X_{n+1}, y)$ and $\ell_\alpha(\theta, S) = (S - \theta)\,(\mathbf{1}_{\{S \ge \theta\}} - \alpha)$ denoting the pinball loss.
Then, for any $f \in \mathcal{F}$,
\begin{align*}
\left|\mathbb{E}\!\left[ f(X_{n+1}) \left(\mathbf{1}\{Y_{n+1} \in \widehat{C}_{\mathrm{QCCP}}(X_{n+1})\} - (1-\alpha)\right) \right]\right| \le \frac{d}{n+1} \cdot \mathbb{E}\!\left[\max_{1 \le i \le n+1} |f(X_i)|\right].
\end{align*}
\end{lem}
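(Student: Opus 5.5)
The plan is the standard argument of \citet{gibbs2023conformal}: combine the first-order optimality conditions of the augmented quantile regression \eqref{eq:augqr} with the exchangeability of the $n+1$ data points. Write $S_i = S(X_i,Y_i)$ for $i=1,\dots,n+1$, and let $\widehat g := \widehat g_{S_{n+1}}$ be the augmented fit evaluated at the \emph{true} test score. By definition \eqref{eq:predset}, $Y_{n+1}\in\widehat C_{\mathrm{QCCP}}(X_{n+1})$ if and only if $S_{n+1}\le \widehat g(X_{n+1})$. So the monotonicity of $s\mapsto \widehat g_s$ never enters, and coverage becomes an event about a single fit that treats all $n+1$ points symmetrically. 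I would fix a deterministic, permutation-invariant selection rule among minimizers (e.g.\ a canonical vertex of the linear program). This makes $\widehat g$ a symmetric function of the multiset $\{(X_i,S_i)\}_{i=1}^{n+1}$.

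\textbf{Step 1 (subgradient condition).} The objective $\beta\mapsto\sum_{i=1}^{n+1}\ell_\alpha(\phi(X_i)^\top\beta,S_i)$ is convex and piecewise linear. The subdifferential of $\theta\mapsto\ell_\alpha(\theta,s)$ is:
\begin{itemize}
\item $\{\alpha\}$ if $s<\theta$;
\item $\{-(1-\alpha)\}$ if $s>\theta$;
\item $[-(1-\alpha),\alpha]$ if $s=\theta$.
\end{itemize}
Optimality of $\widehat\beta$ therefore gives weights $\eta_i\in[-(1-\alpha),\alpha]$ such that
\[
\sum_{i=1}^{n+1}\phi(X_i)\Bigl(\alpha\mathbf 1\{S_i<\widehat g(X_i)\}-(1-\alpha)\mathbf 1\{S_i>\widehat g(X_i)\}+\eta_i\mathbf 1\{S_i=\widehat g(X_i)\}\Bigr)=0 .
\]
Taking the inner product with $\beta$ for any $f=\phi^\top\beta\in\mathcal F$ and rearranging, I obtain
\[
\Bigl|\sum_{i=1}^{n+1} f(X_i)\bigl(\mathbf 1\{S_i\le \widehat g(X_i)\}-(1-\alpha)\bigr)\Bigr|\;\le\;\sum_{i:\,S_i=\widehat g(X_i)}|f(X_i)| .
\]
This holds because on non-interpolated points the summand exactly matches the indicator form, and each interpolated point contributes a term of absolute value at most $|f(X_i)|$.

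\textbf{Step 2 (few interpolated points).} I would show that almost surely at most $d$ indices satisfy $S_i=\widehat g(X_i)$, giving the bound $d\max_i|f(X_i)|$. The selected vertex solution is determined by some set of at most $d$ interpolation equations. Having a further $(d+1)$-th point land exactly on that hyperplane requires $S_j$ to equal a value determined by the other points and by $X_j$. Since $S\mid X$ is continuous and the pairs are independent, this has probability zero; conditioning on all other data and $X_j$ and applying a union bound over the finitely many index subsets finishes the argument. I expect this to be the main obstacle. Points to handle carefully are non-uniqueness of the minimizer, rank-deficient $\phi$ (which only reduces the count), and ensuring the selection rule stays symmetric.

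\textbf{Step 3 (exchangeability).} Because $\widehat g$ is a symmetric function of the i.i.d.\ sample, the random variables $f(X_i)\bigl(\mathbf 1\{S_i\le\widehat g(X_i)\}-(1-\alpha)\bigr)$ are exchangeable across $i$. Hence the expectation of the test term equals $\frac1{n+1}$ times the expectation of the full sum. Taking expectations in Step 1 and applying Step 2 then yields
\[
\Bigl|\mathbb E\bigl[f(X_{n+1})(\mathbf 1\{Y_{n+1}\in\widehat C_{\mathrm{QCCP}}(X_{n+1})\}-(1-\alpha))\bigr]\Bigr|\le\frac{d}{n+1}\,\mathbb E\bigl[\max_{1\le i\le n+1}|f(X_i)|\bigr],
\]
which is the claim of Lemma~\ref{lem:QCCP}.
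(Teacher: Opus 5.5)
Your proof is correct. It is the standard argument behind Theorem 2 of \citet{gibbs2023conformal}: first-order conditions of the augmented fit evaluated at the true test score, at most $d$ interpolated points almost surely by continuity, and exchangeability under a symmetric fit. The paper does not reprove this lemma; it only cites that theorem. One simplification: the general-position argument in Step 2 holds for every $\beta$ simultaneously, since any $d+1$ interpolated points force a relation $S_j=\sum_k c_k S_k$ with coefficients depending only on the features. You therefore do not need a vertex selection, and any symmetric measurable choice of minimizer (e.g.\ the minimum-norm one) works, including when $\phi$ is rank-deficient.
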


\begin{algorithm}[t]
\caption{Paraphrase-Robust Quasi-Conditional CP}
\label{alg:full}
\begin{algorithmic}[1]
\STATE \textbf{Input:} Proper training set $\mathcal D_{\mathrm{tr}}=\{(X_i^{\mathrm{tr}},Y_i^{\mathrm{tr}})\}_{i=1}^{n_{\mathrm{tr}}}$; calibration set $\mathcal D_{\mathrm{cal}}=\{(X_i^{\mathrm{cal}},Y_i^{\mathrm{cal}})\}_{i=1}^{n_{\mathrm{cal}}}$; paraphrase generator; embedding map $E$; proxy model $P_\theta$; class function $\phi$; confidence level $\alpha$; PA score rule $S$.
\STATE \textbf{Output:} Prediction set $\widehat C(x)$ for a test question $x$.

\STATE \textbf{Train proxy:} Fit $P_\theta$ only on $\mathcal D_{\mathrm{tr}}$ using embeddings $\{(E(X_i^{\mathrm{tr}}),Y_i^{\mathrm{tr}})\}_{i=1}^{n_{\mathrm{tr}}}$.

\STATE \textbf{Compute calibration scores:} For each $(X_i^{\mathrm{cal}},Y_i^{\mathrm{cal}})\in\mathcal D_{\mathrm{cal}}$, generate paraphrases, compute the PA score
$S_i \gets S(X_i^{\mathrm{cal}},Y_i^{\mathrm{cal}})$ using the fixed proxy $P_\theta$, and record class $\phi(X_i^{\mathrm{cal}})$.

\STATE \textbf{Fit conformal threshold:} Use $\{(S_i,\phi(X_i^{\mathrm{cal}}))\}_{i=1}^{n_{\mathrm{cal}}}$ to fit the QCCP quantile function $\widehat g_S$; for split CP, compute the global conformal quantile.

\FOR{each test question $x$}
  \STATE $\widehat C(x) \gets \emptyset$
  \STATE Generate a paraphrase set $\mathcal{B}(x)$ and compute embeddings $\{E(x')\}_{x' \in \mathcal{B}(x)}$
  \STATE $z \gets \phi(x)$
  \FOR{each $y \in \mathcal{Y}$}
    \STATE Compute the PA score $S(x,y)$ using the fixed proxy $P_\theta$
    \STATE Query $\widehat g_z$ for QCCP, or the global quantile for split CP
    \IF{$S(x,y) \le \widehat g_z(x)$}
      \STATE $\widehat C(x) \gets \widehat C(x) \cup \{y\}$
    \ENDIF
  \ENDFOR
  \STATE \textbf{return} $\widehat C(x)$
\ENDFOR
\end{algorithmic}
\end{algorithm}

\section{Coverage Guarantees: Statements and Proofs}
\label{append:coverage}

\subsection{Proofs for Score Exchangeability}

\begin{prop}[Restatement of Proposition \ref{prop:new-exchangability}]
\label{prop:append-new-exchangability}
Suppose Asm.~\ref{asmp:main} holds, and let $\{(X_i, Y_i)\}_{i=1}^{n+1}$ be an exchangeable (or i.i.d.) dataset.
$(\mathcal{T}(X_{n+1}), Y_{n+1})$ might not be exchangeable (or i.i.d.)  with ${(X_i, Y_i)}_{i=1}^{n}$ due to the semantic shift induced by $\mathcal{T}$.
However, applying $\mathcal{T}$ to all inputs restores exchangeability (or i.i.d.), i.e., $\bigl(\mathcal{T} \circ \mathcal{T}(X_{n+1}), Y_{n+1}\bigr)$ is exchangeable (or i.i.d.) with 
$\bigl\{(\mathcal{T}(X_i), Y_i)\bigr\}_{i=1}^n$.
\end{prop}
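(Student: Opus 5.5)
The plan is to show that applying $\mathcal{T}$ twice has the same joint law as applying it once. Exchangeability then transfers immediately. Throughout, I model $\mathcal{T}$ as acting independently on each input: given the data, $\mathcal{T}(X_i)$ is drawn from $\mathcal{T}(\cdot\mid X_i)$ using fresh independent randomness for each $i$. Likewise $\mathcal{T}\circ\mathcal{T}(X_{n+1})$ is obtained by drawing $X'\sim\mathcal{T}(\cdot\mid X_{n+1})$ and then $X''\sim\mathcal{T}(\cdot\mid X')$.

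\textbf{Key step: $\mathcal{T}^2=\mathcal{T}$ as kernels.} For fixed $x$ and measurable $A\subseteq\mathcal{X}$,
\[
\mathcal{T}^2(A\mid x)=\int \mathcal{T}(A\mid x')\,\mathcal{T}(dx'\mid x).
\]
By Asm.~\ref{asmp:main}, $\mathcal{T}(A\mid x')=\mathcal{T}(A\mid x)$ for $\mathcal{T}(\cdot\mid x)$-almost every $x'$. The integrand is therefore a.s.\ constant, so $\mathcal{T}^2(A\mid x)=\mathcal{T}(A\mid x)$.

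The subtle point is the null set. It depends on $A$, and the closedness assumption as stated is already "for a.e.\ $x'$, the two measures coincide," so it should be read as a statement about measures, not about each $A$ separately. Under that reading no countable-generation argument is needed. Consequently, conditional on $(X_{n+1},Y_{n+1})$, the laws of $\mathcal{T}\circ\mathcal{T}(X_{n+1})$ and $\mathcal{T}(X_{n+1})$ coincide. This gives $(\mathcal{T}\circ\mathcal{T}(X_{n+1}),Y_{n+1})\overset{d}{=}(\mathcal{T}(X_{n+1}),Y_{n+1})$.

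\textbf{Transfer to the joint law.} The joint law of $\{(\mathcal{T}(X_i),Y_i)\}_{i=1}^{n}\cup\{(\mathcal{T}\circ\mathcal{T}(X_{n+1}),Y_{n+1})\}$ is obtained from the law of $\{(X_i,Y_i)\}_{i=1}^{n+1}$ by applying the product kernel $\mathcal{T}^{\otimes n}\otimes\mathcal{T}^2$, each factor acting on the input coordinate. By the key step this product kernel equals $\mathcal{T}^{\otimes(n+1)}$. So the collection has the same joint law as $\{(\mathcal{T}(X_i),Y_i)\}_{i=1}^{n+1}$.

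This last collection is exchangeable. Applying the same kernel coordinatewise commutes with permutations, and the law of $\{(X_i,Y_i)\}_{i=1}^{n+1}$ is permutation invariant. In the i.i.d.\ case the coordinates of $\{(\mathcal{T}(X_i),Y_i)\}$ are independent with common law $\int \mathcal{T}(\cdot\mid x)\,P(dx,dy)$, so they are i.i.d.

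\textbf{Negative part.} For the "may not be exchangeable" claim, a counterexample suffices. Take $\mathcal{X}=\{a,b\}$, let $X\equiv a$, and let $\mathcal{T}(\cdot\mid x)=\delta_b$ for every $x$. This $\mathcal{T}$ is closed, yet $\mathcal{T}(X_{n+1})=b$ while $X_i=a$, so the test point does not share the calibration law.

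I expect the main obstacle to be the measure-theoretic handling of the almost-everywhere statement in the key step, together with making the independence of paraphrase randomness explicit. The remaining steps are routine.
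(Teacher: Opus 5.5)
Your proposal is correct and follows essentially the same route as the paper's proof: Assumption~\ref{asmp:main} gives $\int \mathcal{T}(A\mid x')\,\mathcal{T}(dx'\mid x)=\mathcal{T}(A\mid x)$, so the two-step kernel equals $\mathcal{T}$, and with fresh, conditionally independent randomness the last coordinate can be swapped without changing the joint law of the exchangeable collection $\{(\mathcal{T}(X_i),Y_i)\}_{i=1}^{n+1}$. Your counterexample for the negative part and your direct check of the i.i.d.\ case go slightly beyond the paper, which only asserts ``may not'' and calls the i.i.d.\ case a special case; your null-set concern is also harmless, because for each fixed $A$ the integrand only needs to agree $\mathcal{T}(\cdot\mid x)$-a.e.
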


\begin{proof}[Proof of Proposition \ref{prop:append-new-exchangability}]
Let $Z_i=(X_i,Y_i)$. Exchangeability means that $(Z_{\pi(1)},\ldots,Z_{\pi(n+1)})\overset{d}{=}(Z_1,\ldots,Z_{n+1})
$ for any permutation $\pi$.
{All draws from Markov kernels in this proof are taken to be conditionally independent across coordinates given the original data.}
Applying the Markov kernel $\mathcal{T}$ independently to each input coordinate preserves exchangeability. Hence, $\{(\mathcal{T}(X_i),Y_i)\}_{i=1}^{n+1}$ is exchangeable.

It remains to relate the last transformed sample to the semi-reworded test sample.
{Let $U_{n+1}\sim \mathcal{T}(\cdot\mid X_{n+1})$ denote the one-step transformed last coordinate, and let $U'_{n+1}\sim \mathcal{T}(\cdot\mid X_{n+1})$, $V_{n+1}\sim \mathcal{T}(\cdot\mid U'_{n+1})$ denote the two-step transformed last coordinate, using fresh randomness.}
By Asm.~\ref{asmp:main}, for any $x$,
\[
\int \mathcal{T}(A\mid x')\mathcal{T}(dx'\mid x)
=
\mathcal{T}(A\mid x)
\]
for every measurable set $A\subseteq\mathcal{X}$. Thus the two-step kernel $\mathcal{T}\circ\mathcal{T}$ equals $\mathcal{T}$ in distribution:
\[
(\mathcal{T}\circ\mathcal{T})(\cdot\mid x)=\mathcal{T}(\cdot\mid x).
\]
Consequently, conditional on $X_{n+1}$, $\mathcal{T}\circ\mathcal{T}(X_{n+1}) \overset{d}{=} \mathcal{T}(X_{n+1}).$
{Moreover, this replacement does not change the joint law because $V_{n+1}$ has the same conditional distribution as $U_{n+1}$ given $X_{n+1}$ and is generated independently of the remaining transformed coordinates conditional on the original data.}
Therefore, $\bigl\{(\mathcal{T}(X_i),Y_i)\bigr\}_{i=1}^n,
\bigl(\mathcal{T}\circ\mathcal{T}(X_{n+1}),Y_{n+1}\bigr)$ has the same joint law as $\bigl\{(\mathcal{T}(X_i),Y_i)\bigr\}_{i=1}^{n+1}$, which is exchangeable. Hence, $\bigl(\mathcal{T}\circ\mathcal{T}(X_{n+1}),Y_{n+1}\bigr)$ is exchangeable with
$\{(\mathcal{T}(X_i),Y_i)\}_{i=1}^n$.

The i.i.d. case follows as a special case of exchangeability.
\end{proof}

\paragraph{Notations.}

Fix any data point $(X,Y)$ from either the calibration set or the test set.  
Our PA score first generates $m$ paraphrased versions of $X$ using a language model.  
Let $\mathcal{T}(\cdot \mid X)$ denote the distribution of a single paraphrase of $X$, and let
\[
Z^{(1)},\ldots,Z^{(m)} \stackrel{\mathrm{i.i.d.}}{\sim} \mathcal{T}(\cdot \mid X).
\]
{
More precisely, for any measurable set $A\subseteq \mathcal{X}^m$, let $\mathcal{T}^{\otimes m}(A\mid x)$ denote the product law of $m$ conditionally independent draws from $\mathcal{T}(\cdot\mid x)$. All paraphrase draws are conditionally independent across data points and across paraphrase indices, given the original data.
}
After paraphrasing, the pair $(X,Y)$ is augmented into $(Z^{(1)},\ldots,Z^{(m)},Y)$.  
Since the calibrated predictor $P_\theta$ is trained on an unrelated dataset, it is treated as fixed throughout.
Define the augmented feature
\[
\widetilde{X} := (Z^{(1)},\ldots,Z^{(m)}),
\qquad 
\widetilde{\mathcal{T}}(X) := \widetilde{X}.
\]
Therefore, the PA score can be written as
\begin{align}
\label{eq:score-on-aug}
S_{\mathrm{PA}}(X,Y) 
&\;=\; S(Z^{(1)},\ldots,Z^{(m)},Y)\;=\;  S(\widetilde{\mathcal{T}}(X),Y) = S(\widetilde{X} ,Y) 
\end{align} 
for some deterministic computation rule $S$.
This notation highlights that the PA score simply operates on the augmented data $(\widetilde{X},Y)$.
{
The exact semi-reworded guarantee below applies to PA scores whose final value is a measurable function only of the internally generated paraphrase collection and the label. This includes the mean and worst-case PA scores. It also includes a weighted PA score if its weights are functions only of the generated paraphrase collection; if the weights additionally depend on the original input as an anchor, then an additional joint-invariance assumption is required.
}

\begin{lem}[Exchangeability for augmented data]
\label{lem:append-score}
Let $\{(X_i, Y_i)\}_{i=1}^{n+1}$ be an i.i.d. dataset and $\{(\widetilde{\mathcal{T}}(X_i), Y_i)\}_{i=1}^{n+1}$ be the augmented data defined above.
It follows that
\begin{itemize}[leftmargin=*]
\item \textbf{Normal setting:}  
$(\widetilde{\mathcal{T}}(X_{n+1}), Y_{n+1})$ is i.i.d. with  
$\{(\widetilde{\mathcal{T}}(X_i), Y_i)\}_{i=1}^n$.
\item \textbf{Fully reworded setting:}  
$(\widetilde{\mathcal{T}}(\mathcal{T}(X_{n+1})), Y_{n+1})$ is i.i.d. with $\{(\widetilde{\mathcal{T}}(\mathcal{T}(X_i)), Y_i)\}_{i=1}^n$.
\item \textbf{Semi-reworded setting:}  
Under Asm.~\ref{asmp:main},  
$(\widetilde{\mathcal{T}}(\mathcal{T}(X_{n+1})), Y_{n+1})$ is i.i.d. with  
$\{(\widetilde{\mathcal{T}}(X_i), Y_i)\}_{i=1}^n$.
\end{itemize}
\end{lem}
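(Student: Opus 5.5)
The argument reduces each of the three settings to a statement about i.i.d.\ pairs pushed through a fixed Markov kernel. The key fact is this: if $\{(X_i,Y_i)\}_{i=1}^{n+1}$ are i.i.d.\ and $K$ is any Markov kernel from $\mathcal{X}$ to some space $\mathcal{W}$, applied with conditionally independent randomness across data points, then the pairs $\{(W_i,Y_i)\}_{i=1}^{n+1}$ with $W_i\sim K(\cdot\mid X_i)$ are i.i.d. This holds because the joint law factorizes as $\prod_i \int K(dw_i\mid x_i)\,P_{XY}(dx_i,dy_i)$, each factor being the same law on $\mathcal{W}\times\mathcal{Y}$. This is the same observation used in the proof of Proposition~\ref{prop:append-new-exchangability}, specialized from exchangeability to i.i.d.

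For the \textbf{normal setting}, take $K=\mathcal{T}^{\otimes m}$, so that $W_i=\widetilde{\mathcal{T}}(X_i)$; the fact gives the claim directly. For the \textbf{fully reworded setting}, take the composite kernel $K(\cdot\mid x)=\int \mathcal{T}^{\otimes m}(\cdot\mid x')\,\mathcal{T}(dx'\mid x)$. Here fresh randomness is used for the rewording and for the paraphrase draws, independently across $i$. The same fact again applies, and no use of Asm.~\ref{asmp:main} is needed.

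The \textbf{semi-reworded setting} is the substantive case. Its calibration pairs use $K_1=\mathcal{T}^{\otimes m}$, while its test pair uses $K_2=\mathcal{T}^{\otimes m}\circ\mathcal{T}$. The plan is to show $K_2=K_1$ as kernels under Asm.~\ref{asmp:main}; the problem then reduces to the normal setting. Fix $x$ and a measurable rectangle $A_1\times\cdots\times A_m$. Then
\[
K_2(A_1\times\cdots\times A_m\mid x)=\int \prod_{j=1}^m \mathcal{T}(A_j\mid x')\,\mathcal{T}(dx'\mid x).
\]
By distributional closedness, $\mathcal{T}(\cdot\mid x')=\mathcal{T}(\cdot\mid x)$ for $\mathcal{T}(\cdot\mid x)$-a.e.\ $x'$. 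The integrand is therefore a.e.\ equal to the constant $\prod_j\mathcal{T}(A_j\mid x)$, and the integral equals $\mathcal{T}^{\otimes m}(A_1\times\cdots\times A_m\mid x)$. Rectangles form a $\pi$-system generating the product $\sigma$-algebra on $\mathcal{X}^m$, so by the $\pi$–$\lambda$ theorem $K_2(\cdot\mid x)=K_1(\cdot\mid x)$ for every $x$.

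It follows that $(\widetilde{\mathcal{T}}(\mathcal{T}(X_{n+1})),Y_{n+1})$ has the same law as $(\widetilde{\mathcal{T}}(X_{n+1}),Y_{n+1})$. It is also independent of the calibration augmented pairs, since its randomness is fresh and $(X_{n+1},Y_{n+1})$ is independent of the other data points. The normal-setting result then yields the i.i.d.\ conclusion.

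The main obstacle is this semi-reworded step, specifically the passage from the single-draw identity $\mathcal{T}\circ\mathcal{T}=\mathcal{T}$ (already in Proposition~\ref{prop:append-new-exchangability}) to the $m$-fold product. The product does not factor through the one-step identity, because all $m$ paraphrases share the same intermediate $x'$. The resolution is that closedness holds for \emph{almost every} $x'$ as an identity of whole kernels, not merely on average; that is why the rectangle computation above goes through.

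I would then remark that the score statement in Lemma~\ref{lem:score} follows by applying the measurable map $(\widetilde x,y)\mapsto S(\widetilde x,y)$ from \eqref{eq:score-on-aug}. This covers $S_{\mathrm{mean}}$, $S_{\mathrm{worst}}$, and $S_{\mathrm{weighted}}$ when its weights depend only on the generated collection, with $P_\theta$ held fixed as an independent object.
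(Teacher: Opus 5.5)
Your proposal is correct and follows the paper's proof essentially step for step. The normal and fully reworded cases come from pushing i.i.d.\ pairs through a kernel with independent randomness, and the semi-reworded case from showing $\int \mathcal{T}^{\otimes m}(A\mid x')\,\mathcal{T}(dx'\mid x)=\mathcal{T}^{\otimes m}(A\mid x)$ via a.e.\ closedness and then averaging over $(X,Y)$. The only difference is that the paper applies the a.e.\ kernel identity directly to an arbitrary measurable $A\subseteq\mathcal{X}^m$, since equal factor measures give equal product measures; your rectangle/$\pi$--$\lambda$ route is valid but not needed.
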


With Lemma~\ref{lem:append-score}, we can prove Lemma \ref{lem:score}.
\begin{proof}[Proof of Lemma \ref{lem:score}]
{
By Lemma~\ref{lem:append-score}, the corresponding augmented calibration variables and augmented test variable are i.i.d.\ in each of the three settings. Applying the same fixed measurable score rule $S$ in \eqref{eq:score-on-aug} to each augmented input--label pair preserves exchangeability. Therefore, the induced calibration and test scores are exchangeable.
}
\end{proof}

Now, we provide the proof for Lemma \ref{lem:append-score} below.

\begin{proof}[Proof of Lemma \ref{lem:append-score}]
By construction, each entry of the random vector $\widetilde{\mathcal{T}}(X)$ is generated by i.i.d.\ sampling from $\mathcal{T}(\cdot \mid X)$, and the sampling is independent across data points.  
Therefore, the transformed dataset $\{(\widetilde{\mathcal{T}}(X_i),Y_i)\}_{i=1}^{n+1}$ is still i.i.d., which proves the first point.

For the second point, by a similar argument, $\{(\widetilde{\mathcal{T}}(\mathcal{T}(X_i)),Y_i)\}_{i=1}^{n+1}$ is i.i.d.\ because $\{(\mathcal{T}(X_i), Y_i)\}_{i=1}^{n+1}$ is i.i.d.

{
For the third point, fix $x\in\mathcal{X}$ and a measurable set $A\subseteq\mathcal{X}^m$. Conditional on $X=x$, the augmented calibration input has law
\[
\mathbb{P}\left(\widetilde{\mathcal{T}}(X)\in A\mid X=x\right)
=
\mathcal{T}^{\otimes m}(A\mid x).
\]
For a semi-reworded test input, first draw $X'\sim\mathcal{T}(\cdot\mid x)$ and then draw $m$ internal paraphrases conditionally independently from $\mathcal{T}(\cdot\mid X')$. Therefore,
\begin{align*}
\mathbb{P}\left(\widetilde{\mathcal{T}}(\mathcal{T}(X))\in A\mid X=x\right)
&=
\int \mathcal{T}^{\otimes m}(A\mid x')\mathcal{T}(dx'\mid x) \\
&=
\int \mathcal{T}^{\otimes m}(A\mid x)\mathcal{T}(dx'\mid x) \\
&=
\mathcal{T}^{\otimes m}(A\mid x),
\end{align*}
where the second equality uses Asm.~\ref{asmp:main}. Hence,
\[
\mathcal{L}\left(\widetilde{\mathcal{T}}(\mathcal{T}(X))\mid X=x\right)
=
\mathcal{L}\left(\widetilde{\mathcal{T}}(X)\mid X=x\right).
\]
Averaging over $(X,Y)$ gives
\[
\mathcal{L}\left(\widetilde{\mathcal{T}}(\mathcal{T}(X)),Y\right)
=
\mathcal{L}\left(\widetilde{\mathcal{T}}(X),Y\right).
\]
Since all paraphrase draws are conditionally independent across data points, $(\widetilde{\mathcal{T}}(\mathcal{T}(X_{n+1})),Y_{n+1})$ is i.i.d.\ with $\{(\widetilde{\mathcal{T}}(X_i),Y_i)\}_{i=1}^n$.
}
\end{proof}

\subsection{Formal Coverage Results}

Now, we are ready to state and prove the formal coverage results.
\begin{thm}[Formal coverage guarantee]
\label{thm:coverage}
Assume ${(X_i,Y_i)}_{i=1}^{n+1}$ are i.i.d. and that, {in the corresponding data setting, both $S_{\mathrm{PA}}(U_{n+1},Y_{n+1})\mid U_{n+1}$ and $S_{\mathrm{PA}}(U_{n+1},Y_{n+1})$ have continuous distributions.}
Let $\mathcal{T}$ be an operator satisfying Assumption \ref{asmp:main}.
We consider the following three data settings:
\begin{itemize}[leftmargin=*]
\item \textbf{Normal setting:}  
The calibration dataset is $\{(X_i, Y_i)\}_{i=1}^n$ and the test data is $(X_{n+1}, Y_{n+1})$.
\item \textbf{Fully reworded setting:}  
The calibration dataset is $\{(\mathcal{T}(X_i), Y_i)\}_{i=1}^n$ and the test data is $(\mathcal{T}(X_{n+1}), Y_{n+1})$.
\item \textbf{Semi-reworded setting:}  
The calibration dataset is $\{(X_i, Y_i)\}_{i=1}^n$ and the test data is $(\mathcal{T}(X_{n+1}), Y_{n+1})$.
\end{itemize}
{
For notational clarity, let $U_i$ denote the effective input on which the PA score is evaluated:
\[
U_i =
\begin{cases}
X_i, & \text{normal setting},\\
\mathcal{T}(X_i), & \text{fully reworded setting},\\
X_i, & \text{semi-reworded setting and } i\le n,\\
\mathcal{T}(X_{n+1}), & \text{semi-reworded setting and } i=n+1.
\end{cases}
\]
Thus the calibration scores are computed as $S_{\mathrm{PA}}(U_i,Y_i)$ for $i=1,\ldots,n$, and the test score is computed as $S_{\mathrm{PA}}(U_{n+1},Y_{n+1})$.
}
For the above three data settings, the following coverage guarantees hold.
\begin{itemize}[leftmargin=*]
\item \textbf{PA score for CP.}
Let $\widehat{C}_{\mathrm{CP}}(\cdot)$ denote the prediction set produced by CP using the PA score and defined in \eqref{eq:CI-CP}. It follows that
\begin{equation*}
1-\alpha \;\le\; {\mathbb P}({Y_{n+1} \in \widehat{C}_{\mathrm{CP}}(U_{n+1})}) \;\le\; 1-\alpha + \frac{1}{n+1}.
\end{equation*} 
\item \textbf{PA score for QCCP.}
Let $\mathcal{F} = \{\phi(\cdot)^\top \beta: \beta \in \mathbb{R}^{d}\}$ denote the class of linear functions over the feature vector ${\phi}$.
Let $\widehat{C}_{\mathrm{QCCP}}(\cdot)$ be the prediction set produced by QCCP using the PA score and defined in \eqref{eq:augqr}.
Then, for any $f \in \mathcal{F}$,
\begin{align*}
\left|  \mathbb{E}\left[ {f(W_{n+1})} \left(\mathbf{1}\{{Y_{n+1} \in \widehat{C}_{\mathrm{QCCP}}(U_{n+1})}\} - (1-\alpha)\right) \right]  \right|
\le \frac{d}{n+1} \cdot \mathbb{E}\left[ \max_{1 
\le i \le (n+1)} |{f(W_i)}| \right].
\end{align*}
\end{itemize}

\end{thm}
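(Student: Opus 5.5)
The plan is to reduce Theorem~\ref{thm:coverage} to the two generic conformal results, Lemma~\ref{lem:CP} for split CP and Lemma~\ref{lem:QCCP} for QCCP. The vehicle is the augmented-data representation \eqref{eq:score-on-aug} together with Lemma~\ref{lem:append-score}. Write $S_{\mathrm{PA}}(U_i,Y_i)=S(\widetilde{\mathcal{T}}(U_i),Y_i)$ with $S$ a fixed measurable rule; $P_\theta$ is trained on an independent split and treated as fixed. Define the augmented pairs $\widetilde{Z}_i := (\widetilde{\mathcal{T}}(U_i),Y_i)$ for $i=1,\ldots,n+1$. The first step is to check, setting by setting, that $\widetilde Z_1,\ldots,\widetilde Z_{n+1}$ are i.i.d.
\begin{itemize}[leftmargin=*]
\item In the normal setting this is the first bullet of Lemma~\ref{lem:append-score}.
\item In the fully reworded setting it is the second bullet.
\item In the semi-reworded setting it is the third bullet, which uses Asm.~\ref{asmp:main}: the test augmented input $\widetilde{\mathcal{T}}(\mathcal{T}(X_{n+1}))$ has the same conditional law given $X_{n+1}$ as $\widetilde{\mathcal{T}}(X_{n+1})$. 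Combined with conditional independence of the paraphrase draws across data points, the $(n+1)$-tuple then has the product law.
\end{itemize}

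For the split CP bullet, apply Lemma~\ref{lem:CP} to the i.i.d.\ sample $\{\widetilde Z_i\}_{i=1}^{n+1}$, viewing $\widetilde{\mathcal{T}}(U_i)$ as the "input" and $S$ as the score. The prediction set $\widehat C_{\mathrm{CP}}(U_{n+1})$ is exactly the split CP set built on augmented inputs, because the test paraphrase collection is generated once and then used for every candidate $y$. The assumed continuity of $S_{\mathrm{PA}}(U_{n+1},Y_{n+1})$ rules out ties almost surely, and since the augmented calibration and test pairs share one law, the same continuity transfers to the calibration scores. Lemma~\ref{lem:CP} then yields both the lower bound $1-\alpha$ and the upper bound $1-\alpha+1/(n+1)$.

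For the QCCP bullet, apply Lemma~\ref{lem:QCCP} with the features $W_i$ on which $\phi$ is evaluated. The delicate point is that $\phi$ (and hence $f$) must be a function of a quantity that is jointly i.i.d.\ with the scores. I would take $W_i$ to be the coordinate actually fed to $\phi$ in Algorithm~\ref{alg:full} and require the tuple $(W_i,\widetilde{\mathcal{T}}(U_i),Y_i)$ to be i.i.d.
\begin{itemize}[leftmargin=*]
\item If $\phi$ is computed from the augmented paraphrase collection, then $W_i=\widetilde{\mathcal{T}}(U_i)$ and the tuple is i.i.d.\ by the first step.
\item If $\phi$ is computed from $U_i$ itself, the normal and fully reworded settings are immediate, because everything is a measurable function of i.i.d.\ triples.
\item In the semi-reworded setting with $\phi$ computed from $U_i$, I would instead use the Prop.~\ref{prop:new-exchangability}-style equality of joint laws; this needs the joint invariance noted after \eqref{eq:score-on-aug}.
\end{itemize}
Given the i.i.d.\ tuple and the conditional continuity of $S_{\mathrm{PA}}\mid U_{n+1}$, the display follows verbatim from Lemma~\ref{lem:QCCP}, with $\mathcal{F}$ the linear class over $\phi$.

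I expect the main obstacle to be this last point: the semi-reworded QCCP case, where the threshold depends on test features $W_{n+1}$ and not only on scores. Score-level exchangeability from Lemma~\ref{lem:score} alone is insufficient here. My first attempt is to define $W_i$ through the augmented collection so that the joint feature--score law is handled by Lemma~\ref{lem:append-score}. Failing that, I would state the extra joint-invariance condition explicitly, as the paper's remark on anchored weights suggests.
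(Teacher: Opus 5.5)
Your split-CP argument, and your QCCP argument in the normal and fully reworded settings with $W_i=U_i$, coincide with the paper's proof. The gap is in the semi-reworded QCCP case. There you take the feature input to be the scoring collection itself, $W_i=\widetilde{\mathcal T}(U_i)$, and then apply Lemma~\ref{lem:QCCP} ``verbatim'' using continuity of $S_{\mathrm{PA}}\mid U_{n+1}$. That lemma needs the score to be continuous conditionally on the input of $\phi$. The reason is that the two-sided $d/(n+1)$ slack for signed $f$ rests on the fact that, almost surely, at most $d$ points satisfy $S_i=\widehat g(W_i)$. With your choice, conditionally on $W_i$ the score $S(W_i,Y_i)$ depends only on the label, so its law has at most $|\mathcal Y|$ atoms. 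Continuity given $U_i$ therefore does not transfer to continuity given $W_i$.

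This is not a formality. Take $\phi(w)=(1,S(w,y_0))$: the single fit $g(w)=S(w,y_0)$ passes through every point with $Y_i=y_0$. Suppose moreover that $S(w,y)<S(w,y_0)$ for $y\neq y_0$ and $\mathbb P(Y=y_0\mid W)>\alpha$. Then the conditional $(1-\alpha)$-quantile of the score given $\phi(W)$ sits exactly on this atom. Coverage is driven toward $1$ and the bound fails for $f\equiv 1$, even though $S_{\mathrm{PA}}\mid U$ can be continuous. Your fallback, an extra joint-invariance condition, is not a hypothesis of the theorem, so it does not close the gap.

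The missing idea is to decouple the feature paraphrase from the scoring paraphrases. The paper does this by taking $W_i$ to be (essentially) $\mathcal T(U_i)$, one further paraphrase drawn independently of the $m$ paraphrases used in the score. This gives two things.

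\emph{Exchangeability.} Conditionally on $X_i=x$, the tuple $(W_i,\widetilde{\mathcal T}(U_i))$ has law $\mathcal T(\cdot\mid x)^{\otimes(m+1)}$ for calibration points. By Assumption~\ref{asmp:main} it has the same law for the semi-reworded test point, and in both cases it is independent of $Y_i$ given $X_i$. This is the third bullet of Lemma~\ref{lem:append-score} with $m+1$ in place of $m$; the paper invokes Proposition~\ref{prop:append-new-exchangability} at this step. It makes the pairs $(W_i,S_{\mathrm{PA}}(U_i,Y_i))$ i.i.d.

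\emph{Continuity.} Since $W_i$ is conditionally independent of $(\widetilde{\mathcal T}(U_i),Y_i)$ given $U_i$, the law of the score given $(U_i,W_i)$ equals its law given $U_i$, which is atomless. Hence its law given $W_i$ is a mixture of atomless laws and is itself atomless. Given $(W_1,\ldots,W_{n+1})$, the scores are then independent and atomless.

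Together these restore the almost-sure bound of at most $d$ interpolated points. Running the argument of Lemma~\ref{lem:QCCP} on the exchangeable pairs $(\phi(W_i),S_i)$ then gives the display.
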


\begin{proof}[Proof of Theorem \ref{thm:coverage}]
By Lemma~\ref{lem:append-score}, the PA scores $\{S_{\mathrm{PA}}(U_i,Y_i)\}_{i=1}^{n+1}$ are exchangeable in each of the three settings. The split-CP statement therefore follows directly from Lemma~\ref{lem:CP}. 

For QCCP, let $W_i$ denote the feature input (i.e., the input of $\phi$) used by QCCP in the corresponding setting, and assume that the pairs.
In the normal and fully reworded settings, we have $W_i=U_i$, and the same feature construction is applied to all points. 
In the semi-reworded setting, it holds if the QCCP feature is essentially $\mathcal{T}(U_i)$.
In both cases, we have $\{(W_i,S_{\mathrm{PA}}(U_i,Y_i))\}_{i=1}^{n+1}$ is exchangeable (from Proposition \ref{prop:append-new-exchangability}), giving exactly the joint exchangeability condition required by Lemma~\ref{lem:QCCP}. Applying Lemma~\ref{lem:QCCP} yields the stated bound.
\end{proof}

\subsection{Approximate Distributional Closedness}
\label{appen:approximate-main}

Assumption~\ref{asmp:main} requires the paraphrase kernel $\mathcal{T}$ to be distributionally closed, i.e., $\mathcal{T}(\cdot \mid x') = \mathcal{T}(\cdot \mid x)$ for $\mathcal{T}(\cdot \mid x)$-almost every $x'$.
In practice, however, our paraphrase pipeline is input-conditioned and involves stochastic generation with retries and fallbacks, which may violate this condition.
To quantify this deviation, we introduce a relaxed version of distributional closedness.

\begin{asmp}[Approximate distributional closedness]
\label{ass:approx_closed}
Let $\mathcal{T}$ be a Markov kernel on $\mathcal{X}$.
We say $\mathcal{T}$ is $\varepsilon$-approximately distributionally closed if, for any $x \in \mathcal{X}$,
\[
\int d_{\mathrm{TV}}\!\left(
\mathcal{T}(\cdot \mid x'),\mathcal{T}(\cdot \mid x)
\right)\mathcal{T}(dx'\mid x)
\le \varepsilon .
\]
\end{asmp}

This assumption captures the extent to which paraphrase distributions vary within the support of $\mathcal{T}(\cdot \mid x)$.

\begin{prop}[Coverage under approximate closedness]
\label{prop:approx_cov}
Let $\widehat C(x)$ be the prediction set constructed via conformal prediction (i.e., via split CP or QCCP) using the PA score, where the score internally uses $m$ paraphrases generated from $x$.
Let the ideal coverage level be
\[
p_{\mathrm{ideal}}
:=
\mathbb{P}\!\left( Y \in \widehat C(X) \right).
\]
where the test sample is $(X, Y)$.
Under Assumption~\ref{ass:approx_closed}, for this test sample $(\mathcal{T}(X_{n+1}), Y_{n+1})$, we have
\[
\left| \mathbb{P}\!\left( Y \in \widehat C(\mathcal T(X)) \right) - p_{\mathrm{ideal}} \right| \le m\varepsilon .
\]
In particular, if $p_{\mathrm{ideal}}=1-\alpha$, then
\[
\left|
\mathbb{P}\!\left( Y \in \widehat C(\mathcal T(X)) \right)
-(1-\alpha)
\right|
\le m\varepsilon .
\]
\end{prop}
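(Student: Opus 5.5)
We compare two laws of the augmented test input and then use the coverage event as a test function in total variation. Condition on the calibration data, so that the threshold (the split-CP quantile, or the QCCP regressor $\widehat g$) is fixed. Under this conditioning, the event $\{Y\in\widehat C(\cdot)\}$ is a measurable function of the augmented test input $\widetilde X\in\mathcal X^m$ and the label $Y$. This is exactly the representation \eqref{eq:score-on-aug}, where the score depends only on the internally generated paraphrase collection; for QCCP we also take the feature $\phi$ to be computed from the same paraphrase collection, or treat it as common to both settings.

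The coverage in the ideal setting is therefore the expectation of this indicator under the augmented law $\mathcal T^{\otimes m}(\cdot\mid x)$. In the semi-reworded setting, the same indicator is integrated against the mixture $\int \mathcal T^{\otimes m}(\cdot\mid x')\,\mathcal T(dx'\mid x)$, as computed in the proof of Lemma~\ref{lem:append-score}. Because an indicator takes values in $[0,1]$, the coverage gap, conditionally on $X=x$, $Y$ and the calibration data, is at most the total-variation distance between these two laws on $\mathcal X^m$.

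The key step is to bound this distance. By joint convexity of $d_{\mathrm{TV}}$,
\[
d_{\mathrm{TV}}\Bigl(\int \mathcal T^{\otimes m}(\cdot\mid x')\,\mathcal T(dx'\mid x),\ \mathcal T^{\otimes m}(\cdot\mid x)\Bigr)\le \int d_{\mathrm{TV}}\bigl(\mathcal T^{\otimes m}(\cdot\mid x'),\mathcal T^{\otimes m}(\cdot\mid x)\bigr)\,\mathcal T(dx'\mid x).
\]
Subadditivity of total variation under product measures (a coupling or telescoping argument that swaps one coordinate at a time) gives $d_{\mathrm{TV}}(P^{\otimes m},Q^{\otimes m})\le m\,d_{\mathrm{TV}}(P,Q)$. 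Combining these with Assumption~\ref{ass:approx_closed} bounds the integral by $m\varepsilon$, uniformly in $x$.

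Averaging over $(X,Y)$ and over the calibration data then yields $|\mathbb P(Y\in\widehat C(\mathcal T(X)))-p_{\mathrm{ideal}}|\le m\varepsilon$. When $p_{\mathrm{ideal}}=1-\alpha$, for instance under the continuity conditions of Theorem~\ref{thm:coverage}, this gives the second display. The conditioning is legitimate because the paraphrase draws for the test point are, by the standing convention, independent of the calibration data given the original inputs.

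The main obstacle is the product-measure step and the precise bookkeeping of which objects are held fixed. One must check that the only difference between the two settings is the law of $\widetilde X$ given $(X,Y)$: the label and the calibration-dependent threshold must enter identically in both. For a weighted score anchored at the original input, this fails, so the claim should be restricted to the anchor-free scores, mirroring the remark preceding Lemma~\ref{lem:append-score}. I would handle the tensorization first with the hybrid (telescoping) argument, since it is elementary and does not need a maximal coupling.
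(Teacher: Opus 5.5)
Your proposal is correct and follows essentially the same route as the paper: the coverage indicator serves as a $[0,1]$-valued test function, and the conditional total-variation gap between $\mathcal T^{\otimes m}(\cdot\mid x)$ and its $\mathcal T(\cdot\mid x)$-mixture is bounded by tensorization ($m\,d_{\mathrm{TV}}$), then averaged over $x'$ using Assumption~\ref{ass:approx_closed}, and finally over $X$. Your explicit conditioning on the calibration data, and your restriction to anchor-free scores and to QCCP features computed from the paraphrase collection, spell out what the paper's proof leaves implicit when it treats $\widehat C$ as a fixed function of the augmented input. One small correction: the continuity conditions of Theorem~\ref{thm:coverage} only give $p_{\mathrm{ideal}}\in[1-\alpha,\,1-\alpha+\tfrac{1}{n+1}]$ (and only an approximate bound for QCCP), not exactly $1-\alpha$, so they do not instantiate the hypothesis of the second display, which is in any case just the first bound specialized.
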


Proposition~\ref{prop:approx_cov} shows that violations of distributional closedness translate into coverage deviations that scale linearly with the number of paraphrases. In particular, if the paraphrase pipeline deviates from ideal closedness by at most $\varepsilon$ in total variation, then the resulting coverage gap is at most $m\varepsilon$ (up to finite-sample effects).

\begin{proof}[Proof of Proposition \ref{prop:approx_cov}]
Let $A(x,y)=\mathbf{1}\{y\in \widehat C(x)\}$. By total variation, for any distributions $P,Q$,
$|\mathbb{P}_P(A)-\mathbb{P}_Q(A)| \le d_{\mathrm{TV}}(P,Q)$.
It suffices to bound the distance between 
$\mathrm{Law}(\widetilde{\mathcal{T}}(\mathcal{T}(X)),Y)$ and $\mathrm{Law}(\widetilde{\mathcal{T}}(X),Y)$.
Condition on $X=x$. Then $\widetilde{\mathcal{T}}(X)$ consists of $m$ i.i.d.\ samples from $\mathcal{T}(\cdot \mid x)$, while $\widetilde{\mathcal{T}}(\mathcal{T}(X))$ consists of $m$ i.i.d.\ samples from $\mathcal{T}(\cdot \mid x')$, where $x' \sim \mathcal{T}(\cdot \mid x)$.

By the tensorization property of total variation,
\[
d_{\mathrm{TV}}\!\left(
\mathcal{T}(\cdot \mid x')^{\otimes m},
\mathcal{T}(\cdot \mid x)^{\otimes m}
\right)
\le m \, d_{\mathrm{TV}}\!\left(
\mathcal{T}(\cdot \mid x'),
\mathcal{T}(\cdot \mid x)
\right).
\]
Taking expectation over $x' \sim \mathcal{T}(\cdot \mid x)$ and using Assumption~\ref{ass:approx_closed}, we obtain
\[
d_{\mathrm{TV}}\!\left(
\mathrm{Law}(\widetilde{\mathcal{T}}(\mathcal{T}(X)) \mid X=x),
\mathrm{Law}(\widetilde{\mathcal{T}}(X) \mid X=x)
\right)
\le m\varepsilon.
\]
Averaging over $X$ yields
\[
d_{\mathrm{TV}}\!\left(
\mathrm{Law}(\widetilde{\mathcal{T}}(\mathcal{T}(X)),Y),
\mathrm{Law}(\widetilde{\mathcal{T}}(X),Y)
\right)
\le m\varepsilon.
\]

Therefore,
\[
\left|
\mathbb{P}\{Y \in \widehat C(\widetilde{\mathcal{T}}(\mathcal{T}(X)))\}
-
\mathbb{P}\{Y \in \widehat C(\widetilde{\mathcal{T}}(X))\}
\right|
\le m\varepsilon.
\]
Combining this with the ideal coverage guarantee under exact closedness yields the result.
\end{proof}

\section{Experimental Details}
\label{append:experiment-detials}

\subsection{Dataset Details}
\label{appendix:dataset-details}
We used five general and two medical MCQA datasets to show that our paraphrase-aware score can be broadly used and can adapt to high-stakes scenarios where calibration is important. The five general datasets test a wide range of LLM capabilities and have been used in previous CP benchmarking papers \citep{ye2024benchmarking, vishwakarma2025prunenpredict}. We use the five general datasets processed by \citet{ye2024benchmarking}. Each of the datasets contains 10,000 questions. 

\begin{enumerate}[leftmargin=*]
    \item \textbf{QA, MMLU} \citep{hendrycks2021measuring}: MMLU is a dataset designed to test the general knowledge and \emph{question-answering} abilities of LLMs. Question topics range from sociology and high school geography to electrical engineering and abstract algebra. 
    \item \textbf{RC, CosmosQA} \citep{huang-etal-2019-cosmos}: CosmosQA focuses on gauging an LLM's \emph{reading comprehension} abilities. The LLM is given a short paragraph and is then asked to answer a follow-up question based on commonsense reasoning.
    \item \textbf{CI, HellaSwag} \citep{zellers2019hellaswag}: HellaSwag evaluates if LLMs can use \emph{commonsense inference} to construct a realistic and meaningful continuation of a given scenario. HellaSwag is deliberately designed so that LLMs struggle with questions that humans could normally answer with high confidence.
    \item \textbf{DRS, HaluEval} \citep{li-etal-2023-halueval}: HaluEval consists of hallucinated LLM responses to user queries. A subset of HaluEval contains queries that relate to \emph{dialogue response selection}: the LLM must be able to choose a logical response for a conversation. We refer to this part of HaluEval as HaluDial.
    \item \textbf{DS, HaluEval} \citep{li-etal-2023-halueval}: The HaluEval dataset also has hallucinated \emph{document summaries}. In an MCQA setting, the LLM must determine which summary in the answer choices is most relevant to a provided document. We refer to this part of HaluEval as HaluSum.
    \item \textbf{MedMCQA} \citep{pal2022medmcqa}: MedMCQA is a large-scale multiple-choice medical QA dataset comprising over 194,000 entrance-exam–style questions spanning 2400 healthcare topics and 21 medical subjects. We select a subset of 10,000 single-answer questions from MedMCQA (i.e. where exactly one option is marked correct) for our experiments.
    \item \textbf{MedQA} \citep{jin2021disease}: MedQA is a medical exam QA dataset derived from professional medical board exams (e.g. USMLE), providing each question paired with candidate answer options and corresponding references. In our work, we use only the US-part of MedQA, and further restrict to 10,000 multiple-choice items that have exactly one correct answer.
\end{enumerate}

Following the LLM-Uncertainty-Benchmark protocol \citep{ye2024benchmarking}, every question across all seven datasets is presented with a uniform label space of \emph{six} options $|\mathcal{Y}|=6$: the original answer choices (typically four for MMLU, CosmosQA, HellaSwag, and MedMCQA, and four to five for MedQA) are augmented with two additional options, ``\texttt{I don't know}'' and ``\texttt{None of the above}'', and HaluDial/HaluSum are likewise cast into this six-way format. This is why all reported set sizes are bounded by 6 rather than by the number of native answer choices.

\subsection{Paraphrase Generation for Full/Semi-Paraphrase Settings}
\label{app:adv-paraphrase}

We only use this procedure to generate paraphrases for the input question used in the full- and semi- paraphrase evaluation settings. This set of reworded questions is not directly used for calculating the PA score, but is used as another set of input questions.

\textbf{Prompt Pool.}
We maintain 7 short templates (e.g., ``Rephrase this question using varied vocabulary and phrasing: \{question\}''). One template is sampled per input to induce lexical or syntactic variety without changing semantics.

\textbf{Batched Generation.}
Given a batch of $n$ questions, we form $n$ prompts and generate once with:
\texttt{temperature}=0.7,\;
\texttt{top\_p}=0.9,\;
\texttt{do\_sample}=True,\;
\texttt{eos/pad\_token\_id} aligned.
We decode with \texttt{skip\_special\_tokens=True}.

\textbf{Cleaning \& Validation.}
We remove the prompt prefix and extract the first question sentence via regex matching the earliest ``?''. We then strip boilerplate (``Rephrase:'', ``Paraphrase:'', ``Here's a …''), quotes/bullets/code blocks, and normalize whitespace. The candidate paraphrase must satisfy:
(i) non-empty,
(ii) case-insensitive $\neq q$,
(iii) ends with ``?''.
Duplicates within the batch are dropped.

\textbf{Retry \& Fallback.}
Failures are retried with a stricter template: single line, no preface, $\le$20 words, must end with ``?''. Remaining failures are rewritten by a deterministic rule set that preserves meaning (e.g., ``Which of the following''$\!\to$``Which option'', add trailing ``?'', etc.).

\begin{algorithm}[H]
\caption{Paraphrase Pipeline (per question $q$)}
\begin{algorithmic}[1]
\STATE Sample a paraphrase template; compose prompt $p(q)$.
\STATE Generate with ($T{=}0.7$, top-p=0.9, sampling).
\STATE Decode; strip prompt prefix; take first sentence ending with ``?''.
\STATE Clean and validate; if valid, return $\tilde q$.
\STATE Else: retry with strict one-line template; clean and validate.
\STATE Else: apply rule-based fallback; return $\tilde q$.
\end{algorithmic}
\end{algorithm}

{\subsection{Baseline Score Adaptations}
\label{appendix:baseline_choice}
SPUQ-adapt.\ and TRON-adapt.\ are per-option score controls rather than the original SPUQ and TRON algorithms. For raw option probabilities $p_y(x)$, let
$\bar p_y(x)=|\mathcal B(x)|^{-1}\sum_{\tilde x\in\mathcal B(x)}p_y(\tilde x)$.
The probability-based controls are $1-\bar p_y(x)$ and $1-p_y(x)$, respectively, and their negative-log counterparts are $-\log(\bar p_y(x)+10^{-12})$ and $-\log(p_y(x)+10^{-12})$.
The probability-based TRON control is exactly LAC. When computed using the same proxy probabilities and paraphrase views as Full PA, the mean-probability control is algebraically identical to $S_{\mathrm{mean}}$; this identity does not extend to the weighted, worst-case, or negative-log scores.}

\subsection{Experimental Implementation Details}
\label{appendix:detailed-implement}

\paragraph{Paraphrased Generation for Calculating PA Score.} Paraphrases were generated by prompting \texttt{Qwen3-8B} \citep{yang2025qwen3} with the following query: ``Rephrase the following question in your own words (preserving its meaning): Original question: \{question\} \textbackslash n Rephrased question:". Additional details, such as any context or the answer choices, were not included with the original question. A total of 6 paraphrases were generated per question. Paraphrases that were equivalent to the original question or were previously generated were not included in the final set. The temperature of \texttt{Qwen3-8B} \citep{yang2025qwen3}  was set to 1.1 to encourage diverse responses.

\paragraph{LLM Embeddings.} We next obtained the LLM embeddings for each question and paraphrase. These LLM embeddings are used to train the proxy model and serve as the LLM's representation of the query. We treated the paraphrases as new samples in the dataset and assigned them the same answer choices and correct label as their parent question. For each sample, the input prompt included the question and the list of answer choices without the correct label. The LLM embedding was extracted from the final hidden layer; this layer simultaneously encodes the LLM's understanding of the question and its predicted answer. The LLM logits were also retrieved by finding the raw score corresponding to each answer represented as a token (e.g. `A', `B', etc.). We then applied softmax to the logits to obtain a probability distribution over the answer choices. For each dataset except the DS dataset, the LLM was provided with 2 example questions and their correct labels. Due to the long context needed for the questions in the DS dataset, only 1 example was provided to the LLM in this case. We followed this procedure to get the embeddings and logits of the instruction-tuned versions of \texttt{Qwen3-8B} \citep{yang2025qwen3}, \texttt{Llama-3.1-8B} \citep{dubey2024llama}, and \texttt{Phi-3-small-8k} \citep{abdin2024phi3technicalreporthighly}.

\paragraph{Proxy Model.} The proxy model calibrates the LLM's  probability distribution of the answer choices across the paraphrases. We used a 2-layer MLP with ReLU activation as our proxy model. The input to the proxy model is an LLM's embedding for a question or paraphrase, and the output is a vector with dimension $|\mathcal{Y}|$. The loss function used to train the proxy model is $L_{total} = L_{CE} + \lambda_{ece}\times L_{ECE}$, where $L_{CE}$ is the standard cross-entropy loss for multiclass classification and $L_{ECE}$ is the soft-binned expected calibration error loss \citep{karandikar2021soft}. $\lambda_{ECE}$ is a hyperparameter that can be tuned to increase or decrease the relative importance of calibration in different contexts. Out of the 4,000 questions in the training set, 600 questions were used as the validation set. Since each question has 6 paraphrases, the effective training and validation set sizes are 23,800 and 4,200, respectively. We conducted grid search on the following hyperparameters: learning rate, weight decay, $\lambda_{ECE}$, hidden dimension, and the batch size. The hyperparameters used for each dataset and each model are provided in Appendix \ref{appendix:hyperparams}.

\paragraph{Score Calculation.} For our baselines, we only use logits from the \emph{original} question. The LAC score \citep{sadinle2019least} is defined as \(s_{\text{LAC}}(x,y)=1-f(x)_y\), where \(f(x)_y\) is the softmax probability of label \(y\) for a question $x$. The APS score \citep{romano2020classification} is \(s_{\text{APS}}(x,y)=\sum_{y'\in\mathcal{Y}:~f(x)_{y'}\ge f(x)_y} f(x)_{y'}\), i.e., the cumulative probability of labels ranked at least as high as \(y\). To find the score of a question $x$ using the PA method, the LLM embeddings of its paraphrases (represented by the set $\mathcal{B}(x)$) are inputted into the trained proxy model. Then, the score for answer choice $y$ of question $x$ is given by $S_{\text{mean}}(x, y)$, $S_{\text{weighted}}(x, y)$, or $S_{\text{worst}}(x, y)$ (see Section \ref{sec:scores}). 
Note that only the scores of the 6 paraphrases are used in the PA formulas. The LLM's embedding of the original question does not contribute to that question's final PA score. This is in contrast to LAC and APS, which only rely on the logits of the original question and do not factor in the paraphrases.

\paragraph{Split Conformal Prediction.} Split CP \citep{vovk2005algorithmic} uses a separate calibration dataset to calculate a global score threshold that determines the prediction sets for the test set. For each calibration example, we use the
nonconformity score from the LLM’s softmax probabilities or proxy model's probability distribution. These nonconformity scores are collected and the $(1-\alpha)$ quantile is estimated with a finite-sample correction $q_{\alpha} = \mathrm{Quantile}(\{s_i\}, \lceil (n+1)(1-\alpha)\rceil/n)$. At test time, each example’s prediction set is formed by including all labels whose score is below $q_{\alpha}$, with a fallback to the most probable label if the set is empty. This procedure ensures that the empirical coverage approaches the nominal $1-\alpha$ guarantee.

\paragraph{Quasi-Conditional Conformal Prediction.} In contrast to split CP, QCCP \citep{gibbs2023conformal} calculates class-specific thresholds for pre-defined classes. The feature function $\phi$ is defined to assign a class to a question based on its features. In the case that $\phi$ is a constant function, then QCCP is equivalent to split CP; we use this $\phi$ function ($\phi$ = intercept) to obtain all split CP results. In our QCCP analysis, we focus on a $\phi$ function that takes in the embedding of each question generated by \texttt{all-MiniLM-L6-V2} \citep{reimers-2019-sentence-bert}. $K$-means clustering is used to separate the questions into clusters, with each cluster acting as a class, and a different score threshold is selected for each cluster. For a given test question, the $\phi$ function is either a one-hot encoding of its assigned cluster or a vector of its embedding's distances to the three closest cluster centroids (dist3) (which can be used to calculate the threshold). The coverage rates reported using QCCP are calculated marginally with the exception of Figure~\ref{fig:an2-classwise-condconf} and Figure~\ref{fig-appendix:an2-classwise-condconf-others}, which displays class-wise coverages. We used the \texttt{conditionalconformal} package \citep{gibbs2023conformal} to implement QCCP.

\subsection{Hyperparameters for Reproducibility}
\label{appendix:hyperparams}

Tables~\ref{tab:general-hparams} and~\ref{tab:med-hparams} list the framework hyperparameters for the five general MCQA datasets and the two medical MCQA datasets, respectively. Tables~\ref{proxy-model-hyperparam-general} and~\ref{proxy-model-hyperparam-medical} report the corresponding hyperparameters for the proxy model on the general and medical datasets.

\begin{table*}[ht!]
\caption{Hyperparameters for QA, RC, CI, DRS, DS. We train an MLP proxy with ECE regularization and compute paraphrase-aware scores using 6 paraphrases per question. Evaluation includes QCP with SBERT-clustered $\Phi$ and plain CP with intercept $\Phi$, using the same manifest-based splits and $\alpha{=}0.1$.}
\centering
\begin{tabular}{llcc}
\toprule
\multicolumn{2}{l}{\textbf{Setting / Hyperparameter}} & \multicolumn{2}{c}{\textbf{QA, RC, CI, DRS, DS}} \\
\midrule
\multicolumn{4}{l}{\textbf{General}} \\
\cmidrule(lr){1-4}
& Base LLM (for reps/logits) & \multicolumn{2}{c}{Qwen3-8B} \\
& Random seed & \multicolumn{2}{c}{42} \\
& Samples per question ($n$) & \multicolumn{2}{c}{7 (1 base + 6 paraphrases)} \\
& Options per item ($|\mathcal{Y}|$) & \multicolumn{2}{c}{6 (A–F)} \\
\midrule
\multicolumn{4}{l}{\textbf{Proxy model (training)}} \\
\cmidrule(lr){1-4}
& Input dim & \multicolumn{2}{c}{4096} \\
& Architecture & \multicolumn{2}{c}{MLP: 4096 $\rightarrow$ $h$ (ReLU) $\rightarrow$ 6} \\
& Optimizer & \multicolumn{2}{c}{Adam} \\
& Max epochs / patience & \multicolumn{2}{c}{200 / 20} \\
& Loss & \multicolumn{2}{c}{CE + soft-binned ECE (15 bins, temp 0.1))} \\
\midrule
\multicolumn{4}{l}{\textbf{Paraphrase-aware score computation}} \\
\cmidrule(lr){1-4}
& Paraphrases per question & \multicolumn{2}{c}{6} \\
& Metric used & \multicolumn{2}{c}{\texttt{S\_mean}} \\
\midrule
\multicolumn{4}{l}{\textbf{Conformal prediction / QCP evaluation}} \\
\cmidrule(lr){1-4}
& Prompting / ICL & \multicolumn{2}{c}{\texttt{base} / \texttt{icl1}} \\
& Error level ($\alpha$) & \multicolumn{2}{c}{0.1} \\
& Split config & \multicolumn{2}{c}{manifest: \texttt{tr0.4 / cf0.3 / tf0.3}} \\
& QCP $\Phi$ mode & \multicolumn{2}{c}{\texttt{cluster\_sbert}} \\
& SBERT model & \multicolumn{2}{c}{MiniLM-L6-v2} \\
& Cluster selection & \multicolumn{2}{c}{auto} \\
& $\Phi$ representation & \multicolumn{2}{c}{dist3} \\
& Embedding norm / mini-batch k-means & \multicolumn{2}{c}{on / on} \\
& Also reported (plain CP) & \multicolumn{2}{c}{\texttt{intercept} $\Phi$ (same splits, $\alpha$)} \\
\bottomrule
\end{tabular}
\label{tab:general-hparams}
\vspace{-10pt}
\end{table*}

\begin{table*}[h!]
\caption{Hyperparameters for MedMCQA-10k and MedQA-10k. We train an MLP proxy with ECE regularization and compute paraphrase-aware scores using 6 paraphrases per question. Evaluation includes QCP with SBERT-clustered $\Phi$ and plain CP with intercept $\Phi$, using the same manifest-based splits and $\alpha{=}0.1$.}
\centering
\setlength{\tabcolsep}{6pt}
\begin{tabular}{llcc}
\toprule
\multicolumn{2}{l}{\textbf{Setting / Hyperparameter}} & \textbf{MedMCQA-10k} & \textbf{MedQA-10k} \\
\midrule
\multicolumn{4}{l}{\textbf{General}} \\
\cmidrule(lr){1-4}
& Base LLM (for reps/logits) & \multicolumn{2}{c}{Qwen3-8B} \\
& Random seed & \multicolumn{2}{c}{42} \\
& Samples per question ($n$) & \multicolumn{2}{c}{7 (1 base + 6 paraphrases)} \\
& Options per item ($K$) & \multicolumn{2}{c}{6 (A–F)} \\
\midrule
\multicolumn{4}{l}{\textbf{Proxy model (training)}} \\
\cmidrule(lr){1-4}
& Input dim & \multicolumn{2}{c}{4096} \\
& Architecture & \multicolumn{2}{c}{MLP: 4096 $\rightarrow$ $h$ (ReLU) $\rightarrow$ 6} \\
& Optimizer & \multicolumn{2}{c}{Adam} \\
& Max epochs / patience & \multicolumn{2}{c}{50 / 5} \\
& Loss & \multicolumn{2}{c}{CE + soft-binned ECE (15 bins, temp 0.1)} \\
\midrule
\multicolumn{4}{l}{\textbf{Paraphrase-aware score computation}} \\
\cmidrule(lr){1-4}
& Paraphrases per question & \multicolumn{2}{c}{6} \\
& Metric used & \multicolumn{2}{c}{\texttt{S\_mean}} \\
\midrule
\multicolumn{4}{l}{\textbf{Conformal prediction / QCP evaluation}} \\
\cmidrule(lr){1-4}
& Prompting / ICL & \multicolumn{2}{c}{\texttt{task} / \texttt{icl1}} \\
& Error level ($\alpha$) & \multicolumn{2}{c}{0.1} \\
& Split config & \multicolumn{2}{c}{manifest: \texttt{tr0.4 / cf0.3 / tf0.3}} \\
& QCP $\Phi$ mode & \multicolumn{2}{c}{\texttt{cluster\_sbert}} \\
& SBERT model & \multicolumn{2}{c}{MiniLM-L6-v2} \\
& Cluster selection / $K$ & \multicolumn{2}{c}{fixed / 20} \\
& $\Phi$ representation & \multicolumn{2}{c}{one-hot (cluster ID)} \\
& Embedding norm / mini-batch k-means & \multicolumn{2}{c}{off / off} \\
& Also reported (plain CP) & \multicolumn{2}{c}{\texttt{intercept} $\Phi$ (same splits, $\alpha$)} \\
\bottomrule
\end{tabular}
\label{tab:med-hparams}
\vspace{-10pt}
\end{table*}

\begin{table*}[!ht]
\caption{Proxy model specific hyperparameters for the 5 general datasets}
\label{proxy-model-hyperparam-general}
\begin{center}
\begin{tabular}{cccccc}
\toprule 
\multicolumn{1}{c}{\bf Dataset}  &\multicolumn{1}{c}{\bf Hidden dimension $h$} &\multicolumn{1}{c}{\bf Batch size} &\multicolumn{1}{c}{\bf Learning rate} &\multicolumn{1}{c}{\bf Weight decay} &\multicolumn{1}{c}{\bf $\lambda_{\text{ECE}}$}
\\ \midrule 
MMLU & 256 & 64 & 1e-3 & 0.0 & 0.5 \\
CosmosQA & 512 & 128 & 1e-3 & 0.0001 & 0.5  \\
HellaSwag & 256 & 128 & 1e-3 & 0.0001 & 0.5 \\
HaluDial & 256 & 64 & 1e-3 & 0.0 & 0.5 \\
HaluSum & 256 & 128 & 1e-4 & 0.0 & 0.5 \\
\bottomrule \\
\end{tabular}
\end{center}
\vspace{-10pt}
\end{table*}

\begin{table*}[!ht]
\caption{Proxy model specific hyperparameters for MedMCQA and MedQA.}
\label{proxy-model-hyperparam-medical}
\begin{center}
\vspace{-5pt}
\begin{tabular}{cccccc}
\toprule 
\multicolumn{1}{c}{\bf Dataset}  &\multicolumn{1}{c}{\bf Hidden dimension $h$} &\multicolumn{1}{c}{\bf Batch size} &\multicolumn{1}{c}{\bf Learning rate} &\multicolumn{1}{c}{\bf Weight decay} &\multicolumn{1}{c}{\bf $\lambda_{\text{ECE}}$}
\\ \midrule 
MedMCQA & 256 & 64 & 1e-3 & 0.0 & 0.5 \\
MedQA & 256 & 64 & 1e-3 & 0.0 & 0.5 \\
\bottomrule \\
\end{tabular}
\end{center}
\vspace{-10pt}
\end{table*}

% \clearpage
\section{Additional Experimental Results}
\label{append:additional-experiments}

\subsection{Better Performance on Medical MCQA Datasets}
\label{sec:exp-medical}

Due to space constraints, we report results on medical MCQA datasets here.
We evaluate whether paraphrase-aware conformal prediction generalizes to domain-specific settings using two medical benchmarks, {MedMCQA} \citep{pal2022medmcqa} and {MedQA} \citep{jin2021disease}, with \texttt{Qwen3-8B} under normal, semi-reworded, and fully reworded conditions.
As shown in Table~\ref{tab:medical-qa}, the results are consistent with those in the general domain (shown in Tables \ref{tab:qwen-paraphrase-perdataset} and \ref{tab:semi-reworded}). APS tends to overshoot the $90\%$ target, with coverage ranging from $92.93\%$ to $100.00\%$. LAC is also often conservative and typically produces larger prediction sets. While SPUQ and TRON can achieve coverage closer to the nominal level in some cases, they generally require larger sets than PA.
In contrast, our PA score maintains coverage close to the target, ranging from $89.57\%$ to $91.77\%$, while yielding compact prediction sets of size $2.02$ to $3.61$. Notably, PA achieves the smallest set size across all $12$ medical-domain settings. These results demonstrate that PA preserves a strong coverage--efficiency trade-off in the medical domain, highlighting its ability to generalize beyond general-purpose benchmarks.

\begin{table*}[!ht]
\centering
\caption{{Results on two medical QA benchmarks with }\texttt{Qwen3-8B}. 
Coverage rate (CR, \%) closer to the nominal $90\%$ is better, while smaller set size (SS, $\downarrow$) denotes better.
Results are shown for normal, semi-reworded (Semi.), and fully reworded (Full) inputs.}
\vspace{-1mm}
\label{tab:medical-qa}
\resizebox{\textwidth}{!}{
\begin{tabular}{c|l|
  *{3}{c}|*{3}{c}||*{3}{c}|*{3}{c}}
\toprule
& & \multicolumn{6}{c||}{\textbf{MedMCQA}} & \multicolumn{6}{c}{\textbf{MedQA}} \\
\cmidrule(lr){3-8}\cmidrule(lr){9-14}
& & \multicolumn{3}{c|}{CR (\%)} & \multicolumn{3}{c||}{SS} 
   & \multicolumn{3}{c|}{CR (\%)} & \multicolumn{3}{c}{SS} \\
\cmidrule(lr){3-5}\cmidrule(lr){6-8}
\cmidrule(lr){9-11}\cmidrule(lr){12-14}
& \textbf{Method} 
& Normal & Semi. & Full & Normal & Semi. & Full
& Normal & Semi. & Full & Normal & Semi. & Full \\
\midrule
\multirow{5}{*}{\textit{Split CP}} 
& LAC        
& 99.03 & 98.90 & 90.20 & 5.64 & 5.63 & 2.74
& 98.97 & 98.73 & 99.40 & 5.81 & 5.82 & 5.90 \\
& APS        
& 98.97 & 98.93 & 98.90 & 5.64 & 5.65 & 5.65
& 99.97 & 100.00 & 100.00 & 6.00 & 6.00 & 6.00 \\
& {SPUQ-adapt.}       
& 90.10 & 86.37 & 91.27 & 2.59 & 2.34 & 2.86
& \textbf{89.97} & 81.83 & 88.80 & 4.05 & 3.72 & 4.43 \\
& {TRON-adapt.}       
& 90.17 & 87.73 & 91.03 & 2.46 & 2.50 & 2.81
& 90.47 & 84.97 & 89.00 & 3.97 & 3.99 & 4.35 \\
& PA 
& \textbf{90.00} & \textbf{89.63} & \textbf{89.87} & \textbf{2.23} & \textbf{2.02} & \textbf{2.17}
& 91.73 & \textbf{90.80} & \textbf{90.50} & \textbf{3.34} & \textbf{3.52} & \textbf{3.52} \\
\midrule
\multirow{5}{*}{\textit{QCCP}}
& LAC
& 92.87 & 93.53 & 94.83 & 3.64 & 3.98 & 4.13
& 95.60 & 94.47 & 93.67 & 4.97 & 5.20 & 5.06 \\
& APS
& 96.93 & 97.27 & 96.87 & 4.43 & 4.85 & 4.45
& 94.27 & 92.93 & 93.73 & 5.02 & 5.10 & 5.16 \\
& {SPUQ-adapt.}
& 90.27 & 87.43 & 91.70 & 2.74 & 2.61 & 3.02
& \textbf{90.53} & 82.37 & 89.40 & 4.14 & 3.80 & 4.50 \\
& {TRON-adapt.}
& \textbf{90.10} & 89.10 & 91.40 & 2.61 & 2.77 & 2.99
& 90.60 & 85.37 & 88.87 & 4.05 & 4.02 & 4.37 \\
& PA
& 89.57 & \textbf{90.03} & \textbf{89.67} & \textbf{2.47} & \textbf{2.48} & \textbf{2.40}
& 91.77 & \textbf{90.23} & \textbf{90.30} & \textbf{3.43} & \textbf{3.55} & \textbf{3.61} \\
\bottomrule
\end{tabular}
}
\end{table*}

\subsection{Finite-Sample Behavior of Class-Conditional Coverage}
When applying our PA scores within QCCP, we obtain population-level guarantees of the form
\[
1-\alpha \le \Pr(Y \in \widehat C(X) \mid X \in G) \le 1-\alpha + \mathrm{slack},
\]
for each cluster $G$, where the slack term decreases with the effective sample size within the cluster (see Lemma~\ref{lem:QCCP}). 

It is important to note that this guarantee holds at the population level, rather than for finite-sample empirical estimates. In practice, when we estimate coverage within each cluster using a finite number of samples, the observed coverage rate will generally fluctuate around the target level $1-\alpha = 0.90$. These deviations arise from two sources: (i) sampling variability due to finite data, and (ii) the presence of a nonzero slack term, which depends on the size and heterogeneity of each cluster. As a result, empirical cluster-wise coverage is not expected to match $0.90$ exactly, even when the theoretical guarantee holds.
Therefore, moderate deviations from $0.90$ in our experiments should not be interpreted as a violation of coverage guarantees, but rather as a natural consequence of finite-sample estimation under the QCCP framework.

To illustrate this phenomenon, we conduct three additional analyses in the semi-reworded setting, varying the number of clusters $K \in \{5, 7, 10, 20\}$.

\paragraph{Wilson Interval Analysis.}
To assess whether deviations from the nominal coverage are due to finite-sample variability, we construct Wilson 95\% confidence intervals for the empirical coverage of each cluster. 
Pooling clusters across datasets and choices of $K \in \{5, 7, 10, 20\}$, we observe a clear pattern. Among clusters with empirical coverage in $[0.85, 0.90)$---which appear mildly under-covered in Figure~\ref{fig:an2-classwise-condconf}---a large majority ($85.9\%$, $67/78$) have confidence intervals that still contain the nominal $0.90$ target. This suggests that these moderate deviations are largely attributable to finite-sample fluctuations. 
In contrast, for clusters with empirical coverage below $0.85$, none ($0/13$) of the corresponding confidence intervals contain $0.90$, indicating that these larger deviations are unlikely to be explained by sampling variability alone.

\paragraph{Effect of Cluster Sample Size.}
Cluster size has a clear impact on empirical coverage. Larger clusters tend to track the nominal target more closely, as expected from the theoretical guarantees. 
We partition PA-QCCP clusters into terciles based on calibration sample size (thresholds $q_1 = 166.67$ and $q_2 = 318.67$, with $168$ clusters in total). The average coverage increases monotonically across these groups: $0.8963$ (smallest), $0.9025$ (middle), and $0.9059$ (largest). 
A similar trend appears in the tail behavior. The fraction of clusters with empirical coverage $\le 0.85$ drops sharply from approximately $10.7\%$ in the smaller two terciles to $1.8\%$ in the largest tercile. 
These results confirm that effective local sample size plays a key role in stabilizing coverage. At the same time, it is not the sole factor: some large clusters, particularly in semantically challenging regions, still exhibit noticeable deviations.

\begin{table}[t]
\centering
\caption{Cluster-size-weighted coverage deviation (WCD, $\downarrow$) under QCCP in the semi-reworded setting on \texttt{Qwen3-8B}. Lower values indicate more uniform alignment with the nominal $1-\alpha = 0.9$ target. Bold marks the best method per dataset.}
\label{tab:wcd-qwen3}
\small
\begin{tabular}{l|ccc}
\toprule
\textbf{Dataset} & \textbf{LAC} & \textbf{APS} & \textbf{PA} \\
\midrule
QA (MMLU) & 0.0555 & 0.0687 & \textbf{0.0461} \\
CI (HellaSwag) & 0.0402 & 0.0860 & \textbf{0.0293} \\
DRS (HaluDial) & 0.0787 & 0.0476 & \textbf{0.0262} \\
\bottomrule
\end{tabular}
\end{table}

\paragraph{Cluster-Size-Weighted Coverage Deviation.}
To summarize cluster-level alignment with a single scalar, we define the \emph{cluster-size-weighted coverage deviation} (WCD):
\begin{equation}
\mathrm{WCD} \;=\; \sum_{k=1}^{K} \frac{n_k}{N}\, \big| \mathrm{CR}_k - (1-\alpha) \big|,
\end{equation}
where $n_k$ is the size of cluster $k$, $N = \sum_k n_k$, and $\mathrm{CR}_k$ is the empirical coverage in cluster $k$. Lower values indicate better alignment with the nominal target. Since all methods share the same cluster assignment, WCD provides a fair and comparable summary across baselines.

As shown in Table~\ref{tab:wcd-qwen3}, PA achieves the lowest WCD on QA, CI, and DRS at $K = 20$ on \texttt{Qwen3-8B}, indicating more uniform alignment with the $0.9$ target across semantic clusters than LAC or APS. 

\subsection{Robustness under the Semi-Reworded Setting}
\label{appendix:semi-reworded-setting}
Recall that the two versions of each dataset in our evaluation are defined as $\mathcal{D}_{\text{orig}} = \{(X_i, Y_i)\}_{i=1}^N$ and $\mathcal{D}_{\text{reworded}} = \{(\mathcal{T}(X_i), Y_i)\}_{i=1}^N$, where $\mathcal{T}$ represents the rewording operation. In the semi-reworded setting, we test a realistic semantic shift scenario where the calibration set is drawn from $\mathcal{D}_{\text{orig}}$ (human-written inputs), but the test set is drawn from $\mathcal{D}_{\text{reworded}}$ (LLM-reworded inputs). Under this shift, the exchangeability assumption required for standard conformal prediction is violated for baselines like LAC and APS, as the distribution of their nonconformity scores (which depend on raw logits) changes between calibration and testing. 

However, we hypothesize that our PA score remains robust to this shift. Our PA score is calculated by aggregating predictions over the set of paraphrases. We denote this paraphrasing generation as $\mathcal{T}^*$ to distinguish between the rewording used to create a distinct dataset ($\mathcal{T}$) and the paraphrasing used to calculate the PA score ($\mathcal{T}^*)$. Consequently, for a calibration point $X$, the score $S(X, Y)$ depends on $\mathcal{T}^*(X)$. For a test point $\mathcal{T}(X)$, the score depends on the composition $\mathcal{T}^*(\mathcal{T}(X))$. Invoking Assumption \ref{asmp:main} (distributional closedness), we posit that $\mathcal{T}^*(\mathcal{T}(X)) \stackrel{d}{\approx} \mathcal{T}^*(X)$. Under the exact alignment and score conditions stated in our analysis, calibration and test scores retain the required exchangeability. Under approximate alignment, only the corresponding coverage-deviation bound applies. We do not establish these conditions for an arbitrary external rewriter and internal generator. Please refer to Table \ref{tab:theory-impl-diagnostics} which reports the observed clean-to-rewrite change at a fixed cutoff.

\paragraph{Cluster $\Phi$ training in the semi-reworded setting.} For QCCP, the conditioning feature $\Phi(X)$ must remain exchangeable between calibration and test paired with the score $s(X,Y)$. In the semi-reworded setting, calibration uses $X$ but test inputs are $\mathcal{T}(X)$. To preserve $(\Phi, s)$ exchangeability for QCCP under this shift, we therefore train and apply the SBERT cluster $\Phi$ on a fixed paraphrase $\mathcal{T}^\dagger(X)$ (a single LM rewording, identical for calibration and test pieces), rather than on the raw original or raw rewritten text. The PA / LAC / APS / TRON / SPUQ scores themselves are computed exactly as in the main text; only the input to the cluster encoder is changed. 

We further evaluate our hypothesis in the semi-reworded setting using \texttt{Llama-3.1-8B} as both the dataset-level rewriter ($\mathcal{T}$) and the internal paraphrase generator ($\mathcal{T}^*$). As shown in Table~\ref{tab:llama-semi}, LAC performs poorly under this shift, achieving only 87.17\% coverage on CosmosQA and 86.97\% on HellaSwag, both well below the nominal 90\% target. APS overshoots on coverage and does so with substantially inflated prediction sets (SS up to 5.85 on CosmosQA and 4.79 on HellaSwag), making its predictions considerably less actionable.

In contrast, PA delivers the best tradeoff between reliability and efficiency. Under split CP, PA achieves 89.97\% coverage on CosmosQA and 90.27\% on HellaSwag---both tightly aligned with the nominal 90\%---while producing the smallest prediction sets (SS $\approx$ 1.41–2.03). Under QCCP, PA again reaches coverage closest to 90\% and maintains compact sets, substantially outperforming both LAC and APS. Compared with Table \ref{tab:semi-reworded}, these results highlight that the quality of the paraphrase generator $\mathcal{T}^*$ is crucial: with a stronger model such as \texttt{Llama-3.1-8B}, {empirical coverage improves for the evaluated generator.}

\begin{table*}[!ht]
\centering
\vspace{-1mm}
\caption{Performance of nonconformity scores under the \textbf{semi-reworded (Semi.)} setting on \texttt{Llama-3.1-8B} as paraphrase generator. 
Coverage Rate (CR, \%) closer to $90\%$ is better; Set Size (SS) $\downarrow$ is better. 
Results are reported on RC (Cosmos) and CI (HellaSwag). Bold numbers indicate the best performance.}
\vspace{-1mm}
\label{tab:llama-semi}
\begin{tabular}{c|l|
  *{2}{c}|*{2}{c}}
\toprule
& \textbf{Dataset} 
& \textbf{RC } & \textbf{CI } 
& \textbf{RC } & \textbf{CI} \\
\cmidrule(lr){3-4}\cmidrule(lr){5-6}
& Semi. & \multicolumn{2}{c|}{Coverage Rate (CR)} & \multicolumn{2}{c}{Set Size (SS)} \\

\midrule
\multirow{3}{*}{\textit{Split CP}} 
& LAC 
& 87.17 & 86.97 
& 3.51 & 3.91 \\
& APS 
& 99.33 & 90.90 
& 5.85 & 4.79 \\
& PA 
& \textbf{89.97} & \textbf{90.27} 
& \textbf{1.41} & \textbf{2.03} \\

\midrule
\multirow{3}{*}{\textit{QCCP}} 
& LAC 
& 87.00 & 87.77 
& 3.51 & 3.98 \\
& APS 
& 97.40 & 95.10 
& 5.41 & 5.33 \\
& PA 
& \textbf{90.00} & \textbf{90.43} 
& \textbf{1.43} & \textbf{2.01} \\

\bottomrule
\end{tabular}
\end{table*}

{\subsection{Quantifying Coverage Deviations in the Semi-Reworded Setting}
\label{appendix:theory-impl-gap}
Our implementation uses an external rewriter $\mathcal T$ and a separate
internal paraphrase generator $\mathcal T^*$. Exact distributional
closedness is a sufficient idealization; neither the exact theorem nor
its approximate bound establishes that this implemented pipeline
satisfies the assumption. We therefore examine the realized score
mismatch at the conformal decision boundary.}

{For each dataset and seed, we freeze the clean-trained proxy and the
clean split-CP cutoff $\hat q$, then score the same test questions in
original and externally rewritten form. Let $F_{\mathrm{clean}}$ and
$F_{\mathrm{rewrite}}$ be the empirical CDFs of their true-label
nonconformity scores. We report the signed cutoff-local change
\begin{equation}
\Delta_{\hat q}=F_{\mathrm{rewrite}}(\hat q)-F_{\mathrm{clean}}(\hat q).
\end{equation}
A negative value means that rewriting moves true-label score mass
above the cutoff, creating undercoverage pressure. The global KS
distance is $\sup_t|F_{\mathrm{rewrite}}(t)-F_{\mathrm{clean}}(t)|$.} KS is the maximum absolute gap between the two CDFs over all thresholds and therefore measures global shift.

\begin{table}[t]{
\centering
\caption{{Paired fixed-clean-cutoff diagnostics on \texttt{Qwen3-8B}
with $S_{\mathrm{mean}}$ and split CP at $\alpha=0.1$. Each row is the
mean over seeds 7, 42, and 123. The proxy and clean calibration cutoff
are held fixed while the same test questions are evaluated before and
after rewriting. Coverage error is relative to the nominal $90\%$
target; $\Delta_{\hat q}$ is relative to clean empirical coverage.}}
\label{tab:theory-impl-diagnostics}
\small
\setlength{\tabcolsep}{4pt}
\begin{tabular}{@{}lrrrr@{}}
\toprule
Dataset & Semi CR (\%) & Error from 90\% (pp) & $\Delta_{\hat q}$ (pp) & KS \\
\midrule
MMLU              & 85.5 & $-4.5$ & $-4.5$ & 0.059 \\
CosmosQA          & 89.1 & $-0.9$ & $-1.1$ & 0.052 \\
HellaSwag         & 89.5 & $-0.5$ & $-0.5$ & 0.084 \\
HaluDialogue      & 87.6 & $-2.4$ & $-2.7$ & 0.135 \\
HaluSummarization & 93.1 & $+3.1$ & $+2.9$ & 0.085 \\
\bottomrule
\end{tabular}
}\end{table}

{The two middle columns need not be numerically identical: the coverage error is measured relative to the nominal 90\% target, whereas $\Delta_{\hat q}$ is measured relative to clean empirical coverage. For example, MMLU loses approximately 4.5 percentage points of true-label mass below the operative cutoff and undercovers by 4.5 points. In contrast, HaluDialogue has the largest global KS distance (0.135), but only a 2.7-point local shift and a 2.4-point coverage error, showing why global score shift alone does not determine coverage.

Across the 15 underlying dataset--seed cells, rather than the five displayed
means, cutoff-local shift and Semi coverage deviation have Pearson
$r=0.983$ (paired-question bootstrap 95\% CI $[0.935,0.985]$).
At a fixed cutoff, this local shift directly accounts for the observed clean-to-rewrite empirical coverage change.}

\subsection{Semantic Consistency of Generated Paraphrases}
\label{appendix:paraphrase-quality}
To verify that the generated paraphrases approximately preserve the semantics of the original questions, we conduct a quantitative paraphrase quality evaluation across CosmosQA (RC), HellaSwag (CI), and MedMCQA (medical MCQA). For each dataset, we use the same paraphrase generation pipelines as in the main experiments and evaluate all paraphrases that are actually used for conformal prediction.

\paragraph{Paraphrase sets and generators.}
Recall that for each original question $x$ we use three paraphrase-related sets:
(i) $\mathcal{T}^*(x)$, the set of paraphrases used for PA score computation in the normal setting;
(ii) $\mathcal{T}(x)$, the fully reworded questions used in the fully reworded setting; and
(iii) $\mathcal{T}^*(\mathcal{T}(x))$, the paraphrases of fully reworded questions used for PA score computation in the fully reworded setting.
We instantiate these sets using two independent LLM paraphrase generators:
\texttt{Llama-3.1-8B} and \texttt{Qwen3-8B}.
All quality metrics are reported separately for each generator and each scenario.

\paragraph{Metrics.}
Given an original question $x$ and a paraphrase $p$, we compute three standard similarity metrics:
\begin{enumerate}[leftmargin=*]
    \item \textbf{SBERT cosine similarity.}
    We use the \texttt{all-MiniLM-L6-v2} SentenceTransformer to embed both $x$ and $p$, normalize the embeddings, and compute
    \[
        \cos(x, p) \;=\; \langle \mathrm{SBERT}(x), \mathrm{SBERT}(p) \rangle.
    \]
    This serves as a semantic similarity measure that is robust to minor lexical changes.
    \item \textbf{BLEU.}
    We compute sentence-level BLEU scores using \texttt{sacrebleu} with effective order enabled, treating the original question as the reference and the paraphrase as the hypothesis.
    \item \textbf{ROUGE-L.}
    We compute ROUGE-L F-measure using the \texttt{rouge\_score} library with stemming enabled, again using the original question as the reference and the paraphrase as the candidate.
\end{enumerate}

\paragraph{Aggregation and reporting.}
For each question $x$ and each paraphrase set (e.g., $\mathcal{T}^*(x)$), we first compute the metric values for all paraphrases $p \in \mathcal{T}^*(x)$ and then average them to obtain a single per-question score.
We repeat this for SBERT cosine, BLEU, and ROUGE-L.
At the dataset level, we report the mean and standard deviation across all questions that have at least one paraphrase in the corresponding set.
This yields, for each dataset, scenario, and model, a tuple of the form
\[
    \text{metric} \;=\; \mu \pm \sigma,
\]
which we summarize in Table~\ref{tab:paraphrase-quality}.

\paragraph{Findings.}
Across CosmosQA and MedMCQA, we observe moderate to high SBERT cosine similarity (typically in the range $0.61$--$0.87$, with Qwen3-8B paraphrases consistently above $0.81$), indicating that the paraphrases remain close in semantic space to the original questions while allowing stylistic variation.
HellaSwag exhibits lower cosine scores ($0.37$--$0.59$), which is expected due to its longer, multi-sentence contexts and richer surface-level variability.
BLEU and ROUGE-L follow the same pattern across datasets and paraphrase scenarios.
Overall, these results suggest that the paraphrase sets $\mathcal{T}^*(x)$, $\mathcal{T}(x)$, and $\mathcal{T}^*(\mathcal{T}(x))$ largely preserve the semantics of the original questions and are suitable for use in our paraphrase-aware conformal prediction framework.

\begin{table*}[ht!]
\centering
\caption{\textbf{Paraphrase quality evaluation} using Llama-3.1-8B and Qwen3-8B as paraphrase generators.
We report SBERT cosine similarity, BLEU, and ROUGE-L (mean $\pm$ std) for three paraphrase scenarios:
$\mathcal{T}^*(x)$,
$\mathcal{T}(x)$,
and $\mathcal{T}^*(\mathcal{T}(x))$.}
\label{tab:paraphrase-quality}
\small
\vspace{-2mm}

\begin{tabular}{lccc}
\toprule
\multicolumn{4}{c}{\textbf{RC (CosmosQA)}} \\
\cmidrule(lr){1-4}
\textbf{Scenario / Model} & \textbf{Cosine} & \textbf{BLEU} & \textbf{ROUGE-L} \\
\midrule
\multicolumn{4}{l}{\textit{$\mathcal{T}^*(x)$ --- paraphrases for human-written inputs}} \\
Llama & 0.82 $\pm$ 0.10 & 16.68 $\pm$ 8.57 & 45.18 $\pm$ 12.02 \\
Qwen3 & 0.81 $\pm$ 0.10 & 16.06 $\pm$ 8.60 & 42.19 $\pm$ 11.64 \\
\addlinespace[3pt]

\multicolumn{4}{l}{\textit{$\mathcal{T}(x)$ --- LLM-reworded inputs}} \\
Llama & 0.70 $\pm$ 0.21 & 11.94 $\pm$ 13.24 & 35.84 $\pm$ 19.98 \\
Qwen3 & 0.84 $\pm$ 0.11 & 19.28 $\pm$ 14.52 & 49.17 $\pm$ 17.48 \\
\addlinespace[3pt]

\multicolumn{4}{l}{\textit{$\mathcal{T}^*(\mathcal{T}(x))$ --- paraphrases of LLM-reworded inputs}} \\
Llama & 0.70 $\pm$ 0.20 & 11.73 $\pm$ 9.83 & 35.62 $\pm$ 16.60 \\
Qwen3 & 0.82 $\pm$ 0.11 & 19.07 $\pm$ 11.22 & 48.20 $\pm$ 13.76 \\
\bottomrule
\end{tabular}

\vspace{4mm}

\begin{tabular}{lccc}
\toprule
\multicolumn{4}{c}{\textbf{CI (HellaSwag)}} \\
\cmidrule(lr){1-4}
\textbf{Scenario / Model} & \textbf{Cosine} & \textbf{BLEU} & \textbf{ROUGE-L} \\
\midrule
\multicolumn{4}{l}{\textit{$\mathcal{T}^*(x)$ --- paraphrases for human-written inputs}} \\
Llama & 0.49 $\pm$ 0.03 &  9.80 $\pm$ 2.08 & 34.39 $\pm$ 4.53 \\
Qwen3 & 0.54 $\pm$ 0.03 &  8.96 $\pm$ 2.85 & 31.08 $\pm$ 4.97 \\
\addlinespace[3pt]

\multicolumn{4}{l}{\textit{$\mathcal{T}(x)$ --- LLM-reworded inputs}} \\
Llama & 0.37 $\pm$ 0.15 &  6.71 $\pm$ 4.64 & 24.65 $\pm$ 11.55 \\
Qwen3 & 0.57 $\pm$ 0.02 &  9.11 $\pm$ 3.53 & 39.16 $\pm$ 6.78 \\
\addlinespace[3pt]

\multicolumn{4}{l}{\textit{$\mathcal{T}^*(\mathcal{T}(x))$ --- paraphrases of LLM-reworded inputs}} \\
Llama & 0.37 $\pm$ 0.13 &  6.66 $\pm$ 2.75 & 24.61 $\pm$ 8.13 \\
Qwen3 & 0.59 $\pm$ 0.06 & 20.91 $\pm$ 7.20 & 48.79 $\pm$ 7.13 \\
\bottomrule
\end{tabular}

\vspace{4mm}

\begin{tabular}{lccc}
\toprule
\multicolumn{4}{c}{\textbf{MedMCQA}} \\
\cmidrule(lr){1-4}
\textbf{Scenario / Model} & \textbf{Cosine} & \textbf{BLEU} & \textbf{ROUGE-L} \\
\midrule
\multicolumn{4}{l}{\textit{$\mathcal{T}^*(x)$ --- paraphrases for human-written inputs}} \\
Llama & 0.81 $\pm$ 0.10 & 10.41 $\pm$ 6.50 & 38.25 $\pm$ 11.39 \\
Qwen3 & 0.84 $\pm$ 0.08 & 12.32 $\pm$ 7.74 & 40.16 $\pm$ 11.59 \\
\addlinespace[3pt]

\multicolumn{4}{l}{\textit{$\mathcal{T}(x)$ --- LLM-reworded inputs}} \\
Llama & 0.61 $\pm$ 0.27 & 5.81 $\pm$ 5.69 & 24.51 $\pm$ 14.60 \\
Qwen3 & 0.87 $\pm$ 0.09 & 14.81 $\pm$ 11.51 & 46.09 $\pm$ 15.90 \\
\addlinespace[3pt]

\multicolumn{4}{l}{\textit{$\mathcal{T}^*(\mathcal{T}(x))$ --- paraphrases of LLM-reworded inputs}} \\
Llama & 0.61 $\pm$ 0.27 & 5.81 $\pm$ 5.69 & 24.51 $\pm$ 14.60 \\
Qwen3 & 0.84 $\pm$ 0.10 & 12.33 $\pm$ 8.88 & 40.71 $\pm$ 13.43 \\
\bottomrule
\end{tabular}

\end{table*}

{\subsection{End-to-End Runtime}
\label{appendix:runtime-analysis}
We measure the complete online path on one NVIDIA B200 with
\texttt{Qwen3-8B}. Online Full PA includes paraphrase generation,
backbone inference, proxy scoring, aggregation, and prediction-set
construction. The cached column assumes paraphrases and hidden
representations have already been stored. All entries are mean
milliseconds per question.}

\begin{table}[t]{
\centering
\caption{{End-to-end latency (ms/question) on one NVIDIA B200.
Batch denotes the inference batch size; $m$ is the number of
paraphrases. Cached Full PA excludes paraphrase generation and
backbone representation extraction.}}
\label{runtime-analysis}
\small
\setlength{\tabcolsep}{4pt}
\begin{tabular}{@{}llrrrrr@{}}
\toprule
Dataset & Batch & Raw LAC & Aug-Proxy Single &
Full PA $m{=}2$ & Full PA $m{=}6$ &
Cached $m{=}6$ \\
\midrule
MMLU     & 1  & 19.51 & 18.73 & 1599.76 & 1626.58 & 0.200 \\
CosmosQA & 1  & 24.14 & 23.52 & 1565.11 & 1611.55 & 0.195 \\
MMLU     & 16 &  9.63 &  9.57 &  187.54 &  357.37 & 0.056 \\
CosmosQA & 16 & 10.78 & 10.74 &  150.32 &  245.09 & 0.056 \\
\bottomrule
\end{tabular}
}\end{table}

{In these measurements, online Full PA with $m=6$ is
22.7--83.4 times slower than raw single-input inference; paraphrase
generation accounts for 67--97\% of the online time. Once the views
are cached, the remaining scoring, aggregation, and set construction
take at most 0.20 ms per question. The online generation cost is
therefore a practical limitation when latency matters.}

% \clearpage
\section{Analysis of the Proxy Model}
\label{appendix:proxy-vs-pa}

A natural question is whether the gain of our method comes from the proxy classifier alone---which improves the calibration of each individual prediction---or from the paraphrase-aware (PA) score that aggregates variability across rewordings. To disentangle the two, we compare four variants under the same \texttt{Qwen3-8B} backbone and the same CP/QCCP wrapper:
\begin{enumerate}[leftmargin=*]
\item \textbf{Full PA} (proxy $+$ paraphrase aggregation): the method proposed in the main paper.
\item \textbf{Paraphrase-only}: PA score computed directly from raw LLM logits, with no proxy classifier.
\item \textbf{{Aug-Proxy Single}}: {the paraphrase-augmented proxy evaluated on a single input view, without inference-time aggregation.}
\item \textbf{Neither} ($\equiv$ LAC): raw logits with no proxy and no paraphrase aggregation.
\end{enumerate}

\paragraph{Removing the Proxy Alone.} We first report proxy-removal ablation results under split CP in Figure~\ref{fig-appendix:nomlp-ablation-split}; the same trend holds under QCCP (Figure~\ref{fig:nomlp-ablation}). Coverage remains close to $90\%$, but set sizes inflate substantially without the proxy.

\begin{figure*}[!ht]
\vspace{-10pt}
  \centering
  \includegraphics[width=0.45\textwidth]{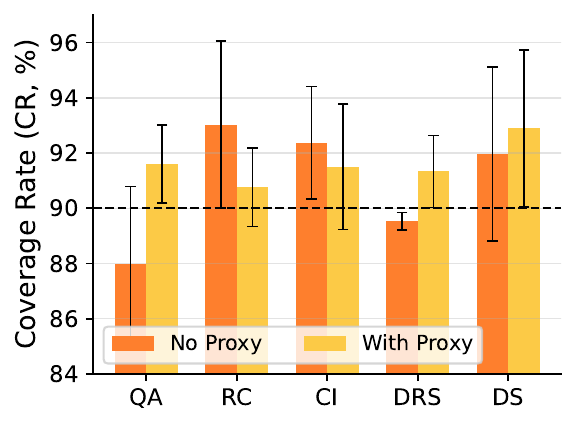}
  \includegraphics[width=0.45\textwidth]{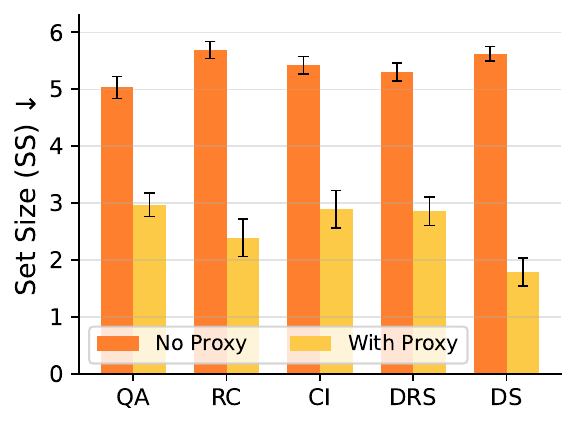}
\vspace{-5pt}
  \caption{Effect of removing the proxy model under the split CP setting on \texttt{Qwen3-8B}.
  Coverage remains close to $90\%$ (dashed line), but set size increases notably without the proxy model.}
  \label{fig-appendix:nomlp-ablation-split}
\end{figure*}

\begin{figure*}[!ht]
  \centering
  \vspace{-4mm}
  \includegraphics[width=0.36\textwidth]{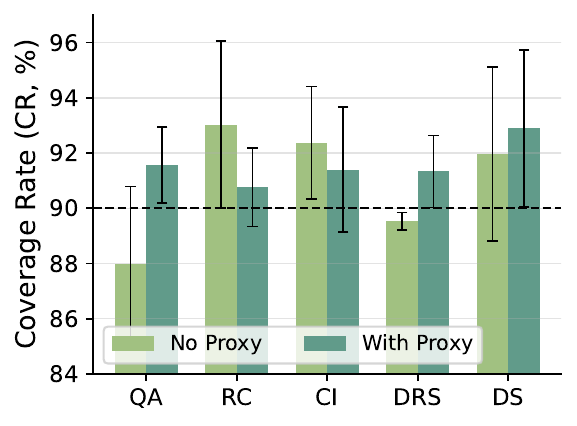}
  \includegraphics[width=0.36\textwidth]{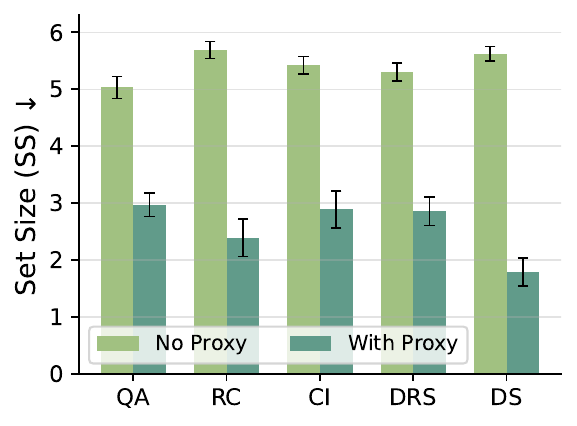}
  \vspace{-2mm}
  \caption{{Effect of removing the proxy model under the quasi-conditional CP setting.} 
  Coverage remains close to $90\%$ (dashed line), but set size increases notably without the proxy model.}
  \label{fig:nomlp-ablation}
  \vspace{-2mm}
\end{figure*}

\paragraph{{Single-Run Attribution.}} 
Table~\ref{tab:proxy-pa-attribution} reports CR/SS for the three PA variants in both the semi-reworded and fully reworded settings, under both CP and QCCP. The pattern is consistent across datasets:

\begin{itemize}[leftmargin=*]
    \item In the \textbf{semi-reworded} setting, \emph{paraphrase-only} becomes unstable without proxy calibration: under split CP it drops to $84.83\%$ on QA and $87.93\%$ on CosmosQA, while under QCCP it swings to $94.20\%/4.53$ on HaluDial. \emph{{Aug-Proxy Single}} is more stable but tends to be conservative, especially on DS where QCCP reaches $95.30\%$ coverage. The \emph{full method} is usually the most efficient variant, keeping the smallest or tied-smallest sets while staying close to nominal on RC, CI, and DRS.
    \item In the \textbf{fully reworded} setting, all three variants are close to nominal coverage, but the full method remains the most efficient in most columns: on CosmosQA under split CP its set size is $1.17$ versus $1.77$ for \emph{paraphrase-only}, and on HaluDial under QCCP it is $1.39$ versus $4.86$. Even when \emph{{augmented-proxy single-view}} is competitive, the full method still matches or slightly improves its set size on nearly every dataset.
\end{itemize}

{These results motivate the controlled decomposition in Table~\ref{tab:factorized-attribution}: the proxy supplies most of the compactness, and same-checkpoint aggregation supplies a smaller stability gain.}

\begin{table*}[th!]
\centering
\vspace{-1mm}
\caption{{Single-run attribution of the PA score} on \texttt{Qwen3-8B}. Each cell reports \textbf{CR / SS} (coverage rate, \%, closer to $90$ is better; set size $\downarrow$ is better). \textbf{Para+Proxy} denotes the full method, \textbf{Para only} removes the proxy, and \textbf{{Aug-Proxy Single}} {removes inference-time aggregation while retaining paraphrase-augmented training.} Bold entries highlight severe under- or over-coverage exhibited by single-component variants. {These single-run rows retain the paraphrase-trained proxy in the single-view arm and are kept separate from the three-seed controls in Table~\ref{tab:factorized-attribution} and Table~\ref{tab:separate-same-checkpoint-settings}; they are not pooled with those later three-seed controls.}}
\vspace{-1mm}
\label{tab:proxy-pa-attribution}
\small
\resizebox{\textwidth}{!}{
\begin{tabular}{l|l|ccccc}
\toprule
& \textbf{Variant} & \textbf{QA (MMLU)} & \textbf{RC (Cosmos)} & \textbf{CI (HellaSwag)} & \textbf{DRS (HaluDial)} & \textbf{DS (HaluSum)} \\
\midrule
\multicolumn{7}{l}{\textit{Semi-reworded, Split CP}}\\
& Para + Proxy (Full) & 87.10 / \textbf{1.85} & \textbf{90.10} / \textbf{1.12} & 89.13 / \textbf{1.28} & \textbf{89.67} / \textbf{1.36} & 94.13 / \textbf{1.00} \\
& Para only           & 84.83 / 2.21 & 87.93 / 1.53 & 87.30 / 1.60 & 86.23 / 2.72 & \textbf{88.20} / 2.49 \\
& {Aug-Proxy Single}          & \textbf{89.53} / 2.02 & 90.47 / 1.14 & \textbf{89.37} / 1.29 & 90.87 / 1.43 & 95.20 / \textbf{1.00} \\
\addlinespace[2pt]
\multicolumn{7}{l}{\textit{Semi-reworded, QCCP}}\\
& Para + Proxy (Full) & 89.50 / \textbf{2.01} & \textbf{90.67} / \textbf{1.39} & 89.53 / \textbf{1.36} & \textbf{91.10} / \textbf{1.70} & 94.20 / \textbf{1.11} \\
& Para only           & 87.73 / 2.54 & 88.70 / 1.94 & 88.63 / 2.06 & 94.20 / 4.53 & \textbf{89.90} / 3.21 \\
& {Aug-Proxy Single}          & \textbf{89.77} / 2.16 & 90.73 / 1.40 & \textbf{89.90} / 1.51 & 91.87 / 1.76 & 95.30 / \textbf{1.11} \\
\midrule
\multicolumn{7}{l}{\textit{Fully reworded, Split CP}}\\
& Para + Proxy (Full) & 90.37 / \textbf{2.13} & 90.97 / \textbf{1.17} & \textbf{90.03} / 1.34 & \textbf{90.57} / \textbf{1.35} & 90.50 / \textbf{1.07} \\
& Para only           & \textbf{89.77} / 2.65 & \textbf{90.20} / 1.77 & 89.77 / 1.79 & 91.13 / 3.32 & \textbf{89.73} / 2.73 \\
& {Aug-Proxy Single}          & 90.57 / 2.15 & 91.03 / 1.19 & 89.73 / \textbf{1.32} & 91.37 / 1.41 & 90.47 / \textbf{1.07} \\
\addlinespace[2pt]
\multicolumn{7}{l}{\textit{Fully reworded, QCCP}}\\
& Para + Proxy (Full) & \textbf{90.43} / \textbf{2.40} & \textbf{90.97} / \textbf{1.37} & 91.80 / \textbf{1.73} & \textbf{90.33} / \textbf{1.39} & \textbf{90.73} / \textbf{1.31} \\
& Para only           & 91.47 / 3.12 & \textbf{90.97} / 2.08 & \textbf{91.50} / 2.69 & 97.00 / 4.86 & 94.97 / 4.46 \\
& {Aug-Proxy Single}          & 90.50 / 2.42 & 91.80 / 1.41 & 91.80 / 1.74 & 90.77 / 1.44 & 91.27 / \textbf{1.31} \\
\bottomrule
\end{tabular}}
\end{table*}

{\paragraph{Training-View Control.} The main factorized comparison is in Table~\ref{tab:factorized-attribution}.
Here we distinguish standard Orig-only training, which uses each
original question once, from Repeat-Orig training, which repeats the
same originals to match the augmented run's number of training rows
and optimizer updates. Augmented training instead uses the originals
and up to six meaning-preserving paraphrases with their original labels.}

\begin{table}[t]{
\centering
\caption{{Training-view control for the hidden-state proxy on
\texttt{Qwen3-8B}, averaged equally over five datasets and three seeds.
All rows use single-view inference. Score drift is the mean absolute
clean/rewrite change in the true-label nonconformity score. Repeat-Orig
controls for repeated training updates; standard Orig-only is the
primary no-paraphrase training baseline.}}
\label{tab:proxy-training-views}
\small
\setlength{\tabcolsep}{5pt}
\begin{tabular}{@{}lrrrrr@{}}
\toprule
Proxy training & Rewrite Acc.\ $\uparrow$ & Semi CR &
Semi SS $\downarrow$ & Score drift $\downarrow$ &
Set change $\downarrow$ \\
\midrule
Standard Orig-only   & 79.80 & 89.58 & 1.3725 & 0.0596 & 0.2009 \\
Repeat-Orig          & 79.64 & 89.27 & 1.3717 & 0.0635 & 0.2188 \\
Paraphrase-augmented & 80.38 & 89.96 & 1.3587 & 0.0574 & 0.2029 \\
\bottomrule
\end{tabular}
}\end{table}

{Relative to standard Orig-only, paraphrase-augmented training improves
rewritten-input accuracy by 0.58 percentage points and reduces
true-label score drift by 3.7\%. Repeat-Orig performs slightly worse
than standard Orig-only on both measures, so the improvement cannot
be attributed simply to seeing more copies of the original questions.
The set-change measure for augmented single-view inference is
slightly higher than for Orig-only (0.2029 versus 0.2009); the
reduction to 0.1708 occurs when the same augmented checkpoint is
used with six-view inference (Table~\ref{tab:factorized-attribution}).}

{\paragraph{Set size at closely matched empirical coverage.}
The fixed-$\alpha$ comparison in Table~\ref{tab:factorized-attribution}
uses the same nominal coverage target for every method, but achieved
coverage can differ. As a separate descriptive analysis in the
semi-reworded setting, we use one half of the shifted examples to
choose an operating point near 90\% coverage for each method and
evaluate on the other half. Both methods use the same
paraphrase-augmented proxy checkpoint. The results are shown in Table \ref{tab:matched-coverage-attribution}. At approximately matched achieved coverage, Full PA has smaller
sets on four of the five datasets and is effectively tied on
HaluSummarization.}

\begin{table}[t]{
\centering
\caption{{Held-out, approximately coverage-matched comparison in the
Semi setting. The coverage columns report achieved coverage on the
held-out evaluation half. Negative Full PA minus Single set-size
differences favor Full PA. This is a separate operating-point
analysis, not the fixed-$\alpha$ experiment.}}
\label{tab:matched-coverage-attribution}
\small
\setlength{\tabcolsep}{5pt}
\begin{tabular}{@{}lrrr@{}}
\toprule
Dataset & Single CR (\%) & Full PA CR (\%) &
Full PA $-$ Single SS \\
\midrule
MMLU              & 89.81 & 89.43 & $-0.0022$ \\
CosmosQA          & 91.27 & 91.03 & $-0.0233$ \\
HellaSwag         & 89.90 & 89.43 & $-0.0404$ \\
HaluDialogue      & 90.12 & 90.18 & $-0.0178$ \\
HaluSummarization & 89.27 & 89.36 & $+0.0002$ \\
\bottomrule
\end{tabular}
}\end{table}

\begin{table}[t]{
\centering
\caption{{Paired prediction-set stability in the same separate study.
Set change is the mean symmetric-difference size
$|\widehat C(x)\triangle\widehat C(\mathcal T(x))|$, the number of labels
entering or leaving a set under rewriting. Lower is better. Values are five-dataset macro means $\pm$ sample standard deviation across seeds 7, 42, and 123.}}
\label{tab:separate-same-checkpoint-stability}
\small
\begin{tabular}{@{}lccc@{}}
\toprule
Proxy and cutoff fitted on & Aug-Proxy Single & Full PA & Relative change \\
\midrule
Original questions & $0.1994\pm0.0082$ & $0.1737\pm0.0089$ & $-12.9\%$ \\
Reworded questions & $0.2213\pm0.0004$ & $0.2098\pm0.0043$ & $-5.2\%$ \\
\bottomrule
\end{tabular}
}\end{table}

{\paragraph{Same-Checkpoint Inference Control.}
A separate five-dataset, three-seed study compares one versus six inference views using the same augmented proxy checkpoint, calibration/test IDs, and evaluator. Table~\ref{tab:separate-same-checkpoint-settings} uses different frozen splits and validation-selected checkpoints from Table~\ref{tab:factorized-attribution}. In that study, aggregation reduces mean clean/rewrite set change by 12.9\% with an original-input-fitted cutoff and 5.2\% with a reworded-input-fitted cutoff (Appendix Table~\ref{tab:separate-same-checkpoint-stability}). 
}\begin{table}[t]{
\centering
\caption{{Separate same-checkpoint study of inference-time aggregation
under split CP at $\alpha=0.1$. Both variants use the same
paraphrase-augmented proxy. CR is empirical coverage in percent and
SS is mean prediction-set size. Values are five-dataset macro means $\pm$ sample standard deviation across seeds 7, 42, and 123.}}
\label{tab:separate-same-checkpoint-settings}
\small
\begin{tabular}{@{}lcc@{}}
\toprule
Setting & Aug-Proxy Single: CR / SS & Full PA: CR / SS \\
\midrule
Normal & $90.58\pm0.16\;/\;1.293\pm0.007$ & $90.50\pm0.06\;/\;1.299\pm0.005$ \\
Semi & $90.40\pm0.18\;/\;1.329\pm0.008$ & $89.56\pm0.31\;/\;1.286\pm0.006$ \\
Fully reworded & $90.59\pm0.37\;/\;1.352\pm0.012$ & $90.17\pm0.31\;/\;1.338\pm0.010$ \\
\bottomrule
\end{tabular}
}\end{table}
\vspace{-10pt}

\subsection{Different Class-Conditional Levels for QCCP}

We report class-conditional coverage for QCCP on the remaining datasets in Figure~\ref{fig-appendix:an2-classwise-condconf-others}, complementing Figure~\ref{fig:an2-classwise-condconf} in the main text. The results are consistent with those in the main text: PA tracks the $90\%$ target across clusters, while LAC and APS overshoot and the TRON / SPUQ baselines exhibit larger per-cluster swings.
\begin{figure*}[th!]
  \centering
    \vspace{-3mm}
    \includegraphics[width=0.45\textwidth]{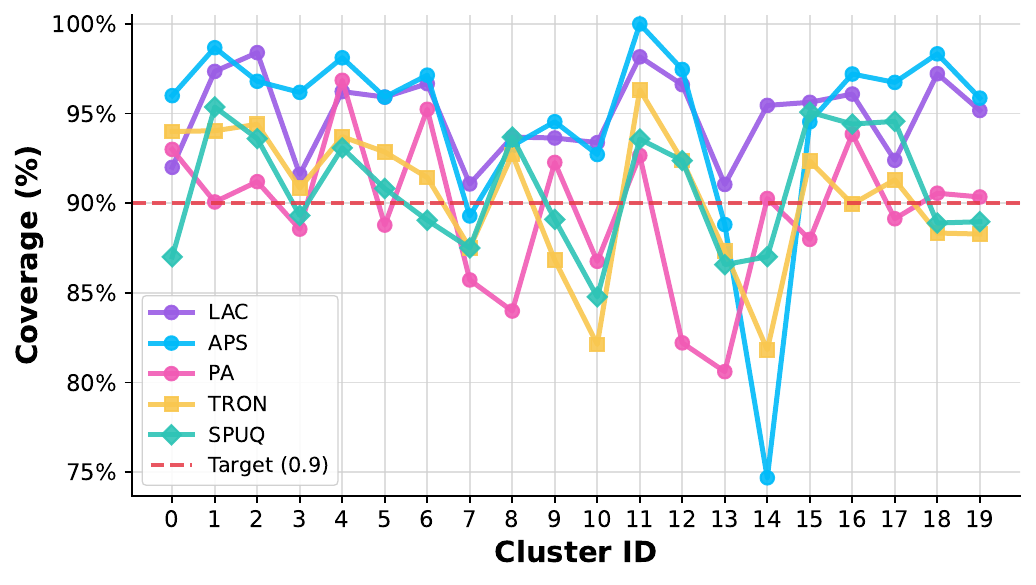}
    \includegraphics[width=0.45\textwidth]{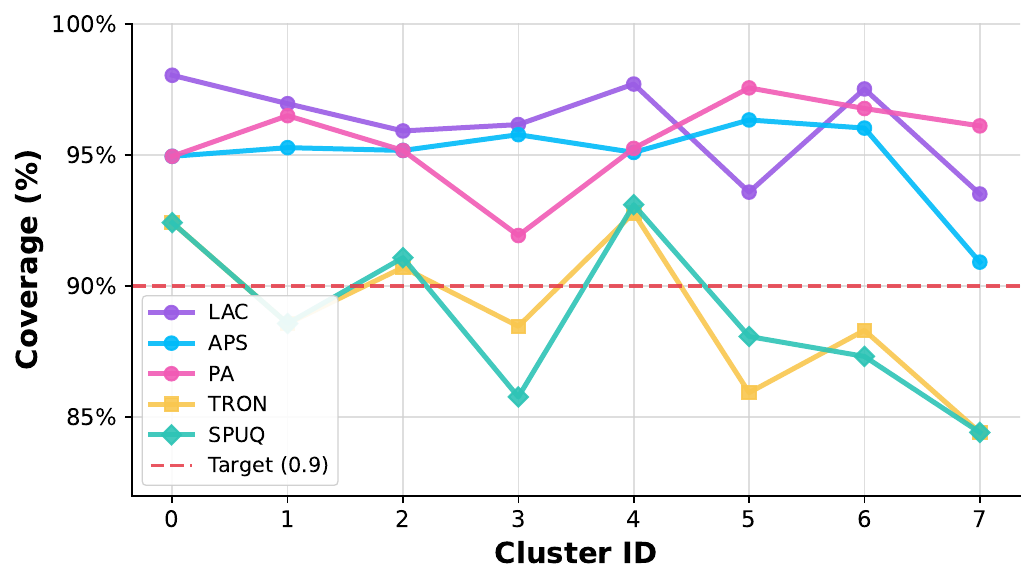}
    \vspace{-1mm}
  \caption{Evaluation of class-conditional coverage with QCCP on the MMLU (left) and HaluSum (right) datasets under \texttt{Qwen3-8B}.}
  \label{fig-appendix:an2-classwise-condconf-others}
\end{figure*}

\subsection{Different \texorpdfstring{$\alpha$}{Alpha}-Level for Split CP}
We report results at different user-specified risk levels under split CP in Figure~\ref{fig-appendix:risk-level-results-split-cp}, which complements Figure~\ref{fig:risk-level-results} for QCCP in the main text. The trends are consistent: PA tracks the nominal $(1-\alpha)$ target most tightly across all three levels (e.g., $79.7\%$ at $\alpha{=}0.2$, $95.1\%$ at $\alpha{=}0.05$, $99.2\%$ at $\alpha{=}0.01$) while keeping the most compact sets ($1.63$, $2.67$, and $3.48$ respectively); APS dramatically overshoots at $\alpha{=}0.2$ ($\approx 95\%$ coverage with set size $\approx 4.5$), and LAC produces strictly larger sets than PA at low $\alpha$ (set size $\approx 4.8$ at $\alpha{=}0.05$ and $\approx 4.9$ at $\alpha{=}0.01$). The TRON and SPUQ baselines track the target reasonably well at small $\alpha$ but produce larger sets than PA at every level.

\begin{figure*}[!ht]
    \centering
    \vspace{-3mm}
    \begin{subfigure}[b]{0.4\textwidth}
        \includegraphics[width=\linewidth]{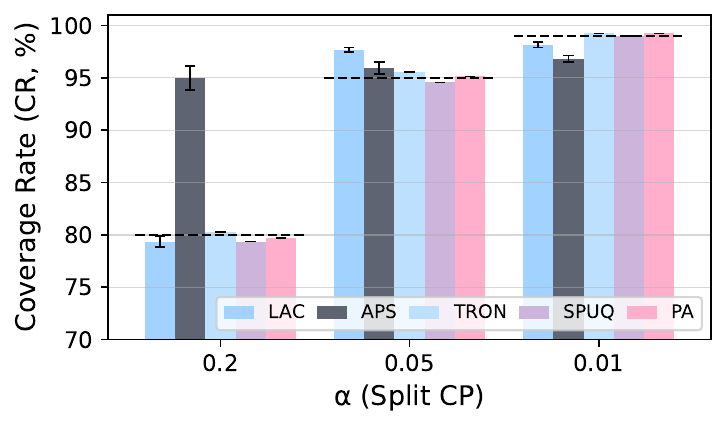}
    \end{subfigure}
    \begin{subfigure}[b]{0.4\textwidth}
        \includegraphics[width=\linewidth]{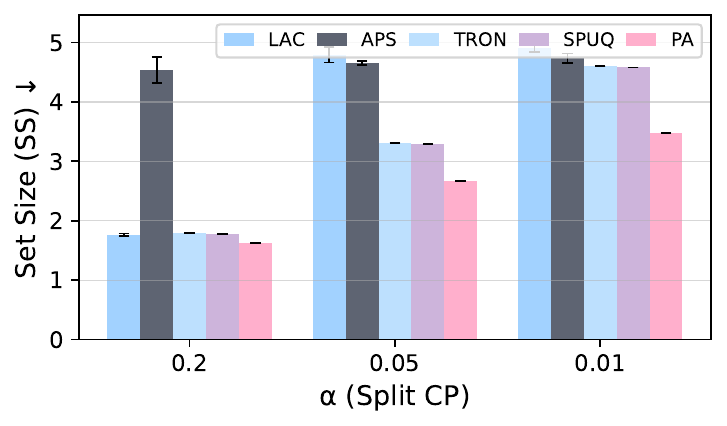}
    \end{subfigure}

    \vspace{-3mm}
    \caption{Coverage rate (left) and prediction set size (right) vs.\ $\alpha$ on MMLU under Split CP, with \texttt{Qwen3-8B} as the backbone.}
    \label{fig-appendix:risk-level-results-split-cp}
\end{figure*}

\subsection{Different Paraphrase Generators}
\label{appendix:different-llm-paraphrase-generator}

To assess sensitivity to the choice of paraphrase generator, we use another model, \texttt{Llama-3.1-8B}, as the paraphrase generator while keeping the rest of the pipeline fixed. In particular, we continue to use \texttt{Qwen3-8B} for embedding extraction. We evaluate this setup on three datasets (CosmosQA, HellaSwag, and MedMCQA) under both normal and fully reworded settings.
As shown in Table~\ref{tab:llama-paraphrase-qwen3}, PA remains robust to this change. Under split CP, it maintains coverage close to the $90\%$ target while achieving smaller set sizes across all datasets. Similar trends are observed under QCCP.

\begin{table*}[!ht]
\centering
\vspace{-1mm}
\caption{{{Performance of nonconformity scores under split CP and QCCP with \texttt{Llama-3.1-8B} as the paraphrase generator and \texttt{Qwen3-8B} as the embedding model.
Results are reported on normal and fully reworded test sets across three benchmarks. Bold numbers indicate the best performance.}}}
\vspace{-1mm}
\label{tab:llama-paraphrase-qwen3}
\begin{tabular}{l|
  *{2}{c} | *{2}{c} | *{2}{c}}
\toprule
& \multicolumn{2}{c|}{\textbf{RC (Cosmos)}} & \multicolumn{2}{c|}{\textbf{CI (HellaSwag)}} &
  \multicolumn{2}{c}{\textbf{MedMCQA}} \\
\cmidrule(lr){2-7}
\textbf{Method} & {Normal} & {Full} & {Normal} & {Full} & {Normal} & {Full} \\
\midrule
\multicolumn{7}{l}{\textit{Coverage Rate (CR, \%) Better if closer to $90\%$ --- Split CP}} \\
LAC        & 90.40 & 99.00 & 89.83 & \textbf{89.87} & \textbf{89.42} & 99.20 \\
APS        & 99.27 & 99.10 & 99.63 & 98.90 & 90.92 & 99.97 \\
PA         & \textbf{90.13} & \textbf{90.60} & \textbf{90.13} & 90.67 & 91.03 & \textbf{89.83} \\
\addlinespace[3pt]
\multicolumn{7}{l}{\textit{Coverage Rate (CR, \%) Better if closer to $90\%$ --- QCCP }}\\
LAC        & 91.90 & 93.70 & 91.93 & 91.50 & 92.31 & 95.00 \\
APS        & 98.60 & 97.70 & 99.00 & 98.70 & 96.69 & 96.80 \\
PA         & \textbf{90.63} & \textbf{90.57} & \textbf{90.63} & \textbf{90.80} & \textbf{91.88} & \textbf{90.13} \\
\midrule
\multicolumn{7}{l}{\textit{Set Size (SS) $\downarrow$ --- Split CP}}\\
LAC        & 1.55 & 5.56 & 1.67 & 1.75 & 2.45 & 5.67 \\
APS        & 5.65 & 5.59 & 5.71 & 5.59 & 2.68 & 5.87 \\
PA         & \textbf{1.12} & \textbf{1.27} & \textbf{1.35} & \textbf{1.65} & \textbf{2.27} & \textbf{2.55} \\
\addlinespace[3pt]
\multicolumn{7}{l}{\textit{Set Size (SS) $\downarrow$ --- QCCP }}\\
LAC        & 2.26 & 3.16 & 2.60 & 2.65 & 3.66 & 4.19 \\
APS        & 4.86 & 4.61 & 5.06 & 5.00 & 4.54 & 4.46 \\
PA         & \textbf{1.34} & \textbf{1.47} & \textbf{1.70} & \textbf{1.97} & \textbf{2.51} & \textbf{2.71} \\
\bottomrule
\end{tabular}
\end{table*}

\subsection{Different Embedding Choices}
\label{sec:pooling-sensitivity}

To evaluate whether the PA nonconformity score depends on a particular LLM representation, we conduct a controlled sensitivity analysis on the \texttt{Qwen3-8B} model using the \textbf{CosmosQA} benchmark. In the main paper, PA uses the last-layer hidden state at the final token position. Here, we consider three alternative representations that are commonly used in transformer analysis:

\begin{itemize}[leftmargin=*]
    \item \textbf{Mean pooling over tokens.}
    We average hidden states across all sequence positions (masking padding tokens). This yields a sequence-level representation that does not rely on a specific token location.

    \item \textbf{Intermediate-layer hidden states.}
    Instead of the last layer, we extract representations from a configurable layer index. In the experiment below, we uses the second-to-last layer). We keep the extraction location fixed (final token), so only the layer depth varies.

    \item \textbf{Attention-weighted pooling.}
    We use attention weights from the last layer to produce a weighted average over token embeddings. Specifically, we take the last token’s attention distribution, average over attention heads, and use this vector as weights over the hidden states of all tokens.
\end{itemize}

\paragraph{Results.}
Tables~\ref{tab:pooling-cr-qwen3} and~\ref{tab:pooling-ss-qwen3} report coverage rate (CR) and set size (SS) under split CP. Across all pooling and layer choices, PA-based nonconformity scores remain extremely stable. This demonstrates that PA does \emph{not} depend on a specific pooling strategy or layer depth, and that the semantic structure encoded throughout the transformer stack is sufficient for proxy training.

\begin{table*}[!ht]
\centering
\vspace{-2mm}
\caption{Coverage Rate (CR, \%) under different pooling and layer choices for \texttt{Qwen3-8B} on CosmosQA. CR is better if closer to 90\%. Results are reported for both the normal and fully reworded (Full) settings. Best results are in bold.}
\label{tab:pooling-cr-qwen3}
\small
\sisetup{round-mode=places,round-precision=2,table-number-alignment=center}
\resizebox{\textwidth}{!}{
\begin{tabular}{l l|
  *{2}{S} | *{2}{S} | *{2}{S} | *{2}{S}}
\toprule
& & \multicolumn{2}{c|}{\textbf{Last Hidden (Paper)}}
  & \multicolumn{2}{c|}{\textbf{Attention Weighted}}
  & \multicolumn{2}{c|}{\textbf{Intermediate Layer}}
  & \multicolumn{2}{c}{\textbf{Mean Pooling}} \\
\cmidrule(lr){3-10}
\textbf{CP Type} & \textbf{Method}
  & {Normal} & {{Full}}
  & {Normal} & {{Full}}
  & {Normal} & {{Full}}
  & {Normal} & {{Full}} \\
\midrule
\multirow{3}{*}{Split CP} & LAC
    & \textbf{90.53} & \textbf{90.47}
    & 90.60 & 98.87
    & \textbf{90.47} & 98.87
    & \textbf{90.47} & 98.87 \\
  & APS
    & 99.07 & 99.20
    & 99.40 & 99.50
    & 99.60 & 100.00
    & 99.60 & 100.00 \\
  & PA
    & 91.00 & 90.97
    & \textbf{90.00} & \textbf{90.10}
    & 89.27 & \textbf{89.43}
    & 88.83 & \textbf{90.30} \\
\midrule
\multirow{3}{*}{QCCP} & LAC
    & 92.23 & 93.67
    & 91.63 & 93.37
    & 91.73 & 93.13
    & 91.73 & 93.13 \\
  & APS
    & 98.63 & 97.50
    & 98.63 & 97.77
    & 98.40 & 97.73
    & 98.40 & 97.73 \\
  & PA
    & \textbf{91.37} & \textbf{90.97}
    & \textbf{90.43} & \textbf{90.10}
    & \textbf{90.17} & \textbf{90.37}
    & \textbf{89.33} & \textbf{90.77} \\
\bottomrule
\end{tabular}}
\end{table*}

\begin{table*}[!ht]
\centering
\vspace{-2mm}
\caption{Prediction set size (SS, $\downarrow$) under different pooling and layer choices for \texttt{Qwen3-8B} on CosmosQA. Lower SS indicates sharper uncertainty in Normal and fully reworded (Full) settings. Best results are in bold.}
\label{tab:pooling-ss-qwen3}
\small
\sisetup{round-mode=places,round-precision=2,table-number-alignment=center}
\resizebox{\textwidth}{!}{
\begin{tabular}{l l|
  *{2}{S} | *{2}{S} | *{2}{S} | *{2}{S}}
\toprule
& & \multicolumn{2}{c|}{\textbf{Last Hidden (Paper)}}
  & \multicolumn{2}{c|}{\textbf{Attention Weighted}}
  & \multicolumn{2}{c|}{\textbf{Intermediate Layer}}
  & \multicolumn{2}{c}{\textbf{Mean Pooling}} \\
\cmidrule(lr){3-10}
\textbf{CP Type} & \textbf{Method}
  & {Normal} & {{Full}}
  & {Normal} & {{Full}}
  & {Normal} & {{Full}}
  & {Normal} & {{Full}} \\
\midrule
\multirow{3}{*}{Split CP} & LAC
    & 1.56 & 1.80
    & 1.61 & 5.56
    & 1.57 & 5.56
    & \textbf{1.57} & 5.56 \\
  & APS
    & 5.57 & 5.61
    & 5.64 & 5.66
    & 5.77 & 6.00
    & 5.77 & 6.00 \\
  & PA
    & \textbf{1.13} & \textbf{1.17}
    & \textbf{1.15} & \textbf{1.12}
    & \textbf{1.16} & \textbf{1.12}
    & 2.58 & \textbf{2.52} \\
\midrule
\multirow{3}{*}{QCCP} & LAC
    & 2.47 & 3.11
    & 2.35 & 2.91
    & 2.39 & 2.87
    & \textbf{2.39} & 2.87 \\
  & APS
    & 4.79 & 4.73
    & 4.80 & 4.68
    & 4.61 & 4.59
    & 4.61 & 4.59 \\
  & PA
    & \textbf{1.34} & \textbf{1.37}
    & \textbf{1.36} & \textbf{1.34}
    & \textbf{1.40} & \textbf{1.35}
    & 2.73 & \textbf{2.69} \\
\bottomrule
\end{tabular}}
\end{table*}

\subsection{Different Proxy Training and Calibration Budgets}
\label{appendix:low-label}

A practical concern is whether PA requires large labeled datasets to fit the proxy classifier. To investigate this, we study how performance varies with two label budgets:
(i) the \emph{proxy-train fraction}, denoted by $\mathrm{tf}$, the fraction of training instances used to fit the proxy classifier; and
(ii) the \emph{calibration fraction}, denoted by $\mathrm{cf}$, the fraction of calibration instances used by CP/QCCP.
We vary $\mathrm{tf} \in \{0.01, 0.05, 0.10, 1.00\}$ and $\mathrm{cf} \in \{0.05, 1.00\}$ using \texttt{Qwen3-8B} in the normal setting, with three random seeds per configuration. Table~\ref{tab:low-label} reports CR/SS on CosmosQA and HellaSwag under PA-QCCP with the $S_{\mathrm{weighted}}$ score.

\begin{table*}[!ht]
\centering
\caption{Low-label sensitivity of PA on \texttt{Qwen3-8B} in the normal setting under PA-QCCP. \textbf{tf} is the proxy-training fraction; \textbf{cf} is the calibration fraction. Each cell is CR / SS, mean $\pm$ std over 3 random seeds. Bold rows mark the full-budget ($\mathrm{tf}=1$) reference.}
\label{tab:low-label}
\small
\setlength{\tabcolsep}{4pt}
\begin{tabular}{c|c|cc|cc}
\toprule
 &  & \multicolumn{2}{c|}{\textbf{RC (CosmosQA)}} & \multicolumn{2}{c}{\textbf{CI (HellaSwag)}} \\
\textbf{tf} &  & cf $=$ 0.05 & cf $=$ 1.00 & cf $=$ 0.05 & cf $=$ 1.00 \\
\midrule
0.01 & CR & $91.16 \pm 0.73$ & $91.17 \pm 0.46$ & $92.51 \pm 1.57$ & $90.31 \pm 0.10$ \\
 & SS & $1.859 \pm 0.147$ & $1.602 \pm 0.116$ & $2.309 \pm 0.030$ & $1.848 \pm 0.128$ \\
\addlinespace[2pt]
0.05 & CR & $91.00 \pm 2.37$ & $91.31 \pm 0.25$ & $92.41 \pm 0.94$ & $91.14 \pm 0.77$ \\
 & SS & $1.640 \pm 0.182$ & $1.408 \pm 0.050$ & $2.205 \pm 0.037$ & $1.802 \pm 0.097$ \\
\addlinespace[2pt]
0.10 & CR & $90.79 \pm 1.81$ & $91.01 \pm 0.16$ & $91.06 \pm 2.04$ & $90.18 \pm 0.95$ \\
 & SS & $1.641 \pm 0.349$ & $1.395 \pm 0.054$ & $2.126 \pm 0.131$ & $1.774 \pm 0.047$ \\
\addlinespace[2pt]
\textbf{1.00} & CR & $\mathbf{91.20 \pm 1.78}$ & $\mathbf{91.27 \pm 0.35}$ & $\mathbf{92.19 \pm 2.35}$ & $\mathbf{90.92 \pm 0.35}$ \\
 & SS & $\mathbf{1.470 \pm 0.129}$ & $\mathbf{1.338 \pm 0.044}$ & $\mathbf{1.992 \pm 0.082}$ & $\mathbf{1.674 \pm 0.077}$ \\
\bottomrule
\end{tabular}
\end{table*}

On the stronger \texttt{Qwen3-8B} backbone, the failure mode at very small $\mathrm{tf}$ is much milder:
\begin{itemize}[leftmargin=*]
    \item At $\mathrm{tf} = 0.01$, PA already reaches near-nominal coverage (CR $\approx 91\%$ on RC, $90$--$93\%$ on CI) and pays only a modest set-size penalty relative to the full-budget reference. Larger $\mathrm{tf}$ steadily tightens prediction sets: from $\mathrm{tf}=0.01$ to $\mathrm{tf}=1.00$ at $\mathrm{cf}=1.00$, set size shrinks by {$16.5\%$ on RC} ($1.602 \to 1.338$) and {$9.4\%$ on CI} ($1.848 \to 1.674$).
    \item Increasing $\mathrm{tf}$ to $0.05$--$0.10$ already recovers most of the efficiency gain, {with near-target coverage in the full-calibration columns.}
    \item {Reducing the calibration fraction from $1$ to $0.05$ retains broadly similar empirical coverage, but increases set size by 9.9--24.9\% across these cells.}
\end{itemize}
This confirms that PA does not require the full labeled training budget to remain effective: a modest $5$--$10\%$ proxy-training fraction is enough to nearly match the full-budget reference, {while reducing the calibration budget incurs a measurable set-size cost.} 

\subsection{Different Aggregation Scoring Rules}
\label{appendix:score-analysis}
Finally, we analyze the behavior of the Mean, Weighted, and Worst nonconformity scores across the five general QA benchmarks, computing statistics over the calibration and test splits. 

As shown in Table~\ref{tab:proxy-model-score-stats}, the Worst score consistently exhibits substantially higher mean and standard deviation than the Mean and Weighted variants. 
Since higher nonconformity scores correspond to lower predicted probabilities (i.e., lower confidence), this pattern highlights the brittleness of LLM predictions: models often struggle with specific “hard” paraphrases, assigning high nonconformity scores to the correct answer even when other paraphrases are handled correctly.
In contrast, the Mean and Weighted scores remain lower and more stable (with smaller standard deviation). This suggests that our aggregation strategy effectively smooths out the noise induced by brittle paraphrases, yielding uncertainty estimates that better reflect the model’s underlying semantic knowledge rather than its sensitivity to surface-level variations.

\begin{table*}[!ht]
\caption{Score statistics for Qwen3-8B displayed as mean $\pm$ std for the correct answer choice on each dataset. These statistics are calculated using scores from both the calibration and test sets. A lower score indicates higher confidence (higher predicted probability) from the proxy model.}
\small
\label{tab:proxy-model-score-stats}
\begin{center}
\begin{tabular}{cccc}
\toprule 
\multicolumn{1}{c}{\bf Dataset}  &\multicolumn{1}{c}{\bf Mean Score} &\multicolumn{1}{c}{\bf Weighted Score} &\multicolumn{1}{c}{\bf Worst Score} 
\\ \midrule 
MMLU & 0.385 $\pm$ 0.326 & 0.375 $\pm$ 0.332 & 0.534 $\pm$ 0.368 \\
CosmosQA & 0.190 $\pm$ 0.284 & 0.182 $\pm$ 0.287 & 0.286 $\pm$ 0.356 \\
HellaSwag & 0.258 $\pm$ 0.313 & 0.249 $\pm$ 0.317 & 0.371 $\pm$ 0.373 \\
HaluDial & 0.299 $\pm$ 0.310 & 0.288 $\pm$ 0.316 & 0.445 $\pm$ 0.363 \\
HaluSum & 0.068 $\pm$ 0.174 & 0.060 $\pm$ 0.182 & 0.151 $\pm$ 0.297 \\
\bottomrule \\
\end{tabular}
\end{center}
\end{table*}

\section{Scalability and Broader Applicability}
\label{append:applicability}

\subsection{Extension to Black-Box / API-Only Settings}
\label{appendix:api-only}

The white-box PA variant in the main paper relies on LLM hidden states to train a proxy classifier. To evaluate whether the paraphrase-aware idea extends to API-only deployments---where only token log-probabilities are available---{we instantiate an SPUQ-inspired per-option CP adaptation that uses neither hidden states nor a proxy model.}

Given a question $x$ with paraphrases $\tilde x_1, \dots, \tilde x_P$ and option $y$, suppose the API returns option probabilities $p_j(y)$ for each paraphrase $\tilde x_j$. We define
\begin{equation}
S^{\mathrm{API}}_{\mathrm{mean}}(x, y) \;=\; \frac{1}{P} \sum_{j=1}^{P} \big(1 - p_j(y)\big),
\qquad
S^{\mathrm{API}}_{\mathrm{worst}}(x, y) \;=\; \max_{j \in [P]} \big(1 - p_j(y)\big).
\end{equation}
These scores rely only on option logprobs or probabilities; they do not access hidden states and do not train a proxy. We then plug them into the same PA-CP / PA-QCCP pipeline as the main method.

 Table~\ref{tab:api-only} reports the API-only PA variant on \texttt{Qwen3-8B} in the semi-reworded QCCP setting (the most demanding setting in our paper). The goal of this experiment is to demonstrate \emph{applicability under black-box access}, not parity with the white-box pipeline.

\begin{table}[!ht]
\centering
\caption{Black-box / API-only PA variant on \texttt{Qwen3-8B} in the \textbf{semi-reworded QCCP} setting. Each cell is CR / SS. The API-only variant uses option logprobs only---no hidden states and no proxy classifier---and is provided as evidence that the paraphrase-aware idea is not restricted to white-box access.}
\label{tab:api-only}
\small
\begin{tabular}{l|ccccc}
\toprule
\textbf{Method} & \textbf{QA} & \textbf{RC} & \textbf{CI} & \textbf{DRS} & \textbf{DS} \\
\midrule
LAC & 93.17 / 3.72 & 90.47 / 2.50 & 91.07 / 2.63 & 97.87 / 5.02 & 94.23 / 4.30 \\
APS & 95.83 / 4.68 & 98.43 / 4.83 & 98.60 / 4.92 & 93.37 / 4.45 & 96.57 / 5.03 \\
PA (API-only) & 87.83 / \textbf{2.66} & 88.43 / \textbf{1.76} & 89.37 / \textbf{2.33} & 93.97 / \textbf{4.38} & 91.97 / \textbf{3.77} \\
\bottomrule
\end{tabular}
\end{table}

API-only PA reduces set size substantially relative to LAC and APS on all five datasets. {Coverage falls below 90\% on QA, RC, and CI (87.83\%, 88.43\%, and 89.37\%), while DRS and DS are conservative.} We therefore view the API-only variant as a meaningful but \emph{partial} extension: it shows that paraphrase-aware aggregation does not fundamentally require hidden-state access, while the absence of a proxy still leaves some robustness gap in the harder shifted cases.

\subsection{Extension to Larger Backbone LLMs}
\label{appendix:larger-backbones}

We evaluate whether the advantage of PA persists as the backbone LLM scales. Specifically, we consider \texttt{Qwen3-14B} and \texttt{Qwen3-32B}, and report results under standard split CP on \texttt{CosmosQA} and \texttt{HellaSwag} across the normal, semi-reworded, and fully reworded settings.

Table~\ref{tab:larger-backbones-qwen3} shows that the qualitative pattern remains consistent. Across both datasets and model sizes, PA yields substantially smaller prediction sets than LAC and APS, reducing set size from roughly $1.3$--$6.0$ to about $1.0$--$1.3$.
\begin{itemize}[leftmargin=*]
    \item On \texttt{CosmosQA}, PA maintains coverage close to the $90\%$ target in the normal and fully reworded settings while preserving this efficiency gain. In the semi-reworded setting, the LAC baseline shows the largest deviations (e.g., $86.13\%$ with the 32B model), confirming that this remains the most challenging transfer regime for that baseline.
    \item A similar pattern holds on \texttt{HellaSwag}. PA again achieves the smallest prediction sets, while maintaining near-nominal coverage across all three settings: in the semi-reworded setting it reaches $91.27\%$ coverage with the 14B model and $91.33\%$ with the 32B model.
\end{itemize}
Overall, these results show that the core benefit of PA---substantially tighter prediction sets at competitive coverage---persists at larger scales.

\begin{table*}[!ht]
\centering
\vspace{-1mm}
\caption{Performance of nonconformity scores with larger backbone LLMs under \textbf{standard split CP}. Results are reported on RC (\texttt{CosmosQA}) and CI (\texttt{HellaSwag}) using \texttt{Qwen3-14B} and \texttt{Qwen3-32B}. Each entry is \textbf{CR / SS} (coverage rate / set size), where coverage closer to $90\%$ is better and set size $\downarrow$ is better. Normal / Semi / Full denote \textbf{normal / semi-reworded / fully reworded}. Boldface highlights the smallest set size in each column.}
\vspace{-1mm}
\label{tab:larger-backbones-qwen3}
\resizebox{\columnwidth}{!}{%
\begin{tabular}{c|l|ccc|ccc}
\toprule
& \textbf{Method}
& \multicolumn{3}{c|}{\textbf{Qwen3-14B}}
& \multicolumn{3}{c}{\textbf{Qwen3-32B}} \\
\cmidrule(lr){3-5}\cmidrule(lr){6-8}
\textbf{Dataset} &
& {Normal} & {Semi} & {Full}
& {Normal} & {Semi} & {Full} \\
\midrule
\multirow{3}{*}{{RC}}
& LAC
& 89.87 / 1.71 & 88.40 / 1.78 & 90.90 / 1.99
& 88.90 / 1.49 & 86.13 / 1.57 & 90.20 / 1.86 \\
& APS
& 100.00 / 6.00 & 100.00 / 6.00 & 100.00 / 6.00
& 99.77 / 5.74 & 99.77 / 5.76 & 97.93 / 3.49 \\
& PA
& 89.60 / \textbf{1.01} & 88.90 / \textbf{1.01} & 88.80 / \textbf{1.07}
& 90.23 / \textbf{1.00} & 89.60 / \textbf{1.00} & 90.40 / \textbf{1.01} \\
\midrule
\multirow{3}{*}{{CI}}
& LAC
& 89.40 / \textbf{1.28} & 87.97 / 1.32 & 89.93 / 1.46
& 91.30 / 1.30 & 90.63 / 1.34 & 91.30 / 1.37 \\
& APS
& 99.33 / 5.56 & 99.40 / 5.57 & 99.83 / 5.72
& 99.87 / 5.69 & 99.90 / 5.72 & 99.87 / 5.69 \\
& PA
& 91.77 / 1.29 & 91.27 / \textbf{1.26} & 89.87 / \textbf{1.19}
& 91.83 / \textbf{1.10} & 91.33 / \textbf{1.09} & 90.83 / \textbf{1.11} \\
\bottomrule
\end{tabular}}
\end{table*}

\subsection{Extension to Factuality Verification Tasks}
\label{appendix:extension-to-other-task}

While our primary framework addresses tasks with finite label spaces (i.e., $|\mathcal{Y}| < \infty$), it naturally extends to critical sub-problems within open-ended generation, specifically \textit{factuality verification}. Although the space of generated text is theoretically infinite, modern evaluation protocols typically decompose long-form generations into discrete atomic claims, each evaluated independently against a knowledge source \citep{mohri2024language, cherian2024large}. 

This decomposition effectively reduces the verification problem to a binary classification task. Formally, let an open-ended generation be parsed into a set of atomic claims $\{c_1, c_2, \dots, c_k\}$. We treat each claim $c_i$ as an input $x \in \mathcal{X}$, where the label space is restricted to $\mathcal{Y} = \{\text{True}, \text{False}\}$.

Our paraphrase-robust framework is particularly well-suited for this setting due to the sensitivity of LLM-based evaluators to surface-level phrasing. To apply our method:
\begin{enumerate}[leftmargin=*]
    \item \textbf{Paraphrase Generation:} For a given claim $x$, we generate a set of semantic paraphrases $B(x)$.
    \item \textbf{Proxy Scoring:} We utilize our lightweight proxy model $P_\theta(y \mid E(x))$ to predict the truthfulness of the claim based on the embeddings of these paraphrases.
    \item \textbf{Robust Uncertainty:} We calculate PA scores across $B(x)$.
\end{enumerate}

By integrating these scores into the conformal prediction framework, we ensure that the resulting uncertainty estimates, as well as the subsequent decision to flag a claim as hallucinated, are robust to lexical diversity. This guarantees that a claim is not penalized merely due to the specific syntax chosen by the LLM, but rather evaluated based on its semantic validity.

\paragraph{Preliminary experiment on claim-verification.}
To provide concrete evidence beyond MCQA, we conducted a preliminary claim-verification experiment on FEVER \citep{thorne2018fever}, a standard benchmark for factuality verification. We evaluated the same three settings used in the main paper---\emph{clean}, \emph{semi-reworded}, and \emph{fully-reworded}---and report coverage and average set size at $\alpha=0.1$. As in the main text, coverage closer to $0.9$ is better, while smaller set size is preferred.

Table~\ref{tab:fever_cp} and Table~\ref{tab:fever_qccp} show that the proposed PA score transfers meaningfully to this open-ended verification setting under \texttt{Qwen3-8B}. On FEVER, the backbone is sharp enough that all non-APS methods return essentially singleton prediction sets, so the main distinction is whether coverage stays closest to the nominal target under shift. PA remains the most stable method across settings, especially in the fully reworded case, where it reaches $90.5\%$ under Split CP and $90.3\%$ under QCCP, while LAC/TRON/SPUQ dip slightly below $90\%$. APS remains overly conservative, with prediction sets close to the full label set. These results suggest that the benefit of paraphrase-aware scoring is not specific to MCQA, but can also improve robustness in claim-level factuality verification, which serves as a practically relevant building block for open-ended generation evaluation.

\begin{table}[!ht]
\centering
\caption{Preliminary extension to open-ended generation on \textsc{FEVER} under \textit{Split CP} at $\alpha=0.1$, with \textbf{Qwen3-8B} as the backbone. Coverage Rate (CR, \%) closer to $90\%$ is better; Set Size (SS) $\downarrow$ is better. Results are reported under the \textbf{clean}, \textbf{semi-reworded (Semi.)}, and \textbf{Fully-reworded (Full)} settings. {TRON-adapt.\ and SPUQ-adapt.\ denote the per-option probability-score controls here;} {the probability-score adaptation equals LAC for any label count, so these two rows match by construction}.}
\label{tab:fever_cp}
\setlength{\tabcolsep}{4pt}
\begin{tabular}{c|cc|cc|cc}
\toprule
\textbf{Method} 
& \multicolumn{2}{c|}{\textbf{Clean}} 
& \multicolumn{2}{c|}{\textbf{Semi.}} 
& \multicolumn{2}{c}{\textbf{Full}} \\
\cmidrule(lr){2-3}\cmidrule(lr){4-5}\cmidrule(lr){6-7}
& \textbf{CR} & \textbf{SS} 
& \textbf{CR} & \textbf{SS} 
& \textbf{CR} & \textbf{SS} \\
\midrule
LAC & 93.7 & 1.000 & 90.5 & 1.000 & 89.7 & 1.000 \\
APS & 100.0 & 1.910 & 100.0 & 1.965 & 100.0 & 1.993 \\
{TRON-adapt.} & 93.7 & 1.000 & 90.5 & 1.000 & 89.7 & 1.000 \\
{SPUQ-adapt.} & 93.7 & 1.000 & 90.7 & 1.000 & 89.4 & 1.000 \\
PA  & 94.0 & 1.000 & 91.2 & 1.000 & 90.5 & 1.000 \\
\bottomrule
\end{tabular}%
\end{table}

\begin{table}[!ht]
\centering
\caption{Preliminary extension to open-ended generation on \textsc{FEVER} under \textit{QCCP} at $\alpha=0.1$, with \textbf{Qwen3-8B} as the backbone. Coverage Rate (CR, \%) closer to $90\%$ is better; Set Size (SS) $\downarrow$ is better. Results are reported under the \textbf{clean}, \textbf{semi-reworded (Semi.)}, and \textbf{Fully-reworded (Full)} settings. {TRON-adapt.\ and SPUQ-adapt.\ denote the per-option probability-score controls here;} {the probability-score adaptation equals LAC for any label count, so these two rows match by construction}.}
\label{tab:fever_qccp}
\setlength{\tabcolsep}{4pt}
\begin{tabular}{c|cc|cc|cc}
\toprule
\textbf{Method} 
& \multicolumn{2}{c|}{\textbf{Clean}} 
& \multicolumn{2}{c|}{\textbf{Semi.}} 
& \multicolumn{2}{c}{\textbf{Full}} \\
\cmidrule(lr){2-3}\cmidrule(lr){4-5}\cmidrule(lr){6-7}
& \textbf{CR} & \textbf{SS} 
& \textbf{CR} & \textbf{SS} 
& \textbf{CR} & \textbf{SS} \\
\midrule
LAC & 93.7 & 1.000 & 90.5 & 1.000 & 89.9 & 1.002 \\
APS & 100.0 & 1.947 & 99.8 & 1.982 & 100.0 & 1.987 \\
{TRON-adapt.} & 93.7 & 1.000 & 90.5 & 1.000 & 89.9 & 1.002 \\
{SPUQ-adapt.} & 93.5 & 1.000 & 90.5 & 1.000 & 89.7 & 1.005 \\
PA  & 94.5 & 1.000 & 90.7 & 1.000 & 90.3 & 1.000 \\
\bottomrule
\end{tabular}%
\end{table}

Overall, we view this experiment as a preliminary but encouraging step toward broader open-ended applications. A full treatment of long-form generation, including claim decomposition and aggregation back to response-level guarantees, is an important direction for future work.

\subsection{Extension to Image Modality}
\label{appendix:extension-to-image-modality}

Although our experiments focus on text-based tasks, the proposed framework is not limited to textual inputs and can, in principle, be extended to vision–language settings such as visual question answering (VQA). The key requirement of our method is access to a stable hidden representation for each input variant, which is naturally provided by modern vision–language models (VLMs).

Concretely, given an image–question pair, we treat the visual input as part of the conditioning context and apply the same paraphrase-aware uncertainty pipeline at the representation level. In lieu of textual paraphrases, visual perturbations can be used to induce semantic-preserving variants of the input image. For example, this can be realized by adding Gaussian noise to the image, producing multiple visually indistinguishable variants that preserve the underlying semantic content while altering surface-level pixel statistics.

Each perturbed image is processed by a frozen VLM encoder, and we extract the final-layer hidden representation corresponding to the answer token, mirroring the text-only pipeline. Paraphrase-aware (variant-aware) nonconformity scores are then computed across these image variants and integrated into the same conformal prediction framework without modification.

\section{Limitations and Broader Impacts}
\label{append:limitations}

\paragraph{Limitations.}
{The strongest variant requires hidden states and labeled proxy-training data. Online paraphrase generation is a material cost: the B200 six-view measurements are 22.7--83.4 times slower than raw single-input inference (Appendix~\ref{appendix:runtime-analysis}). The blackbox extension shows applicability, but still not in parity with the white-box model (Appendix~\ref{appendix:api-only}). Arbitrary test-only rewriting is still outside the coverage guarantee. }Our framework relies on the quality of the paraphrase generator: if generated paraphrases exhibit semantic drift or limited diversity, the effectiveness of the paraphrase-aware score may degrade. Although we observe stable behavior across several paraphrase generators, domain-specific settings may require stronger or more specialized paraphrasing models. More broadly, our work focuses on semantic-preserving perturbations induced by paraphrasing and does not explicitly address other forms of distribution shift, such as topic drift, temporal changes in factual knowledge, or adversarial perturbations. In addition, our experiments are primarily conducted on English multiple-choice QA benchmarks. Extending paraphrase-aware conformal prediction to multilingual settings and free-form generation tasks remains an important direction for future work.

\paragraph{Broader impacts.}
Our method improves the stability and reliability of LLM uncertainty estimates under semantic rewording, which may be particularly helpful for users whose phrasing differs from typical training distributions, such as non-native speakers or users of regional dialects. At the same time, prediction sets produced by conformal methods should not be interpreted as guarantees of correctness in high-stakes applications, but rather as uncertainty-aware decision support.

\end{document}